\documentclass[11pt,oneside,a4paper]{article}
\usepackage[top=1 in, bottom=1 in, left=0.85 in, right=0.85 in]{geometry}
\usepackage{hyperref}
\usepackage{float}
\usepackage{amsmath,amssymb,graphicx,url}
\usepackage{ntheorem}
\usepackage{thmtools,thm-restate,wrapfig,enumitem,mathabx}
\usepackage{algorithm,algorithmicx,algpseudocode}
\newtheorem{assumption}{Assumption}
\newtheorem{theorem}{Theorem}
\newtheorem{remark}{Remark}
\newtheorem{lemma}{Lemma}

\newtheorem*{proof}{Proof}
\newtheorem{definition}{Definition}
\usepackage{color}

\usepackage[utf8]{inputenc}
\usepackage[T1]{fontenc}
\usepackage{booktabs}
\usepackage{amsfonts}
\usepackage{nicefrac}
\usepackage{microtype}
\usepackage{xcolor}
\usepackage{subfigure}
\usepackage{multirow}
\usepackage{bbm}
\usepackage{mathtools}

\newcommand{\blue}[1]{\textcolor{blue}{#1}}

\newcommand{\expt}{\mathbb{E}}
\newcommand{\prob}{\mathbb{P}}

\title{Exploration, Exploitation, and Engagement\\in Multi-Armed Bandits with Abandonment}

\author{%
  Zixian Yang\\
  University of Michigan\\
  Ann Arbor, MI, USA\\
  \texttt{zixian@umich.edu}\\
  \and
  Xin Liu\\
  ShanghaiTech University\\
  Shanghai, China\\
  \texttt{liuxin7@shanghaitech.edu.cn} \\
  \and
  Lei Ying\\
  University of Michigan\\
  Ann Arbor, MI, USA\\
  \texttt{leiying@umich.edu}\\
}
\date{}

\begin{document}

\maketitle

\begin{abstract}
Multi-armed bandit (MAB) is a classic model for understanding the exploration-exploitation trade-off. The traditional MAB model for recommendation systems assumes the user stays in the system for the entire learning horizon. In new online education platforms such as ALEKS or new video recommendation systems such as TikTok and YouTube Shorts, the amount of time a user spends on the app depends on how engaging the recommended contents are. Users may temporarily leave the system if the recommended items cannot engage the users. To understand the exploration, exploitation, and engagement in these systems, we propose a new model, called MAB-A where ``A'' stands for abandonment and the abandonment probability depends on  the current recommended item and the user's past experience (called state). We propose two algorithms, ULCB and KL-ULCB, both of which do more exploration (being optimistic) when the user likes the previous recommended item and less exploration (being pessimistic) when the user does not like the previous item. We prove that both ULCB and KL-ULCB achieve logarithmic regret, $O(\log K)$, where $K$ is the number of visits (or episodes). Furthermore, the regret bound under KL-ULCB is asymptotically sharp. We also extend the proposed algorithms to the general-state setting. Simulation results confirm our theoretical analysis and show that the proposed algorithms have significantly lower regrets than the traditional UCB and KL-UCB, and Q-learning-based algorithms.
\end{abstract}

\section{Introduction}
\label{sec:intro}

Recommendation algorithms have become increasingly important in many online platforms such as online education, TikTok, YouTube Shorts, advertising platforms, etc. The system interacts with the users to learn their preferences and recommends personalized contents (learning subjects, videos, songs, products etc.) to each user. Multi-armed bandit (MAB)~\cite{lattimore2020bandit} is a classic problem which can model these recommendation systems. Each arm in MAB corresponds to a specific type of item in the recommendation system. The recommendation of an item of the $i$th type is regarded as a pull of arm $a_i$.  Taking recommending short videos as an example, each arm $a_i$ represents a class of similar videos (e.g. videos from the same dancer). For simplicity, we assume the reward is $1$ if the user likes the recommended item and is $0$ otherwise. In a traditional MAB problem, the learner can continue to play the arms with the goal of maximizing the average reward, which either assumes a single user stays in the system for a long period of time or assumes the learner is recommending a single item to each user with a large number of users. While this traditional MAB formulation models recommendation systems such as online advertising well, there are new recommendation systems that are significantly different from these traditional models. In these new recommendation systems, such as TikTok or ALEKS, the learner continuously recommends videos/contents to a user, and the user, other than like or dislike the item, may abandon the system if the recommended items cannot engage the user, and come back later. For example, a user watches TikTok or YouTube Shorts for some period of time, where the duration depends on  how interesting/engaging the videos, then leaves the systems, and comes back later, as shown in Figure~\ref{fig:application}.

\begin{figure}[htbp]
    \centering
    \subfigure[A new recommendation system.]
    {
        \includegraphics[width=0.45\textwidth]{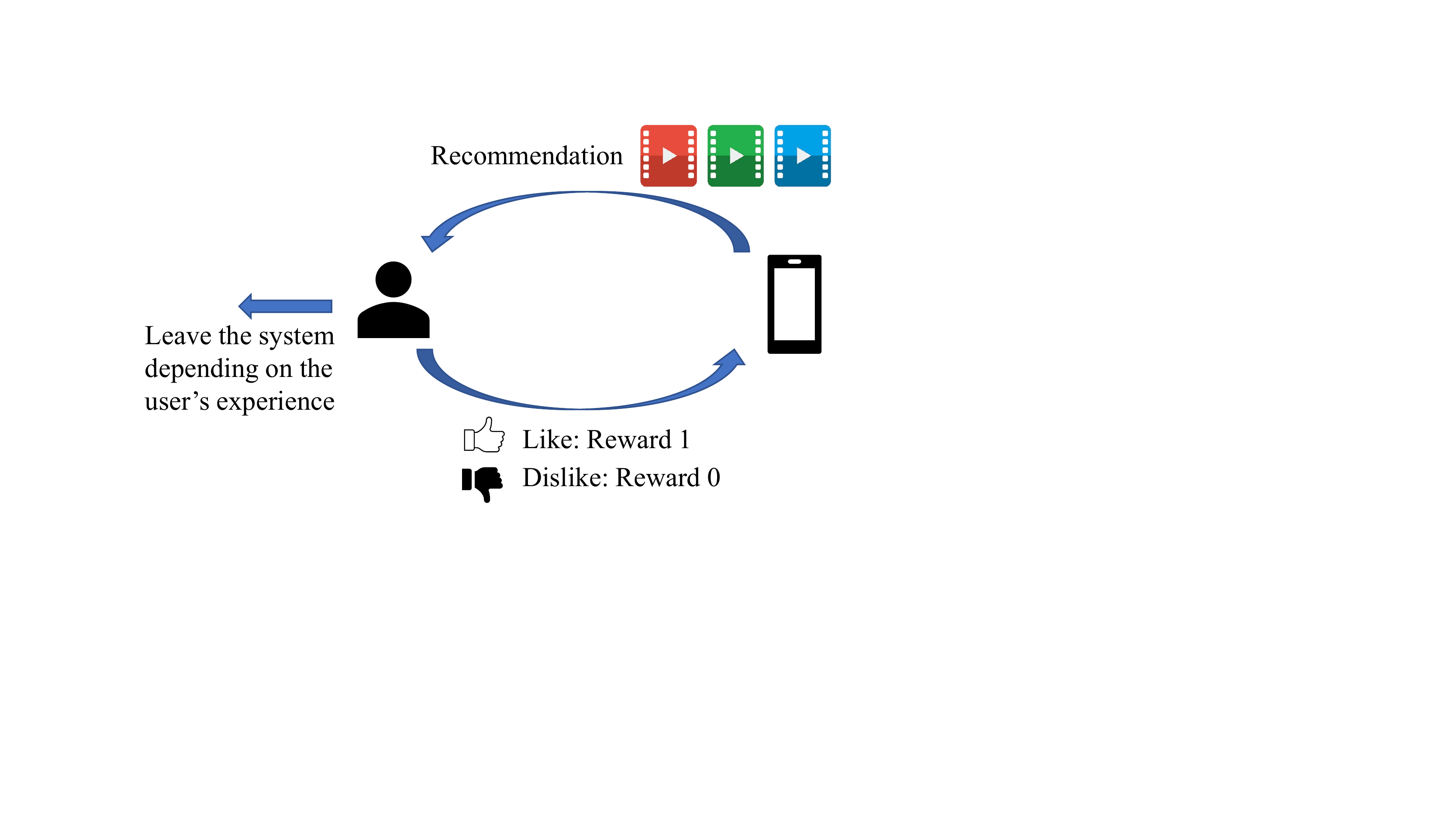}
        \label{fig:application}
    }
    \hfill
    \subfigure[Total regret over $K$ episodes.]
    {
        \includegraphics[width=0.45\textwidth]{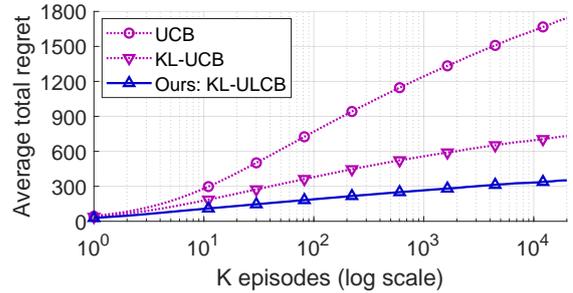}
        \label{fig:results-intro}
    }
    \caption{A new recommendation system and comparison among algorithms.}
\end{figure}

This makes the problem different from traditional MAB because the objective now is to maximize the total reward per episode (visit) instead of the average reward per pull. Therefore, in addition to finding the most rewarding arm, the learner also needs to continue to engage the user to maximize the number of plays of each episode. Because of the abandonment, the exploration needs to be carefully designed so that the learner should explore (recommend new types of items) when the user is less likely to abandon the system. In other words, we need to consider an exploration-exploitation-engagement tradeoff in this problem.  As we can see from Figure~\ref{fig:results-intro}, a well designed algorithm can significantly outperform the traditional MAB algorithms such as upper confidence bound (UCB)~\cite{auer2002finite} and Kullback-Leibler UCB (KL-UCB)~\cite{garivier2011kl}. 

We study this new MAB problem with abandonment, denoted by MAB-A. Consider a recommendation system where the system recommends one item at a time to the user. For example, for mobile phone users, since the screen is small, the system such as a mobile app can only recommend one item at a time instead of recommending multiple items simultaneously. The user may or may not like the item, and they may abandon the system with a certain probability (called abandonment probability in this paper) based on current and previous experience. The objective is to maximize the total reward per episode, where an episode ends when the user abandons (leaves) the system temporarily. 

For traditional MAB problems, classic index policy algorithms such as UCB and KL-UCB work well and KL-UCB  achieves the instance-dependent lower bound for traditional MAB with Bernoulli rewards~\cite{garivier2011kl,lai1985asymptotically}. However, these traditional algorithms are not suitable for the new MAB-A problem, since they do not consider the abandonment and do not utilize the state information (the user's experience). Hence, they may not be optimal. We propose to use a state-dependent exploration-exploitation mechanism, which does more exploration (being optimistic) when the user is less likely to abandon the system and less exploration (being pessimistic) when the user is more likely to abandon the system. Our algorithms are based on both an upper confidence bound and a lower confidence bound. Our main contributions are as follows:
\begin{itemize}[leftmargin=*]
    \item {\bf Baseline:} First, we characterize the baseline by showing that a genie-aided optimal policy for MAB-A problem is always pulling the optimal arm ({\bf Lemma \ref{lemma:1}}).
    \item {\bf Sharp bounds:} We propose two algorithms based on upper and lower confidence bounds, named as Upper and Lower Confidence Bounds (ULCB) and Kullback-Leibler Upper and Lower Confidence Bounds (KL-ULCB) algorithms. We prove that both algorithms achieve $O(\log K)$ regret bound ({\bf Theorem \ref{theorem:1} and Theorem \ref{theorem:2}}). We further establish a lower bound for MAB-A problem and show that KL-ULCB attains the bound ({\bf Theorem \ref{theorem:3}}), so the regret under KL-ULCB is asymptotically sharp.
    \item {\bf Extension to a general-state model:} We extend the proposed algorithms to MAB-A problems with a general continuous state space. In particular, we propose four algorithms, DISC-ULCB, DISC-KL-ULCB, CONT-ULCB, and CONT-KL-ULCB. We establish $O(\log K)$ upper bounds for DISC-ULCB and DISC-KL-ULCB ({\bf Theorem \ref{theorem:cont-state}}) and show that the bound for DISC-KL-ULCB is nearly sharp for large $n$, where $n$ is the number of discretized bins in the algorithm.
    \item {\bf Numerical evaluation:} Simulation results in Section~\ref{sec:simu} and Appendix~\ref{app:add-simulations} confirm our theoretical results and show that our algorithms have significantly lower regrets than the traditional UCB and KL-UCB algorithms, and have order-wise lower regrets than generic reinforcement learning (RL) algorithms like Q-learning.
    
    \item {\bf Technical novelty:} In MAB-A, the episode length follows different distributions under different policies, so the regret analysis based on step-by-step coupling, like in the traditional MAB analysis, does not work. We overcome this difficulty by exploiting the performance difference lemma~\cite{Kakade02approximatelyoptimal,yang2021q} to couple the rewards by the sum of gap functions along {\em the sample path that follows our algorithm}. On the other hand, MAB-A can be regarded as a special class of Markov decision process (MDP) problems with a terminal state, and is mostly related to the stochastic shortest path (SSP) problem \cite{bertsekas1991analysis}. MAB-A, however, is a stochastic {\em longest} path problem so the existing algorithms and analysis for SSP do not apply. We utilize the special properties of bandits and abandonment to establish a sharp $O(\log K)$ regret bound, while most regret bounds on SSP are $O(\sqrt{K}).$ 
\end{itemize}

\subsection{Related work} We are not aware of any work in the literature with the same setting as the MAB-A problem, but there are a few related works. The work in~\cite{schmit2018learning} studies a setting with abandonment, where the mean reward is an increasing function of the action.
The abandonment occurs when the action is larger than the user's threshold, and thus the algorithms should consider the trade-off between getting high rewards and avoiding losing users. In contrast, in our MAB-A setting, the reward is unknown and a higher reward makes the user less likely to abandon the system. The concept of abandonment also appears in the sequential choice bandit problem in~\cite{cao2019dynamic} and the departing bandit problem in~\cite{ben2022modeling}. However, the abandonment probabilities in their models do not depend on the past experience of the user.
Another work in~\cite{wu2018adaptive} studies the exploration-exploitation tradeoff in an opportunistic bandit setting, where the regret of pulling a suboptimal arm varies under different environmental conditions. Our proposed algorithms and proof ideas are partly inspired by the exploration-exploitation intuition in their work~\cite{wu2018adaptive}. However, the key difference between the opportunistic bandit setting and our MAB-A setting is that there is no abandonment in the opportunistic bandit setting. Also, the state in the MAB-A setting depends on previous rewards while the load (state) in the opportunistic bandit setting does not. The above differences lead to different algorithms and theoretical results.

Note that the MAB-A problem can be modeled as a special case of stochastic shortest path (SSP) problems~\cite{bertsekas1991analysis} with non-positive costs. RL algorithms like Q-learning~\cite{watkins1989learning} and Q-learning with UCB~\cite{yang2021q} might be used for the MAB-A problem but these general algorithms do not utilize the special structures in MAB-A and therefore are too complex and not regret optimal, which is verified in the simulation results in Section~\ref{sec:simu}. Other algorithms~\cite{cohen2021minimax,chen2021implicit,vial2021regret, tarbouriech2021stochastic} are designed for SSP problems with non-negative costs, which are fundamentally different from MAB-A since MAB-A tries to maximize the episode length but SSP problems with non-negative costs may not. Hence, these algorithms cannot be applied to the MAB-A problem.
Besides, only $O(\sqrt{K})$ instead of $O(\log K)$ regret bounds are proved in these papers.

\section{Model and preliminaries}
\label{sec:model}

The MAB-A problem is defined as follows. Let $M$ ($M\ge 2$) be the number of arms and denote the set of arms by $\{a_1, a_2, \cdots, a_M\}$. Assume that the rewards of pulling the arms are i.i.d. Bernoulli random variables with unknown mean $\mu(a_i)$, $i\in\{1,2,...,M\}$. Consider $K$ episodes in total, where each episode represents a single visit of an user and an episode ends when the user abandons the system temporarily. The process of the $k^{\mathrm{th}}$ episode goes as follows. At step $h=1$, an initial state $S_{k,1}\in\{0,1\}$ is sampled from an arbitrary distribution. At step $h=2,3,...$, the state is defined by $S_{k,h}\coloneqq  R_{k,h-1}$, where $R_{k,h-1}$ is the reward obtained at the previous step $h-1$. Then an arm $A_{k,h}\in\{a_1,...,a_M\}$ is pulled and a Bernoulli random reward $R_{k,h}\in\{0,1\}$ is obtained with mean $\mu(A_{k,h})$. Given $(S_{k,h}, R_{k,h})$, abandonment occurs with probability $q(S_{k,h}, R_{k,h})$. If the abandonment occurs, the terminal state $g$ is reached, i.e., $S_{k,h+1} = g$, which terminates the current episode $k$. Otherwise, the process goes to the next step.

Therefore, the process of one episode is an MDP
with state space $\mathcal{S} = \{0,1,g\}$, action space $\mathcal{A} = \{a_1,...,a_M\}$, and Bernoulli random rewards. The state can be interpreted as the experience of the user. At the first step ($h=1$) in each episode, the initial state $S_{k,1}$ can be interpreted as the user's first impression. 
Note that given $A_{k,h}$, the reward $R_{k,h}$ is independent of $S_{k,h}$. We write $R_{k,h}(A_{k,h})$ when necessary in order to explicitly show the dependency between $R_{k,h}$ and $A_{k,h}$.
We will consider a general-state model in Section~\ref{sec:extension}, where the state depends on the rewards received in all previous steps of the current episode. We remark that we first consider the current model, for which we can establish sharp bounds. However, the intuition and exploration strategy obtained from the current model will be applied to the general-state model and nearly sharp bounds can be established based on discretization. 
We make the following assumption on the problem. 
\begin{assumption}\label{assum:1}
Assume $q(i,j)\ge q(i',j')$ if $i+j< i'+j',$  $q(0,0)>0$ and $\mu(a_M) \le \mu(a_{M-1}) \le ... \le \mu(a_2) \le \mu(a_1) < 1$.
\end{assumption}
The assumption on $q(\cdot,\cdot)$ implies the abandonment probability becomes larger when the user's experience becomes worse. The assumptions $q(0,0)>0$ and $\mu(a_i) < 1, \forall i$ ensures that all policies are proper. That is, all policies lead to the terminal state $g$ with probability one, regardless of the initial state~\cite{bertsekas1991analysis}. Without loss of generality, we let $\mu(a_M) \le \mu(a_{M-1}) \le ... \le \mu(a_2) \le \mu(a_1)$.

To understand the exploration-exploitation-engagement trade-off of MAB-A defined above, we next define the baseline, i.e. the reward under a genie-aided (model-based) optimal policy, which knows the model perfectly. The result is summarized in Lemma~\ref{lemma:1},  which states that the optimal policy is to always pull arm $a_1.$ The proof can be found in Appendix~\ref{app:lemma1}.
\begin{lemma}\label{lemma:1}
    Let Assumption~\ref{assum:1} hold. The genie-aided optimal policy $\pi^*$ is always pulling arm $a_1$.
\end{lemma}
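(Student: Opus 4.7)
The plan is to solve the MDP of Section~\ref{sec:model} via value iteration together with a joint monotonicity argument. Let $V^*(s)$ be the optimal value function with $V^*(g)=0$. Because the Bernoulli reward after pulling $a$ has mean $\mu(a)$ and is independent of the current state, the optimal action-value function rearranges as an \emph{affine} function of $\mu(a)$:
\begin{align*}
Q^*(s,a) = (1-q(s,0))V^*(0) + \mu(a)\bigl[\,1 + (1-q(s,1))V^*(1) - (1-q(s,0))V^*(0)\,\bigr].
\end{align*}
Hence $a_1$, the arm with the largest mean, is optimal at each state $s\in\{0,1\}$ as soon as the bracketed slope is nonnegative. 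Assumption~\ref{assum:1} immediately gives $q(s,0)\ge q(s,1)$ (from $s+0<s+1$), so the whole proof reduces to verifying the monotonicity $0\le V^*(0)\le V^*(1)$.

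To establish this monotonicity I would run a joint induction on the truncated value iterates $V_0\equiv 0$,
\begin{align*}
V_{k+1}(s) = \max_a \Bigl\{\mu(a)\bigl[1+(1-q(s,1))V_k(1)\bigr] + (1-\mu(a))(1-q(s,0))V_k(0)\Bigr\},
\end{align*}
carrying the two statements $0\le V_k(0)\le V_k(1)$ and ``the maximizer is $a_1$'' together. Given the inductive monotonicity at step $k$, the slope $1+(1-q(s,1))V_k(1)-(1-q(s,0))V_k(0)$ is at least $1$ because $(1-q(s,1))V_k(1)\ge (1-q(s,0))V_k(0)$ (combining the inductive hypothesis with $1-q(s,1)\ge 1-q(s,0)\ge 0$), and this forces $a_1$ to be the maximizer. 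Plugging $a_1$ back and subtracting yields
\begin{align*}
V_{k+1}(1)-V_{k+1}(0) = \mu(a_1)\bigl[q(0,1)-q(1,1)\bigr]V_k(1) + (1-\mu(a_1))\bigl[q(0,0)-q(1,0)\bigr]V_k(0)\ge 0,
\end{align*}
where both bracketed differences are nonnegative by Assumption~\ref{assum:1} (applied to the sums $0+1<1+1$ and $0<1$, respectively), while $V_{k+1}\ge 0$ is immediate. This closes the induction.

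Finally I would let $k\to\infty$. Because Assumption~\ref{assum:1} forces every stationary policy to be proper, standard stochastic shortest path theory~\cite{bertsekas1991analysis} yields $V_k\uparrow V^*$, so both the monotonicity and the $a_1$-optimality of the Bellman operator pass to the limit, proving $\pi^*(s)=a_1$ on $\{0,1\}$. The main obstacle I anticipate is precisely this $k\to\infty$ passage, since MAB-A has non-positive costs (a ``longest path'' problem), which is outside the textbook SSP regime. I would handle it by noting that $q(0,0)>0$ together with $\mu(a_1)<1$ forces the episode length under any stationary policy to be stochastically dominated by a geometric random variable: each visit to state $0$ produces a reward-$0$ outcome with probability at least $1-\mu(a_1)$, which then triggers abandonment with probability $q(0,0)$; so $V^*$ is uniformly bounded in $s$ and $k$, and the limit, together with the structural conclusion, is well defined.
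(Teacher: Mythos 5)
Your proposal is correct, and its central computation is the same as the paper's: write $Q^*(s,a)$ as an affine function of $\mu(a)$ and observe that the slope $1+(1-q(s,1))V^*(1)-(1-q(s,0))V^*(0)$ is nonnegative once $0\le V^*(0)\le V^*(1)$ and $q(s,0)\ge q(s,1)$ are in hand. The difference is how the monotonicity $V^*(0)\le V^*(1)$ is obtained. The paper works directly at the fixed point: using only $V^*(s)\ge 0$ (immediate from nonnegative rewards) and $q(0,j)\ge q(1,j)$, it shows $Q^*(1,a)-Q^*(0,a)=(1-\mu(a))(q(0,0)-q(1,0))V^*(0)+\mu(a)(q(0,1)-q(1,1))V^*(1)\ge 0$ for \emph{every} $a$, and then $V^*(1)-V^*(0)=\max_a Q^*(1,a)-\max_a Q^*(0,a)\ge Q^*(1,a')-Q^*(0,a')\ge 0$ with $a'$ the maximizer at state $0$ --- no induction and no limit passage are needed beyond the existence of a stationary optimal policy and the Bellman equation, which both proofs take from the properness of all policies in the SSP framework. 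Your value-iteration route is valid, and your handling of the $k\to\infty$ step (uniform boundedness via geometric domination of the episode length, plus convergence of value iteration for proper SSPs) is sound, but it is extra machinery: the obstacle you flag as the ``main obstacle'' simply does not arise in the paper's argument, since once the Bellman equation is known to hold at $V^*$ the monotonicity can be read off directly without tracking the iterates. One cosmetic note: your second application of Assumption~\ref{assum:1} should cite the sums $0+0<1+0$ rather than ``$0<1$,'' though the intended inequality $q(0,0)\ge q(1,0)$ is of course the right one.
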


Let $\pi: \mathcal{S} \times \Phi \rightarrow \mathcal{A}$ denote a deterministic policy such that $A_{k,h}=\pi(S_{k,h}, \phi_{k,h})$, where $\phi_{k,h}\in \Phi$ is the historical samples till step $h$ of episode $k$ (not including the current step), i.e.,
\begin{align}
    \phi_{k,h} = (S_{1,1},A_{1,1},R_{1,1},..., S_{k,1},A_{k,1},R_{k,1},...,S_{k,h-1},A_{k,h-1},R_{k,h-1}).
\end{align}
Let $\Pi\coloneqq \{\pi: \mathcal{S} \times \Phi \rightarrow \mathcal{A}\}$ denote the set of all such policies. Let $I_k(\pi,s,\varphi)$ denote the number of steps taken to reach the terminal state $g$ given the current state $s$ and the historical samples $\varphi\in\Phi$ under the policy $\pi\in\Pi$ in episode $k$.
Mathematically, let $D$ be a random set such that $D(\pi,s,\varphi)\coloneqq \{i:S_{k,h+i}=g, S_{k,h}=s, \phi_{k,h}=\varphi, A_{k,h+j}=\pi(S_{k,h+j},\phi_{k,h+j}), \forall j=0,1,...,i-1\}$. Then
\begin{equation}\label{equ:def-I}
    I_k(\pi,s,\varphi)\coloneqq \begin{cases}
    \min D(\pi,s,\varphi), & \mbox{if } D(\pi,s,\varphi)\neq \emptyset;\\
    \infty, & \mbox{if }D(\pi,s,\varphi)=\emptyset.
    \end{cases}
\end{equation}
Similarly, let $I_k(\pi^*, s)$ denote the number of steps taken to reach the terminal state $g$ given the current state $s$ under $\pi^*$ in episode $k$, i.e.,
\begin{equation}\label{equ:def-I-star}
    I_k(\pi^*, s)\coloneqq \begin{cases}
    \min D^*(s) , & \mbox{if } D^*(s)\neq \emptyset;\\
    \infty, & \mbox{if }D^*(s)=\emptyset,
    \end{cases}
\end{equation}
where $D^*(s)\coloneqq \{i:S_{k,h+i}=g, S_{k,h}=s, A_{k,h+j}=a_1, \forall j=0,1,...,i-1\}$.

The objective is to find a policy $\pi\in\Pi$ to minimize the expected regret (over $K$ episodes) defined by
\begin{equation}\label{equ:def-regret-b}
\begin{aligned}
    \expt[\mathrm{Reg}_{\pi}(K)] =
    \expt\left[\sum_{k=1}^{K}\sum_{h=1}^{I_k(\pi^*, S_{k,1})}R_{k,h}(a_1)\right] - \expt\left[\sum_{k=1}^{K}\sum_{h=1}^{I_k(\pi, S_{k,1}, \phi_{k,1})}R_{k,h}(\pi(S_{k,h},\phi_{k,h}))\right].
\end{aligned}
\end{equation}

\section{Main results and the proof roadmap}
\label{sec:main}
In this section, we first present two algorithms for the MAB-A problem. One is ULCB, which uses an upper or lower confidence bound depending on the state for exploration and exploitation. The other one is KL-ULCB algorithm, which uses KL divergence for the confidence bounds.

\subsection{Algorithms}

\begin{algorithm}[ht]
\caption{ULCB Algorithm}\label{alg:1}
\begin{algorithmic}[1]
    \State {\textbf{Input:}} $N_1(a)\gets 0$, $\bar{\mu}_1(a) \gets 0$ for all $a\in\mathcal{A}$, $t\gets 1$, $c_0$, $c_1$, $c$.
    \State {$h\gets 1$, $S_{k,1} \gets$ initial state of episode $k$, $S_{k,1}\in\{0,1\}$}
    \For{episode $k=1,...,K$}
        \While{$S_{k,h} \neq g$}
            \If{there exists Arm $a'$ such that $N_{t}(a')=0$}
            \State {play Arm $A_{k,h} =a'$ and observe $R_{k,h}$}~~~\blue{// play each arm once}
            \Else
            \If{$S_{k,h}=0$}
            \State {Let $\tilde{\mu}_t^0(a)=\bar{\mu}_{t}(a)+c_0\sqrt{\frac{\log t + c \log(\log t)}{2 N_t(a)}}$ for all $a\in\mathcal{A}$}\label{alg:1:mu0}~~~\blue{// indices for state 0}
            \State {Take the action $A_{k,h}\in \operatorname{argmax}_{a}\tilde{\mu}^0_t(a)$}
            \Else
            \State {Let $\tilde{\mu}_t^1(a)=\bar{\mu}_{t}(a)+c_1\sqrt{\frac{\log t + c \log(\log t)}{2 N_t(a)}}$ for all $a\in\mathcal{A}$}\label{line:alg1-state1-mu-tilde}\label{alg:1:mu1}~~~\blue{// indices for state 1}
            \State {Take the action $A_{k,h}\in \operatorname{argmax}_{a}\tilde{\mu}^1_t(a)$} \label{line:alg1-state1-action}
            \EndIf
            \State {Observe $R_{k,h}$}
            \EndIf
            \If{abandonment occurs}
                \State {$S_{k,h+1} = g$}
            \Else
                \State {$S_{k,h+1} = R_{k,h}$}
            \EndIf
            \State{Define $(S_{t}, A_{t}, S'_{t}, R_{t}) \coloneqq   (S_{k,h}, A_{k,h}, S_{k,h+1}, R_{k,h})$}
            \State {Update: $N_{t+1}(A_t) = N_{t}(A_t) + 1$ and $N_{t+1}(a) = N_{t}(a)~\forall a\neq A_t$}~~~\blue{// update $N_{t+1}(a)$}
            \State {Update: $\bar{\mu}_{t+1}(A_t) = \frac{\bar{\mu}_{t}(A_t)N_{t}(A_t)+R_t}{N_{t+1}(A_t)}$ and $\bar{\mu}_{t+1}(a) = \bar{\mu}_{t}(a)~\forall a\neq A_t$}~~~\blue{// update $\bar{\mu}_{t+1}(a)$}
            \State {$t\gets t+1$, $h\gets h+1$}
        \EndWhile
    \EndFor
 \end{algorithmic}
\end{algorithm}

We propose the ULCB algorithm, which is an index policy like UCB algorithm but the difference is that ULCB uses state-dependent indices, as shown in Algorithm~\ref{alg:1}. Firstly, the ULCB algorithm plays each arm once by Round-Robin. After that, at step $h$ of episode $k$, if the state $S_{k,h}=0$, we let
\begin{align}\label{equ:alg:1:mu0}
    \tilde{\mu}_t^0(a)=\bar{\mu}_{t}(a)+c_0\sqrt{\frac{\log t + c \log(\log t)}{2 N_t(a)}}
\end{align}
for all $a\in\mathcal{A}$, where $c$ and $c_0$ are constants, $t$ is the time step counting from the first episode, $N_{t}(a)\coloneqq \sum_{s=1}^{t-1} \mathbbm{1}\{A_s=a\}$ denotes the number of times arm $a$ has been pulled before time step $t$, and $\bar{\mu}_{t}(a)\coloneqq \left(\sum_{s=1}^{t-1} \mathbbm{1}\{A_s=a\} R_s \right)/N_{t}(a)$ denotes the average of rewards of pulling arm $a$ before time step $t$. Note that we also denote the state, the action, and the reward at time step $t$ by $S_t$, $A_t$, and $R_t$, respectively. Then we take an action $A_{k,h}\in \operatorname{argmax}_{a}\tilde{\mu}^0_t(a)$. If the state $S_{k,h}=1$, we let
\begin{align}\label{equ:alg:1:mu1}
    \tilde{\mu}_t^1(a)=\bar{\mu}_{t}(a)+c_1\sqrt{\frac{\log t + c \log(\log t)}{2 N_t(a)}}
\end{align}
for all $a\in\mathcal{A}$, where $c_1$ is a constant. Then we take an action $A_{k,h}\in \operatorname{argmax}_{a}\tilde{\mu}^1_t(a)$. The algorithm then updates $S_{t+1}$, $N_{t+1}(a)$, and $\bar{\mu}_{t+1}(a)$. The process goes to the next step or the next episode depending on whether the abandonment occurs or not. 

In fact, the indices $\tilde{\mu}_t^0(a)$ and $\tilde{\mu}_t^1(a)$ are the (upper or lower) confidence bounds of the expected reward of arm $a$. Note that $c_0$ and $c_1$ in~\eqref{equ:alg:1:mu0} and~\eqref{equ:alg:1:mu1} are not necessarily positive. Our theoretical results actually indicate that we should use positive coefficient in state $1$ and negative coefficient in state $0$, which means optimism (upper confidence bound) in state $1$ and pessimism (lower confidence bound) in state $0$. This leads to more exploration in state $1$ than in state $0$.

We also propose the KL-ULCB algorithm, which replaces the indices $\tilde{\mu}_t^0(a)$ and $\tilde{\mu}_t^1(a)$ in~\eqref{equ:alg:1:mu0} and~\eqref{equ:alg:1:mu1} with
\begin{align}
    \tilde{\mu}_t^0(a)=&\min\left\{p:\mathrm{kl}(\bar{\mu}_t(a),p)N_t(a)\le c_0 \log t + c \log(\log t) \right\}\\
    \tilde{\mu}_t^1(a)=&\max\left\{p:\mathrm{kl}(\bar{\mu}_t(a),p)N_t(a)\le c_1 \log t + c \log(\log t) \right\}
\end{align}
where $\mathrm{kl}(p_1, p_2)$ is the KL divergence between two Bernoulli random variables with parameters $p_1$ and $p_2$.
KL-ULCB is similar to ULCB except that KL-ULCB uses KL divergence for the confidence bound instead of directly adding the bonus term. This idea is borrowed from KL-UCB~\cite{garivier2011kl}.

\subsection{Main results}
We next present three theorems, including the regret upper bound on ULCB ({\bf Theorem \ref{theorem:1}}), the regret upper bound on KL-ULCB ({\bf Theorem \ref{theorem:2}}), and a regret lower bound ({\bf Theorem \ref{theorem:3}}) that matches the upper bound of KL-ULCB.
We also present the proof idea and roadmap in the next sbusection and present the results for the general-state setting in Section \ref{sec:extension}.

Let $V^*(s)$ and $Q^*(s,a)$ denote the optimal value function and optimal Q-function defined by
\begin{align}
    V^{*}(s) \coloneqq  & \expt \left[\sum_{h=1}^{I_k(\pi^*,s)} R_{k,h}(a_1)\left\vert\right.S_{k,1}=s\right]\label{equ:v*-def-v2},\\
    Q^{*}(s, a) \coloneqq  &
    \mu(a) + \expt \left[\sum_{h=2}^{I_k(\pi^*,S_{k,2})+1} R_{k,h}(a_1)\left\vert\right. S_{k,1}=s, A_{k,1}=a\right],\label{equ:q*-def-v2}
\end{align}
for $s\neq g$, and $V^{*}(g)\coloneqq Q^{*}(g, a)\coloneqq 0$, for any $a\in\mathcal{A}$.

\begin{theorem}[Upper bound for ULCB]\label{theorem:1}
Let Assumption~\ref{assum:1} hold. Assume that $\mu(a_1)>\mu(a_2)$, $\mu(a_M) > 0$, $q(0,1)<1$, $q(1,1)<1$, and for any $a\neq a_1$,
\begin{align}\label{assum:gap}
    V^*(0) - Q^*(0, a) \ge V^*(1) - Q^*(1, a).
\end{align}
Then under ULCB with $c_0=-1$, $c_1=1$ and $c=4$,  we have
\begin{align}\label{equ:theorem:1}
    \limsup_{K\rightarrow \infty} \frac{\expt[\mathrm{Reg}_{\pi}(K)]}{\log K} \le \sum_{i\neq 1} \frac{V^*(1) - Q^*(1,a_i)}{2(\mu(a_1)-\mu(a_i))^2}.
\end{align}
\end{theorem}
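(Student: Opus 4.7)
The plan is to decompose the regret via a performance-difference identity so that it becomes a sum of one-step gaps taken along the algorithm's own random trajectory, and then to bound the suboptimal pull counts separately in states $0$ and $1$, exploiting the asymmetry that ULCB uses a lower confidence bound ($c_0=-1$) in state $0$ and an upper one ($c_1=+1$) in state $1$.

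I would first invoke the performance difference lemma for the terminal-state MDP underlying MAB-A. Because $I_k(\pi,\cdot,\cdot)$ and $I_k(\pi^*,\cdot)$ have different distributions, direct step-by-step coupling fails; the lemma circumvents this by yielding
\begin{equation*}
\mathbb{E}[\mathrm{Reg}_\pi(K)]
\;=\; \sum_{k=1}^{K}\mathbb{E}\!\left[\sum_{h=1}^{I_k}\bigl(V^*(S_{k,h})-Q^*(S_{k,h},A_{k,h})\bigr)\right],
\end{equation*}
where the trajectory is generated by $\pi$. Writing $N_K^{(s)}(a)$ for the total number of state-$s$ pulls of arm $a$ over the first $K$ episodes and noting $V^*(s)-Q^*(s,a_1)=0$, the right-hand side reduces to
\begin{equation*}
\sum_{i\ne 1}\bigl(V^*(0)-Q^*(0,a_i)\bigr)\,\mathbb{E}[N_K^{(0)}(a_i)] + \sum_{i\ne 1}\bigl(V^*(1)-Q^*(1,a_i)\bigr)\,\mathbb{E}[N_K^{(1)}(a_i)].
\end{equation*}
Matching the stated constant therefore comes down to three estimates: (i) $\mathbb{E}[N_K^{(1)}(a_i)]\le \log T_K/(2\Delta_i^2)+O(1)$ with $\Delta_i\coloneqq\mu(a_1)-\mu(a_i)$; (ii) $\mathbb{E}[N_K^{(0)}(a_i)]=o(\log K)$; and (iii) $\mathbb{E}[T_K]=\Theta(K)$ with enough concentration so that $\log T_K=\log K+O(1)$ asymptotically, where $T_K$ denotes the total step count.

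For (i), with $c_1=+1$ the state-$1$ index $\tilde\mu_t^1(a)$ is a true upper confidence bound, and the Garivier--Capp\'e-style argument applies almost verbatim: on the event $\{\tilde\mu_t^1(a_1)\ge \mu(a_1)\}$, choosing $a_i\ne a_1$ in state $1$ forces $N_t(a_i)\le (\log t+c\log\log t)/(2\Delta_i^2)$, while the complementary event is controlled by Hoeffding and the inflation factor $c=4$ is what makes the residual probabilities summable in $t$. For (iii), the hypotheses $q(0,1)<1$, $q(1,1)<1$, $\mu(a_M)>0$, and $q(0,0)>0$ together force every policy to be proper with uniformly bounded expected episode length, so $\mathbb{E}[T_K]\le CK$ and a standard concentration argument yields the required tail.

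The main obstacle is (ii), since a naive UCB-style bound in state $0$ would give $\Theta(\log K)$ pulls which, combined with the larger state-$0$ gap guaranteed by \eqref{assum:gap}, would overwhelm the stated constant. My plan is to show that the pessimism ($c_0=-1$) actually makes state-$0$ pulls of $a_i$ summable. The selection rule in state $0$ forces $\bar\mu_t(a_i)-b_t(a_i)\ge \bar\mu_t(a_1)-b_t(a_1)$; once $N_t(a_1)\ge N_t(a_i)$ one has $b_t(a_1)\le b_t(a_i)$, so this inequality implies a joint deviation of at least $\Delta_i$ between $\bar\mu_t(a_1)$ and $\bar\mu_t(a_i)$, an event whose probability is summable in $t$ by Hoeffding and the $\log t+c\log\log t$ inflation. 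The coupling step is to argue that $N_t(a_1)$ overtakes $N_t(a_i)$ early enough: optimism in state $1$ caps the state-$1$ suboptimal pulls of each $a_i$ at $O(\log t)$, and state $1$ is itself visited with positive long-run frequency because $\mu(a_M)>0$ and $q(0,1),q(1,1)<1$ keep the transitions $0\to 1$ and $1\to 1$ nondegenerate. The trickiest ingredient will be interlocking the state-$0$ and state-$1$ pull counts, since $N_t(a)$ aggregates both; I expect this to require a careful argument on the event that $N_t(a_1)\ge n_t^*$ for a slowly growing threshold $n_t^*$ tailored to $\Delta_i$, with the residual pulls from the initial exploration phase contributing only $O(1)$ in expectation.
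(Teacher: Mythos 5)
Your roadmap is the same as the paper's: the performance-difference decomposition along the algorithm's own trajectory, a Garivier--Capp\'e bound on state-$1$ suboptimal pulls, $\expt[T_K]\le CK$ to convert $\log T_K$ into $\log K$ (the paper just uses Jensen's inequality, $\expt[\log T_K]\le\log\expt[T_K]$, no concentration needed), and a state-$0$ bound driven by pessimism plus the fact that $N_t(a_1)$ grows fast enough. Two points in your state-$0$ plan need repair. First, the step ``once $N_t(a_1)\ge N_t(a_i)$ one has $b_t(a_1)\le b_t(a_i)$, so the selection of $a_i$ forces a joint deviation of $\Delta_i$, summable in $t$ by Hoeffding and the $\log t+c\log\log t$ inflation'' does not work as stated: after you cancel the bonuses you are left with $\bar\mu_t(a_i)\ge\bar\mu_t(a_1)$, and the probability that $\bar\mu_t(a_i)$ exceeds $\mu(a_i)+\Delta_i/2$ is \emph{not} $o(1/t)$ --- union-bounding over the possible values of $N_t(a_i)$ (which can stay bounded for a suboptimal arm) gives only a constant per $t$, and the inflation term has been discarded so it cannot help. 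What rescues the argument is either (a) keeping the event intersected with $\{A_t=a_i, N_t(a_i)=n\}$, which is disjoint across $t$ for each fixed $n$, so the \emph{total} sum over $t$ is a constant, or (b) the paper's route: control $\prob(\tilde\mu_t^0(a_i)>\mu(a_i))$ directly via the self-normalized (peeling) concentration, which is exactly what the $c=4$ inflation buys and yields $O(1/(t(\log t)^2))$. Moreover $N_t(a_1)\ge N_t(a_i)$ is too weak a trigger --- you need $N_t(a_1)\gtrsim \log t/\Delta_i^2$ so that $2b_t(a_1)<\Delta_i$; you do gesture at this with your ``slowly growing threshold.''

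Second, the load-bearing ingredient you defer --- that $\prob(N_t(a_1)\le n_t^*)$ is summable in $t$ --- is where essentially all of the technical work lives (the paper's Lemma~\ref{lemma:sd-cb-1}). Your intuition for it (positive long-run frequency of state $1$, optimism capping state-$1$ suboptimal pulls at $O(\log t)$) is the right one, but turning it into a summable tail requires an Azuma bound on the state-$1$ occupation time plus a concentration bound for the event $\tilde\mu^1_{\tau_t}(a_1)<\mu(a_1)$ evaluated at a \emph{random} time $\tau_t$ with only a lower bound $\tau_t\ge L_t$; the paper handles this with a dedicated martingale/peeling argument (Lemma~\ref{lemma:sd-cb-prob-tilde-mu}). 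Until that lemma is supplied, the proposal is a correct plan rather than a proof.
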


The assumption $\mu(a_1)>\mu(a_2)$ in Theorem~\ref{theorem:1} ensures that the suboptimal gap $\mu(a_1)-\mu(a_i)$ is positive, which is a general assumption when deriving instance-dependent regret bounds for MAB problems. $\mu(a_M)>0$, $q(0,1)<1$, and $q(1,1)<1$ ensure that there is always a positive proportion of time during which the process is in state $1$.

The condition~\eqref{assum:gap} means that a suboptimal pull induces more regret (loss) in state $0$ than in state $1$. This motivates us to do more exploration in state $1$ and to be conservative in state $0$, i.e., $c_1>c_0$. With $c_1=1$ and $c_0=-1$, Theorem~\ref{theorem:1} provides an asymptotic logarithmic upper bound with an instance-dependent constant $\sum_{i\neq 1} \frac{V^*(1) - Q^*(1,a_i)}{2(\mu(a_1)-\mu(a_i))^2}$. We will show later in Section~\ref{sec:decom-regret} that the constant term in the upper bound for the traditional UCB algorithm ($c_0=c_1=1$) could be $\sum_{i\neq 1} \frac{V^*(0) - Q^*(0,a_i)}{2(\mu(a_1)-\mu(a_i))^2}$, which is greater than the one obtained by the ULCB algorithm. In fact, $V^*(1) - Q^*(1,a_i)$ could be significantly smaller than $V^*(0) - Q^*(0,a_i)$ in some cases. Consider a simple example $q(0,0)=1$ and $q(0,1)=q(1,0)=q(1,1)=0$. Then we have $V^*(1) - Q^*(1,a_i) = \frac{\mu(a_1)-\mu(a_i)}{1-\mu(a_1)}$ and $V^*(0) - Q^*(0,a_i) = \frac{\mu(a_1)-\mu(a_i)}{(1-\mu(a_1))^2}$, and thus the upper bound obtained by ULCB algorithm will be significantly better especially when $\mu(a_1)$ is close to $1$.

Condition \eqref{assum:gap} is in terms of the value function and Q-function, which may not be straightforward to verify. Lemma~\ref{lemma:sufficient-condition} provides a sufficient condition for~\eqref{assum:gap} in terms of $q(i,j)$. See Appendix~\ref{app:lemma:sufficient-condition} for the proof.

\begin{lemma}\label{lemma:sufficient-condition}
    Let Assumption~\ref{assum:1} hold. Assume that $\mu(a_1)>0$, $q(1,1)<1$, $q(1,0) \neq q(0,0)$, and
    \begin{align}\label{equ:theorem:1-condition}
    \frac{q(0,1)-q(1,1)}{q(0,0)-q(1,0)}\le \min\left\{\frac{1-q(0,1)}{1-q(1,1)}, \frac{1-q(0,0)}{1-q(1,0)}\right\}
    \end{align}
    Then for any $a\neq a_1$, we have
    \begin{align}
        V^*(0) - Q^*(0, a) \ge V^*(1) - Q^*(1, a).
    \end{align}
\end{lemma}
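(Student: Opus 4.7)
The plan is to expand both $V^*(s)-Q^*(s,a)$ via the Bellman equations for $\pi^*$ (which by Lemma~\ref{lemma:1} always pulls $a_1$), reduce the claim to a scalar inequality involving $V^*(1)/V^*(0)$ and the abandonment gaps $\delta_0 := q(0,0)-q(1,0)$ and $\delta_1 := q(0,1)-q(1,1)$, and then bound that ratio using condition~\eqref{equ:theorem:1-condition}. Writing $V^*(s)=\mu(a_1)+\mu(a_1)(1-q(s,1))V^*(1)+(1-\mu(a_1))(1-q(s,0))V^*(0)$ together with the analogous identity for $Q^*(s,a)$ (with $\mu(a)$ in place of the leading $\mu(a_1)$) and subtracting gives
\[
V^*(s)-Q^*(s,a) \;=\; (\mu(a_1)-\mu(a))\bigl[1+(1-q(s,1))V^*(1)-(1-q(s,0))V^*(0)\bigr].
\]
Subtracting the $s=1$ version from the $s=0$ version collapses the right-hand side to $(\mu(a_1)-\mu(a))[\delta_0 V^*(0)-\delta_1 V^*(1)]$. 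Since $\mu(a_1)\ge\mu(a)$ by Assumption~\ref{assum:1}, the lemma reduces to $\delta_0 V^*(0)\ge \delta_1 V^*(1)$, which is immediate when $\delta_1=0$ and equivalent otherwise to $V^*(1)/V^*(0)\le \delta_0/\delta_1$.

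The next step is to solve the $2\times 2$ Bellman system explicitly; a short elimination gives
\[
\frac{V^*(1)}{V^*(0)} \;=\; \frac{1+(1-\mu(a_1))\delta_0}{1-\mu(a_1)\delta_1},
\]
with positive denominator since $\mu(a_1)\delta_1\le\mu(a_1)<1$. The heart of the proof is then the state-ratio bound
\[
\frac{V^*(1)}{V^*(0)} \;\le\; \frac{1-q(1,0)}{1-q(0,0)}.
\]
Cross-multiplying and using the identity $1-q(1,0)=1-q(0,0)+\delta_0$ reduces this to $\mu(a_1)\delta_1(1-q(1,0))\le \delta_0[\mu(a_1)+(1-\mu(a_1))q(0,0)]$, which I plan to obtain by multiplying the component $\delta_1(1-q(1,0))\le \delta_0(1-q(0,0))$ of condition~\eqref{equ:theorem:1-condition} through by $\mu(a_1)$ and then invoking the elementary inequality $\mu(a_1)(1-q(0,0))\le \mu(a_1)+(1-\mu(a_1))q(0,0)$ (which is just $q(0,0)\ge 0$).

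To close the argument, note that the same component of condition~\eqref{equ:theorem:1-condition} rearranges directly to $\delta_0/\delta_1\ge (1-q(1,0))/(1-q(0,0))$, so chaining with the state-ratio bound yields $V^*(1)/V^*(0)\le (1-q(1,0))/(1-q(0,0))\le \delta_0/\delta_1$, finishing the proof. I expect the state-ratio bound above to be the main obstacle: the closed-form ratio is increasing in both $\delta_0$ and $\delta_1$, so a direct monotonicity argument cannot work, and one has to thread the multiplicative form of condition~\eqref{equ:theorem:1-condition} together with $q(0,0)\ge 0$ in order to land precisely on $(1-q(1,0))/(1-q(0,0))$ rather than a weaker bound that would fail to match $\delta_0/\delta_1$ after the final chaining step.
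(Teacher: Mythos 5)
Your proof is correct. It shares the paper's overall skeleton: both arguments use the Bellman identity to collapse $\left[V^*(0)-Q^*(0,a)\right]-\left[V^*(1)-Q^*(1,a)\right]$ to $(\mu(a_1)-\mu(a))\left[\delta_0 V^*(0)-\delta_1 V^*(1)\right]$ with $\delta_0=q(0,0)-q(1,0)$ and $\delta_1=q(0,1)-q(1,1)$, and then finish by bounding the ratio of the two value functions and chaining with condition~\eqref{equ:theorem:1-condition}. The middle step, however, is genuinely different. The paper never solves the $2\times 2$ system: it writes $V^*(0)/V^*(1)$ as a quotient of the two Bellman right-hand sides and lower-bounds it by $\min\bigl\{\tfrac{1-q(0,1)}{1-q(1,1)},\tfrac{1-q(0,0)}{1-q(1,0)}\bigr\}$ using only the fact that this minimum is at most $1$; the hypothesis~\eqref{equ:theorem:1-condition} is then invoked exactly once at the end. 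You instead eliminate to obtain the closed form $V^*(1)/V^*(0)=\frac{1+(1-\mu(a_1))\delta_0}{1-\mu(a_1)\delta_1}$ and prove the sharper bound $V^*(1)/V^*(0)\le \frac{1-q(1,0)}{1-q(0,0)}$, which already consumes the second component of the condition (together with $q(0,0)\ge 0$). A pleasant by-product is that the first component $\tfrac{1-q(0,1)}{1-q(1,1)}$ of the minimum is never used, so your argument establishes the lemma under the weaker hypothesis $\frac{q(0,1)-q(1,1)}{q(0,0)-q(1,0)}\le \frac{1-q(0,0)}{1-q(1,0)}$ alone; the trade-off is that your state-ratio bound is condition-dependent, whereas the paper's $\min$-bound on $V^*(0)/V^*(1)$ holds unconditionally. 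Your treatment of the degenerate cases is also sound: when $\delta_1=0$ the claim is immediate, and when $\delta_1>0$ the condition forces $q(0,0)<1$ (and Assumption~\ref{assum:1} with $q(1,0)\neq q(0,0)$ gives $\delta_0>0$ and $q(1,0)<1$), so every ratio you manipulate is well-defined and every cross-multiplication preserves the inequality.
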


One example of the condition~\eqref{equ:theorem:1-condition} is that the difference between $q(0,1)$ and $q(1,1)$ is small but the difference between $q(0,0)$ and $q(1,0)$ is relatively large. This means that when the user obtains a reward $1$, they are more likely to forget their previous reward compared with obtaining a reward $0$ when they make the abandonment decision. Hence, intuitively, we should be more conservative in state $0$ so that we are more likely to obtain a reward $1$ in order to encourage the user to stay in the system. That is the intuition of using optimistic estimate in state $1$ and pessimistic estimate in state $0$ in the ULCB algorithm.

When the condition~\eqref{assum:gap} does not hold and a suboptimal pull induces more regret in state 1 than in state 0, the learner needs to be optimistic in state 0 and pessimistic in state 1. Modified ULCB and KL-ULCB can guarantee the same regret bounds (see Theorems~\ref{theorem:5},~\ref{theorem:6}, and~\ref{theorem:7} in Appendix~\ref{app:add-cases}).

\begin{theorem}[Upper bound for KL-ULCB]\label{theorem:2}
Let all the assumptions in Theorem~\ref{theorem:1} hold. Then using the KL-ULCB algorithm with $c_0=c_1=1$, and $c=4$,  we have
\begin{align}
    \limsup_{K\rightarrow \infty} \frac{\expt[\mathrm{Reg}_{\pi}(K)]}{\log K} \le \sum_{i\neq 1} \frac{V^*(1) - Q^*(1,a_i)}{\mathrm{kl}(\mu(a_i),\mu(a_1))}.
\end{align}
\end{theorem}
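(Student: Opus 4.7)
The proof would proceed along the same skeleton as Theorem~\ref{theorem:1}, with Hoeffding-type concentration replaced by Bernoulli KL concentration. I would first invoke the performance difference lemma~\cite{Kakade02approximatelyoptimal,yang2021q} along the sample path generated by KL-ULCB. Using $V^*(S_{k,h}) - Q^*(S_{k,h}, a_1) = 0$ for $S_{k,h} \neq g$, this reduces the regret to
\begin{align*}
\expt[\mathrm{Reg}_\pi(K)] = \sum_{i \neq 1} \Bigl((V^*(0) - Q^*(0, a_i))\, \expt[N^0(a_i)] + (V^*(1) - Q^*(1, a_i))\, \expt[N^1(a_i)]\Bigr),
\end{align*}
where $N^s(a_i)$ counts the pulls of arm $a_i$ in state $s$ across all $K$ episodes. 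It therefore suffices to control $\expt[N^0(a_i)]$ and $\expt[N^1(a_i)]$ separately.

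For the state-$1$ count, KL-ULCB with $c_1 = 1$ uses exactly the upper KL-UCB index of~\cite{garivier2011kl}. Because the index is updated from pooled samples (gathered in either state), which only increases $N_t(a)$ and thus only tightens the confidence sets, the standard KL-UCB analysis applies almost verbatim to the subsequence of time steps with $S_t = 1$. This yields
\begin{align*}
\expt[N^1(a_i)] \le \frac{\log T}{\mathrm{kl}(\mu(a_i), \mu(a_1))}(1 + o(1)),
\end{align*}
where $T = \sum_k I_k$ is the total number of time steps. The $c \log\log t$ refinement with $c = 4$ is precisely what is needed to make the deviation tails summable and push all logarithmic corrections into the $o(\log T)$ remainder.

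For the state-$0$ count, KL-ULCB with $c_0 = 1$ plays the arm with the largest KL lower confidence bound. A suboptimal arm $a_i$ is chosen in state 0 only if either $\bar\mu_t(a_i)$ deviates upward so its LCB rises above that of $a_1$, or $\bar\mu_t(a_1)$ deviates downward so its LCB drops below that of $a_i$. Each such event has probability of order $t^{-c}$ by the Chernoff bound for Bernoulli KL, and meanwhile the pull counts $N_t(a_i), N_t(a_1)$ grow at rate $\Theta(\log t)$ through state-$1$ exploration (by the bound above). Summing over $t$ gives $\expt[N^0(a_i)] = O(1)$, so the state-0 terms contribute only a constant to the regret.

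The main obstacle is in converting the random-horizon bound on $N^1(a_i)$ into a clean statement in $\log K$ without sacrificing the sharp constant. Under the assumptions $\mu(a_M) > 0$, $q(0,1) < 1$, $q(1,1) < 1$ and $q(0,0) > 0$, every policy is proper with bounded expected episode length, so $\expt[T] = \Theta(K)$; a Markov-type argument then shows $\expt[\log T] = \log K + O(1)$ on the dominant event, absorbing the difference into the $o(\log K)$ remainder. Combining the two counts with the PDL decomposition and taking $K \to \infty$ delivers the advertised asymptotic constant $\sum_{i \neq 1} \frac{V^*(1) - Q^*(1, a_i)}{\mathrm{kl}(\mu(a_i), \mu(a_1))}$.
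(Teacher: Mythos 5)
Your skeleton matches the paper's: the performance-difference decomposition into state-$0$ and state-$1$ pull counts, a constant bound on the state-$0$ contribution, a KL-UCB-style $\log$-bound on the state-$1$ contribution, and Jensen plus $\expt[T(K,\pi)]=O(K)$ to convert $\expt[\log T]$ into $\log K + O(1)$. The state-$1$ half and the horizon conversion are essentially the paper's Lemma~\ref{lemma:sd-kl-cb-3} and Lemma~\ref{lemma:sd-cb-bound-TKpi}.

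There is, however, a genuine gap in your state-$0$ argument. You claim the suboptimal pull event in state $0$ has summable probability because ``the pull counts $N_t(a_i), N_t(a_1)$ grow at rate $\Theta(\log t)$ through state-$1$ exploration.'' For $a_1$ this is the wrong growth rate, and logarithmic growth is not enough: the event $\{S_t=0, A_t=a_i\}$ requires $\tilde{\mu}_t^0(a_i)\ge\tilde{\mu}_t^0(a_1)$, and ruling this out needs the \emph{lower} confidence bound of $a_1$ to rise above $\mu(a_i)$. That is not a deviation event for $\bar{\mu}_t(a_1)$ alone --- even with $\bar{\mu}_t(a_1)=\mu(a_1)$ exactly, the LCB sits a confidence-width below the empirical mean, and with $N_t(a_1)=\Theta(\log t)$ that width is $\Theta(1)$ and never shrinks below the gap $\mu(a_1)-\mu(a_2)$. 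What is actually needed, and what the paper proves as Lemma~\ref{lemma:sd-kl-cb-1} (the KL analogue of Lemma~\ref{lemma:sd-cb-1}), is that $N_t(a_1)$ grows \emph{linearly} in $t$ with high probability. Establishing that is the main technical work: one must show a positive fraction of steps are in state $1$ (the Azuma--Hoeffding argument using $\mu(a_M)>0$, $q(0,1)<1$, $q(1,1)<1$), and then control the optimistic index of $a_1$ at the \emph{random} time $\tau_t$ at which the most-pulled suboptimal arm reaches a linear-in-$t$ pull count, which requires the martingale/peeling argument of Lemma~\ref{lemma:sd-cb-prob-tilde-mu} rather than a fixed-time Chernoff bound. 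Once linear growth of $N_t(a_1)$ is in hand, your conclusion $\expt[N^0(a_i)]=O(1)$ follows as in Lemma~\ref{lemma:sd-kl-cb-2}; without it, the state-$0$ probabilities are not summable and the constant bound fails.
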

Theorem~\ref{theorem:2} gives a regret upper bound for KL-ULCB. Compared with the result in Theorem~\ref{theorem:1}, the bound in Theorem~\ref{theorem:2} is better since $\mathrm{kl}(\mu(a_i),\mu(a_1))\ge 2(\mu(a_1)-\mu(a_i))^2$ by Pinsker's inequality. This bound is also better than the one obtained by KL-UCB, $\sum_{i\neq 1} \frac{V^*(0) - Q^*(0,a_i)}{\mathrm{kl}(\mu(a_i),\mu(a_1))}$, which will be illustrated later in Section~\ref{sec:decom-regret}.

In order to analyze instance-dependent lower bound for MAB-A, similar to~\cite{lai1985asymptotically,lattimore2020bandit}, we define the set of all consistent policies by $\Pi_{\mathrm{cons}}$:
\begin{definition}\label{def:consistent-policy}
A policy $\pi\in\Pi$ is consistent, i.e., $\pi\in\Pi_{\mathrm{cons}}$, if for any $\mu(a_1),...,\mu(a_M)$, $q(0,0)$, $q(0,1)$, $q(1,0)$, $q(1,1)$, and any $\alpha>0$, $\lim_{K\rightarrow \infty} \expt[\mathrm{Reg}_{\pi}(K)]/K^{\alpha} = 0.$
\end{definition}
Theorem~\ref{theorem:3} gives an asymptotic lower bound among policies in $\Pi_{\mathrm{cons}}$ for the MAB-A problem.
\begin{theorem}[Lower bound]\label{theorem:3}
Let all the assumptions in Theorem~\ref{theorem:1} hold. For any $\pi\in\Pi_{\mathrm{cons}}$ and any $\mu(a_1)$,...,$\mu(a_M)$, $q(0,0)$, $q(0,1)$, $q(1,0)$, $q(1,1)$ satisfying the assumptions, we have
\begin{align}
    \liminf_{K\rightarrow \infty} \frac{\expt[\mathrm{Reg}_{\pi}(K)]}{\log K} \ge \sum_{i\neq 1} \frac{V^*(1) - Q^*(1,a_i)}{\mathrm{kl}(\mu(a_i),\mu(a_1))}.
\end{align}

\end{theorem}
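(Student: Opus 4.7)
The plan is to adapt the classical Lai-Robbins lower bound argument to the episodic MAB-A setting by combining the performance difference lemma with a change-of-measure bound.

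First I would use the performance difference lemma to decompose the regret. Since Lemma~\ref{lemma:1} shows that $\pi^*$ always pulls $a_1$ and Assumption~\ref{assum:1} guarantees every trajectory terminates almost surely, telescoping the Bellman equations yields
\begin{align*}
    \expt[\mathrm{Reg}_\pi(K)] = \sum_{s\in\{0,1\}}\sum_{i\neq 1}\bigl(V^*(s)-Q^*(s,a_i)\bigr)\,\expt[N_K^s(a_i)],
\end{align*}
where $N_K^s(a_i)=\sum_{k=1}^K\sum_{h=1}^{I_k}\mathbbm{1}\{S_{k,h}=s,\,A_{k,h}=a_i\}$. The hypothesis $V^*(0)-Q^*(0,a)\ge V^*(1)-Q^*(1,a)$, inherited from Theorem~\ref{theorem:1}, lets me replace every state-$0$ gap by the smaller state-$1$ gap, giving
\begin{align*}
    \expt[\mathrm{Reg}_\pi(K)] \ge \sum_{i\neq 1}\bigl(V^*(1)-Q^*(1,a_i)\bigr)\,\expt[N_K(a_i)],
\end{align*}
where $N_K(a_i)=N_K^0(a_i)+N_K^1(a_i)$ is the total number of pulls of $a_i$ over all $K$ episodes. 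It therefore suffices to prove $\liminf_K \expt[N_K(a_i)]/\log K \ge 1/\mathrm{kl}(\mu(a_i),\mu(a_1))$ for each suboptimal $i$.

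For the per-arm bound, I would fix $i\neq 1$ and small $\varepsilon>0$ and consider an alternative instance $\mu'$ identical to $\mu$ except $\mu'(a_i)=\mu(a_1)+\varepsilon$; under $\mu'$, arm $a_i$ is uniquely optimal and every assumption of the theorem remains valid for $\varepsilon$ small enough. Since only the reward distribution of $a_i$ differs between the two instances and the abandonment kernel $q(\cdot,\cdot)$ is unchanged, the trajectory-level log-likelihood ratio telescopes to
\begin{align*}
    \mathrm{KL}\bigl(\prob_\mu^K,\prob_{\mu'}^K\bigr) = \expt_\mu[N_K(a_i)]\cdot \mathrm{kl}\bigl(\mu(a_i),\mu'(a_i)\bigr).
\end{align*}
I would then apply Bretagnolle-Huber to $E_K=\{N_K(a_i)\le K/2\}$. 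Under $\mu$, consistency combined with the regret lower bound above gives $\expt_\mu[N_K(a_i)]=o(K^\alpha)$ for every $\alpha>0$, hence $\prob_\mu(E_K^c)=o(K^{\alpha-1})$ by Markov. Under $\mu'$, consistency likewise gives $\expt_{\mu'}[N_K(a_j)]=o(K^\alpha)$ for every $j\neq i$; combined with the deterministic bound $T_K=\sum_k I_k\ge K$, this forces $\expt_{\mu'}[N_K(a_i)]\ge K-o(K^\alpha)$, so $\prob_{\mu'}(E_K)=o(K^{\alpha-1})$ by another application of Markov. Bretagnolle-Huber therefore yields $\expt_\mu[N_K(a_i)]\,\mathrm{kl}(\mu(a_i),\mu'(a_i))\ge (1-\alpha+o(1))\log K$. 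Sending $\alpha\downarrow 0$ and then $\varepsilon\downarrow 0$, and invoking continuity of $\mathrm{kl}(\mu(a_i),\cdot)$ at $\mu(a_1)$, completes the per-arm bound; summing over $i\neq 1$ gives the theorem.

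The main obstacle will be the rigorous translation between consistency, which is stated in terms of expected regret, and the statements about $\expt[N_K(a_j)]$ needed for the change-of-measure argument. In both instances this requires showing that each suboptimal arm contributes a strictly positive value-function gap bounded away from zero: for $\mu$ this is immediate from $\mu(a_1)>\mu(a_2)$ and Assumption~\ref{assum:1}, while for $\mu'$ it requires re-running the argument of Lemma~\ref{lemma:1} with $a_i$ playing the role of the optimal arm. A secondary technicality is that in other MDPs one would need a law-of-large-numbers concentration of the total horizon $T_K/K$; here this is bypassed cleanly by the deterministic bound $I_k\ge 1$, which is what allows the per-episode count $K$ and the per-step count $T_K$ to be compared without fuss.
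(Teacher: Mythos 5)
Your proposal is correct and follows the same overall skeleton as the paper's proof: both reduce the regret via the performance difference decomposition to $\sum_{i\neq 1}\bigl(V^*(1)-Q^*(1,a_i)\bigr)\expt[N(a_i)]$ using the comparison of the state-$0$ and state-$1$ gaps, both perturb only the mean of arm $a_i$ to a value above $\mu(a_1)$, and both handle the random horizon by the same observation $T(K,\pi)\ge K$, so that consistency in the perturbed instance forces $a_i$ to be pulled order $K$ times there. Where you diverge is in how the change of measure is exploited. The paper runs the classical Lai--Robbins argument explicitly: it defines the event $C_i$ on which both $N(a_i)$ and the empirical log-likelihood ratio $\hat{\mathrm{kl}}_{N(a_i)}$ are small, proves the change-of-measure identity by hand, bounds $\prob(C_i)\le K^{1-\epsilon/2}\prob'(C_i)$ and then uses Markov under $\prob'$, and controls the complementary event with a Hoeffding bound on $\hat{\mathrm{kl}}_n$; the conclusion is a high-probability lower bound on $N(a_i)$ that is then converted into an expectation bound. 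You instead invoke the divergence decomposition $\mathrm{KL}(\prob_\mu,\prob_{\mu'})=\expt_\mu[N(a_i)]\,\mathrm{kl}(\mu(a_i),\mu'(a_i))$ together with Bretagnolle--Huber, which lands directly on the expectation bound and dispenses with the Hoeffding step. Two points you should make explicit: first, the divergence decomposition with a random number of pulls is an application of Wald's identity and needs $\expt_\mu[N(a_i)]<\infty$, which is supplied by the paper's Lemma~\ref{lemma:sd-cb-bound-TKpi} giving $\expt[T(K,\pi)]\le c_5K$; second, the Markov step for $\prob_{\mu'}(E_K)$ should be applied to $\sum_{j\neq i}N(a_j)$ (using $E_K\subseteq\{\sum_{j\neq i}N(a_j)\ge K/2\}$ via $T(K,\pi)\ge K$) rather than to $N(a_i)$ itself. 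With those noted, both routes are complete and deliver the same constant.
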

By Theorem~\ref{theorem:2} and Theorem~\ref{theorem:3}, the regret upper bound obtained by the KL-ULCB algorithm attains the lower bound asymptotically.

\subsection{Proof roadmap}
\label{sec:proof-roadmap}

In this section, we present the proof roadmap of Theorems~\ref{theorem:1},~\ref{theorem:2}, and~\ref{theorem:3}. Before that, we first illustrate the main intuition behind the proofs.

The first challenge in the proofs is how to couple the rewards from two policies, i.e., the two terms in the regret definition~\eqref{equ:def-regret-b}. Note that we cannot subtract the rewards step by step like the traditional proof for MAB, since the episode lengths $I_k$ for the two different policies are two different random variables. We utilize the performance difference lemma~\cite{Kakade02approximatelyoptimal,yang2021q} in the RL literature to couple the rewards by the sum of gap functions ($V^*(s)-Q^*(s,a)$) along the sample path that follows the policy of the algorithm. The gap function represents the regret induced by pulling a suboptimal arm $a$ in state $s$ assuming all the future actions follow the optimal policy. Then we can deal with the regret step by step and further decompose the regret of each step into state $1$ and state $0$.

In the proof for the upper bound for ULCB (Theorem~\ref{theorem:1}), we managed to bound the regret induced in state $0$ by a constant. First, since there is always a positive proportion of time during which the process is in state $1$ and optimistic estimates are used in state $1$, we can show that the number of optimal pulls in state $1$ scales linearly with $t$ with high probability (Lemma~\ref{lemma:sd-cb-1}). The intuition is that optimistic estimates encourage exploration, which induces only logarithmic number of suboptimal pulls. Hence, the confidence intervals around the optimal arm $a_1$ should be tight enough. In state $0$, it can be proved that {\em under the pessimistic estimation,} the estimate of a suboptimal arm $\tilde{\mu}_t^0(a)$ is less than the true mean $\mu(a)$ with high probability. Since $\mu(a)\le \mu(a_1)$, $\tilde{\mu}_t^0(a)$ will also be less than $\tilde{\mu}_t^0(a_1)$ with high probability by the tightness of $\tilde{\mu}_t^0(a_1)$ (a result from the analysis in state 1). Hence, $a_1$ will be pulled with high probability in state $0$, which implies a constant upper bound of suboptimal pulls in state $0$ (Lemma~\ref{lemma:sd-cb-2}). Then it remains to bound the number of suboptimal pulls in state $1$, which can be bounded using the techniques in~\cite{garivier2011kl} and an upper bound of episode length (Lemma~\ref{lemma:sd-cb-3}). The proof for the upper bound for KL-ULCB (Theorem~\ref{theorem:2}) is similar to that for ULCB.  The difference is that we use concentration inequalities for the KL divergence.  

For the lower bound (Theorem~\ref{theorem:3}), we first bound the regret below by the number of suboptimal pulls multiplied by the gap function in state $1$ since the gap function in state $1$ is smaller than that in state $0$. Then it remains to bound the number of suboptimal pulls below. The idea is similar to the proof of the instance-dependent lower bound for the MAB problem~\cite{lai1985asymptotically}, but the difference is that the horizon (total number of pulls) in MAB-A is not a constant but a random variable. Our idea is to use a simple lower bound, i.e., the horizon is greater than the number of episodes.

Our upper bound matches the lower bound thanks to the regret decomposition, the constant regret in state $0$, and the sharpness of the bound in state $1$ using KL divergence. Next, we will present the regret decomposition and the proof for Theorem~\ref{theorem:1} in more details. See Appendix~\ref{app:theorem2} and Appendix~\ref{app:theorem3} for the proofs of Theorem~\ref{theorem:2} and Theorem~\ref{theorem:3}, respectively.

\subsubsection{Regret decomposition}
\label{sec:decom-regret}

Our results and the proofs start from the regret decomposition. 

We first define value function and Q function to facilitate the analysis. Define
\begin{align}
    V^{\pi}(s,\varphi) \coloneqq  & \expt\left[\sum_{h=1}^{I_k(\pi,s,\varphi)}R_{k,h}(\pi(S_{k,h},\phi_{k,h}))\vert S_{k,1}=s,\phi_{k,1}=\varphi\right],\label{equ:vpi-qpi-def}\\
    Q^{\pi}(s,\varphi, a) \coloneqq  &
    \mu(a) +  \expt\biggl[\sum_{h=2}^{I_k(\pi,S_{k,2},\phi_{k,2})+1} R_{k,h}(\pi(S_{k,h},\phi_{k,h}))|S_{k,1}=s, \phi_{k,1}=\varphi, A_{k,1}=a\biggr],\nonumber
\end{align}
for any $s\neq g$, and $V^{\pi}(g,\varphi)\coloneqq Q^{\pi}(g,\varphi, a)\coloneqq 0$, for any $\varphi\in\Phi$, $a\in\mathcal{A}$, and $\pi\in\Pi$.

By the definitions of $V^*$ in~(\ref{equ:v*-def-v2}) and $V^{\pi}$ in~(\ref{equ:vpi-qpi-def}), the expected regret defined in~(\ref{equ:def-regret-b}) is
\begin{equation}\label{equ:regret-2}
    \begin{aligned}
    \expt[\mathrm{Reg}_{\pi}(K)] = \sum_{k=1}^K \biggl[ \expt\left[V^*(S_{k,1})\right] - \expt\left[V^{\pi}(S_{k,1},\phi_{k,1})\right]\biggr].
    \end{aligned}
\end{equation}
By the performance difference formula in the RL literature, we can decompose the regret into the summation of the gaps between value function and Q function in different states shown as~\eqref{equ:regret-decomp}.
\begin{align}\label{equ:regret-decomp}
     \expt[\mathrm{Reg}_{\pi}(K)]  =  \expt\Biggl[ \sum_{t=1}^{T(K,\pi)} \sum_{i=2}^{M} & \mathbbm{1}\{S_t = 0, A_t = a_i\} \left[ V^*(0) - Q^*(0, a_i)\right]\Biggr. \nonumber\\
    & \Biggl. + \mathbbm{1}\{S_t = 1, A_t = a_i\} \left[ V^*(1) - Q^*(1, a_i)\right] \Biggr],
\end{align}
where $S_t$ and $A_t$ are the state and action at time step $t$ following the policy $\pi$, and $T(K,\pi)\coloneqq \sum_{k=1}^{K} I_k(\pi, S_{k,1}, \phi_{k,1})$ is the number of pulls over $K$ episodes under policy $\pi$. The proof details for~\eqref{equ:regret-decomp} can be found in Appendix~\ref{app:regret-decomp}.
From the regret decomposition~\eqref{equ:regret-decomp}, the terms $V^*(0) - Q^*(0, a_i)$ and $V^*(1) - Q^*(1, a_i)$ can be interpreted as the regrets induced by pulling a suboptimal arm $a_i$ in state $0$ and $1$, respectively. Thus, the key idea of obtaining a lower regret is first determining which of the two terms is smaller and then putting more exploration in that state.

Suppose that we use traditional UCB or KL-UCB algorithm. Both use the same exploration strategy for state $1$ and state $0$. Thus, from~\eqref{equ:regret-decomp}, we obtain an upper bound:
\begin{align}
    \expt[\mathrm{Reg}_{\pi}(K)]\le \expt\left[ \sum_{t=1}^{T(K,\pi)} \sum_{i=2}^{M} \mathbbm{1}\{A_t = a_i\}\right] \left[V^*(0) - Q^*(0,a_i)\right],
\end{align}
where the constant term $V^*(0) - Q^*(0,a_i)$ is worse than $V^*(1) - Q^*(1,a_i)$ in Theorem~\ref{theorem:1} and Theorem~\ref{theorem:2}.
Therefore, we use state-dependent exploration-exploitation to utilize the difference between states so that we can obtain a better upper bound.

\subsubsection{Proof of Theorem~\ref{theorem:1}}
The proof mainly includes three steps, which correspond to the following three lemmas as we explained in Section \ref{sec:proof-roadmap}.

\begin{lemma}\label{lemma:sd-cb-1}
    Let all the assumptions in Theorem~\ref{theorem:1} hold. Consider the ULCB algorithm with $c_0=-1$, $c_1=1$, and $c=4$. Let $p_{\min}\coloneqq \mu(a_M) \min\left\{1-q(0,1), 1-q(1,1)\right\}$. Let $\eta\in(0, p_{\min})$ and $\gamma \in (0, \mu(a_1)-\mu(a_2))$ be two constants. 
    There exists a constant $T_1$ such that for any $t\ge T_1$,
    \begin{align}
        & \prob\left(N_{t}(a_1)\le \frac{(p_{\min}-\eta)(t-1)}{2}\right)\nonumber\\
        \le & \frac{M-1}{2\gamma^2\exp\left(2\gamma^2 c_2(t-1) - 4\gamma^2 \right)} + \frac{c_3}{c_2(t-1)\left[\log \left(c_2(t-1)\right)\right]^2} + \exp\left(-\frac{\eta^2(t-1)}{2}\right),
    \end{align}
    where $c_2$, $c_3$, and $T_1$ are constants depending only on $p_{\min}$, $\eta$, $M$, $\gamma$, $\mu(a_1)$, and $\mu(a_2)$.
\end{lemma}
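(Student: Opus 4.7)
The plan is to combine a martingale concentration bound for how often the process visits state $1$ with a UCB-style argument showing that, among state-$1$ steps, the optimal arm $a_1$ is chosen all but $O(\log t)$ times. The three tail terms in the statement correspond to three distinct bad events, controlled by a union bound.

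First, I would show that state $1$ is visited at least $(p_{\min}-\eta)(t-1)$ times with high probability. Let $\mathcal{F}_s$ denote the natural filtration after $s$ steps. Because reaching state $1$ at step $s+1$ requires both avoiding abandonment and drawing reward $1$, uniformly in $\mathcal{F}_s$ we have $\prob(S_{s+1}=1\mid\mathcal{F}_s)\ge\mu(A_s)(1-q(S_s,1))\ge\mu(a_M)\min\{1-q(0,1),1-q(1,1)\}=p_{\min}$. Applying Azuma--Hoeffding to the $[-1,1]$-bounded submartingale $\sum_{s=1}^{t-1}(\mathbbm{1}\{S_{s+1}=1\}-p_{\min})$ then yields $\prob(N_t^1\le(p_{\min}-\eta)(t-1))\le\exp(-\eta^2(t-1)/2)$, where $N_t^1\coloneqq\sum_{s=2}^{t}\mathbbm{1}\{S_s=1\}$. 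This produces the third term in the bound.

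Next, on the event $\{N_t^1\ge(p_{\min}-\eta)(t-1)\}$, it suffices to show $\sum_{i\neq1}N_t(a_i)\le(p_{\min}-\eta)(t-1)/2$, since then $N_t(a_1)\ge N_t^1-\sum_{i\neq1}N_t(a_i)\ge(p_{\min}-\eta)(t-1)/2$. A pull of $a_i\neq a_1$ in state $1$ at step $s$ requires $\tilde{\mu}_s^1(a_i)\ge\tilde{\mu}_s^1(a_1)$, equivalently $\bar{\mu}_s(a_i)+w_s(a_i)\ge\bar{\mu}_s(a_1)+w_s(a_1)$ with $w_s(a)=\sqrt{(\log s+4\log\log s)/(2N_s(a))}$. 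Using $\gamma\in(0,\mu(a_1)-\mu(a_2))\subseteq(0,\mu(a_1)-\mu(a_i))$, I would decompose this event into: (a) $\bar{\mu}_s(a_1)\le\mu(a_1)-\gamma$; (b) $\bar{\mu}_s(a_i)\ge\mu(a_i)+\gamma$; or (c) neither, so that the confidence widths must overcome a positive gap, forcing $N_s(a_i)=O(\log s/\gamma^2)$.

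Events (a) and (b) are controlled by Hoeffding's inequality combined with a sample-size peeling. Summing the resulting geometric tail $\sum_{n\ge c_2(t-1)}e^{-2\gamma^2 n}\approx e^{-2\gamma^2 c_2(t-1)}/(2\gamma^2)$ and taking a union over the $M-1$ suboptimal arms yields the first term $\frac{M-1}{2\gamma^2\exp(2\gamma^2 c_2(t-1)-4\gamma^2)}$; here $c_2(t-1)$ serves as an inductively obtained linear lower bound on $N_s(a_1)$ that holds on the good event for all smaller $s$. Event (c) is controlled by a refined UCB peeling: the $+4\log\log s$ correction in the bonus is precisely what replaces the $1/t$ tail of vanilla UCB by a $1/(t[\log t]^2)$ tail, producing the second term. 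For $t\ge T_1$ large enough, the $O(\log t)$ contribution from (c) sits below $(p_{\min}-\eta)(t-1)/2$, and the three pieces combine into the claimed inequality. I expect the main technical obstacle to be the inductive handling of the $c_2(t-1)$ floor on $N_s(a_1)$ inside cases (a) and (b), which must avoid a circular appeal to the very lemma being proved; this is accomplished by peeling jointly over $s$ and $N_s$.
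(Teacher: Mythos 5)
Your first step (Azuma--Hoeffding on the state-$1$ occupancy, yielding the $\exp(-\eta^2(t-1)/2)$ term) is exactly the paper's argument. The rest of the plan has a genuine gap in how the other two terms are produced. Your per-step decomposition into (a) $\bar{\mu}_s(a_1)\le\mu(a_1)-\gamma$, (b) $\bar{\mu}_s(a_i)\ge\mu(a_i)+\gamma$, (c) neither, cannot deliver the claimed tails as set up. For event (a) you have no lower bound on $N_s(a_1)$, so a Hoeffding union over sample sizes gives $\sum_{n\ge1}e^{-2\gamma^2 n}=O(1)$, not a vanishing tail; the ``inductively obtained linear lower bound on $N_s(a_1)$'' you invoke to start the geometric sum at $n\ge c_2(t-1)$ is precisely the conclusion of the lemma, and the paper never uses (or needs) such a floor on $N_s(a_1)$ anywhere in this proof. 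Moreover, your event (c) is a deterministic counting statement ($N_s(a_i)=O(\log s/\gamma^2)$), not a probability tail, so it cannot be the source of the $c_3/(c_2(t-1)[\log(c_2(t-1))]^2)$ term; and bounding the \emph{number} of (a)/(b) steps via its expectation and Markov would only give an $O(1/t)$ tail, which is not summable in $t$ and would break the downstream use in Lemma~\ref{lemma:sd-cb-2}.

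The missing device is a stopping-time reduction. On the event that state $1$ is visited often but $N_t^1(a_1)$ is small, some suboptimal arm $a_j$ must have $N_t^1(a_j)>\frac{(p_{\min}-\eta)(t-1)}{2(M-1)}$; let $\tau_t$ be the step at which $a_j$ is pulled in state $1$ for the $\lceil\frac{(p_{\min}-\eta)(t-1)}{2(M-1)}\rceil$-th time. At that single step the algorithm forces $\tilde{\mu}^1_{\tau_t}(a_j)\ge\tilde{\mu}^1_{\tau_t}(a_1)$, and one splits only on whether $\tilde{\mu}^1_{\tau_t}(a_1)\ge\mu(a_1)$. If yes, then since $N_{\tau_t}(a_j)\ge c_2(t-1)-1$ \emph{by construction}, the bonus of $a_j$ is smaller than $(\mu(a_1)-\mu(a_2))-\gamma$ for $t\ge T_1$, forcing $\bar{\mu}_{\tau_t}(a_j)-\mu(a_j)\ge\gamma$; Hoeffding plus a union over sample sizes $n\ge c_2(t-1)-1$ gives the exponential first term (your case (c) is vacuous here because the sample size is too large for the widths to bridge the gap). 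If no, the underestimation of $a_1$ at the \emph{random} time $\tau_t\ge L_t$ is handled by a self-normalized martingale peeling bound (the paper's Lemma~\ref{lemma:sd-cb-prob-tilde-mu}, adapting Theorem~10 of~\cite{garivier2011kl}), which is uniform over $N_{\tau_t}(a_1)$ and is where the $4\log\log t$ correction produces the $1/(t(\log t)^2)$ second term. Without this reduction to the single time $\tau_t$, your plan either runs into the circularity you yourself flag or yields tails too weak for the lemma's intended use.
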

Lemma~\ref{lemma:sd-cb-1} shows that when $t$ is large enough, the number of optimal pulls scales linearly with $t$ with high probability. The key idea of the proof of Lemma~\ref{lemma:sd-cb-1} is that when $N_t(a_1)$ is small, $a_1$ will be pulled with high probability. Lemma~\ref{lemma:sd-cb-1} looks similar to Lemma~2 in~\cite{wu2018adaptive} but we have a tighter bound which requires extra efforts in the proof. Note that the proof of this result only utilizes the optimistic exploration in state $1$. See Appendix~\ref{app:lemma:sd-cb-1} for a complete proof. Lemma~\ref{lemma:sd-cb-1} is important since we can show that the confidence bound around the optimal arm is tight enough for large $t$ based on this result. Then we can bound the regret induced by pulling suboptimal arms in state $0$ by a constant using pessimistic estimate (lower confidence bound), which is shown by Lemma~\ref{lemma:sd-cb-2}:
\begin{lemma}\label{lemma:sd-cb-2}
    Let all the assumptions in Theorem~\ref{theorem:1} hold. Consider the ULCB algorithm with $c_0=-1$, $c_1=1$, and $c=4$. The regret induced in state $0$ is bounded by
    \begin{align}\label{equ:regret-induced-in-state-0-ub}
        \expt\left[ \sum_{t=1}^{T(K,\pi)} \sum_{i=2}^{M} \mathbbm{1}\{S_t = 0, A_t = a_i\} \left[ V^*(0) - Q^*(0, a_i)\right]\right]  \le  c_4 \sum_{i=2}^{M} \left[ V^*(0) - Q^*(0, a_i)\right],
    \end{align}
    where $c_4$ is a constant which depends only on $M$, $\mu(a_1)$, $\mu(a_2)$, $p_{\mathrm{min}}$, $\eta$, and $\gamma$.
\end{lemma}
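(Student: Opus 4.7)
The plan is to reduce~\eqref{equ:regret-induced-in-state-0-ub} to showing that, for each suboptimal arm $a_i$, the expected number of times $a_i$ is pulled in state $0$, namely $\expt\bigl[\sum_{t=1}^{T(K,\pi)} \mathbbm{1}\{S_t = 0, A_t = a_i\}\bigr]$, is bounded by a constant independent of $K$; multiplying by $V^*(0) - Q^*(0, a_i)$ and summing in $i$ then yields~\eqref{equ:regret-induced-in-state-0-ub}. Fix $a_i \neq a_1$ and pick any $\xi$ with $\mu(a_i) < \xi < \mu(a_1)$ (possible by the standing assumption $\mu(a_1) > \mu(a_2) \ge \mu(a_i)$). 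When $S_t = 0$ and $A_t = a_i$, the ULCB argmax rule forces $\tilde{\mu}_t^0(a_i) \ge \tilde{\mu}_t^0(a_1)$, so at least one of $\{\tilde{\mu}_t^0(a_i) \ge \xi\}$ or $\{\tilde{\mu}_t^0(a_1) \le \xi\}$ must hold. Hence
\[
\mathbbm{1}\{S_t = 0, A_t = a_i\} \le \mathbbm{1}\{\tilde{\mu}_t^0(a_i) \ge \xi, A_t = a_i\} + \mathbbm{1}\{\tilde{\mu}_t^0(a_1) \le \xi\},
\]
and it suffices to bound the expected sum of each indicator.

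For the first indicator the pessimistic choice $c_0 = -1$ pays off: since $\tilde{\mu}_t^0(a_i) \le \bar{\mu}_t(a_i)$, the event implies $\bar{\mu}_t(a_i) \ge \xi > \mu(a_i)$. Counting by the pull index of $a_i$, so the empirical mean is a sum of fresh i.i.d.\ Bernoulli($\mu(a_i)$) samples, Hoeffding's inequality gives $\expt\bigl[\sum_{t \ge 1} \mathbbm{1}\{A_t = a_i,\, \bar{\mu}_t(a_i) \ge \xi\}\bigr] \le 1 + \sum_{n \ge 1} e^{-2n(\xi - \mu(a_i))^2}$, which is a constant. For the second indicator I would split on the event $\mathcal{E}_t \coloneqq \{N_t(a_1) \ge (p_{\min} - \eta)(t-1)/2\}$. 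Lemma~\ref{lemma:sd-cb-1} bounds $\prob(\mathcal{E}_t^c)$ by the sum of two exponentially decaying terms in $t$ and a $1/(t [\log t]^2)$ term, all summable over $t \ge T_1$. On $\mathcal{E}_t$ the bonus $\sqrt{(\log t + 4\log\log t)/(2N_t(a_1))}$ is $O(\sqrt{\log t/t})$ and hence strictly less than $(\mu(a_1) - \xi)/2$ for all $t$ beyond some constant $T_2$; the event $\tilde{\mu}_t^0(a_1) \le \xi$ then forces $\bar{\mu}_t(a_1) \le \mu(a_1) - (\mu(a_1) - \xi)/2$, a deviation whose probability, conditionally on $N_t(a_1)$, is $\exp(-\Omega(t))$ by Hoeffding and is summable in $t$.

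The principal obstacle is the second indicator: one cannot apply Hoeffding directly to $\bar{\mu}_t(a_1)$ because $N_t(a_1)$ is random and adaptive, so Lemma~\ref{lemma:sd-cb-1} is essential to first pin $N_t(a_1)$ to a linear lower bound with summable failure probability, after which Hoeffding is applied conditionally on $N_t(a_1)$. Adding the at-most-$M$ pulls contributed by the Round-Robin initialization and the at-most-$\max(T_1, T_2)$ early steps yields $\expt\bigl[\sum_{t=1}^{T(K,\pi)} \mathbbm{1}\{S_t = 0, A_t = a_i\}\bigr] \le c'_i$ for a constant $c'_i$ depending only on $M, \mu(a_1), \mu(a_2), p_{\min}, \eta, \gamma$; setting $c_4 \coloneqq \max_i c'_i$ (or summing suitably and absorbing into the gap-weighted sum) completes the argument.
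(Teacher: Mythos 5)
Your proposal is correct and follows the same essential strategy as the paper's proof: both reduce~\eqref{equ:regret-induced-in-state-0-ub} to showing each suboptimal arm is pulled only a constant expected number of times in state $0$, and both hinge on Lemma~\ref{lemma:sd-cb-1} to pin $N_t(a_1)$ to a linear lower bound so that the state-$0$ index of $a_1$ concentrates near $\mu(a_1)$ and dominates the index of every $a_i$ for all $t$ beyond a constant. The differences are in the concentration bookkeeping. The paper splits on the events $\{\mu(a_i)\ge\tilde{\mu}_t^0(a_i)\}$ and $\{\tilde{\mu}_t^1(a_1)\ge\mu(a_1)\}$, bounds both complements for each fixed $t$ by the self-normalized deviation bound (Theorem~10 of~\cite{garivier2011kl}), and then shows that on the intersection the argmax rule cannot select $a_i$ in state $0$ once $t\ge T_2$; you instead introduce an intermediate threshold $\xi$ and handle the suboptimal arm with a more elementary count-by-pull-index argument plus Hoeffding, exploiting pessimism via $\tilde{\mu}_t^0(a_i)\le\bar{\mu}_t(a_i)$, which avoids the self-normalized bound for $a_i$ entirely. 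Two small points to tighten, neither of which is a real gap: (i) for the optimal arm, ``Hoeffding conditionally on $N_t(a_1)$'' should be made precise as a union bound over the possible values $n\ge (p_{\min}-\eta)(t-1)/2$ of $N_t(a_1)$, exactly as the paper does in~\eqref{equ:nta1-fourth-ineq}, since conditioning alone does not justify the i.i.d.\ tail bound under adaptive sampling; and (ii) to obtain a constant with the stated dependence you should fix $\xi$ as a function of $\mu(a_1)$ and $\mu(a_2)$ only (e.g.\ $\xi=(\mu(a_1)+\mu(a_2))/2$, noting $\mu(a_i)\le\mu(a_2)$), since choosing $\xi$ per arm would make your $c_i'$ depend on $\mu(a_i)$.
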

The proof idea of Lemma~\ref{lemma:sd-cb-2} is as follows. We first show that  $\mu(a_i)\ge \tilde{\mu}_t^0(a_i)$ with high probability. And based on Lemma~\ref{lemma:sd-cb-1} we can show that $\tilde{\mu}_t^0(a_1)$ and $\mu(a_1)$ are close enough for large $t$. Hence, for large $t$, we have $\tilde{\mu}_t^0(a_1) \approx \mu(a_1) \ge \mu(a_i)\ge \tilde{\mu}_t^0(a_i)$ with high probability, which implies that $a_1$ will be pulled in state $0$ with high probability. Hence $\prob(S_t=0, A_t=a_i)$ is small enough so that we can bound~\eqref{equ:regret-induced-in-state-0-ub}. See Appendix~\ref{app:lemma:sd-cb-2} for a complete proof.

We then bound the regret induced in state $1$ by a term of order $\log K$ shown by Lemma~\ref{lemma:sd-cb-3}:
\begin{lemma}\label{lemma:sd-cb-3}
    Let all the assumptions in Theorem~\ref{theorem:1} hold. Consider the ULCB algorithm with $c_0=-1$, $c_1=1$, and $c=4$. For any $\epsilon>0$, the regret induced in state $1$ is bounded by
    \begin{align}\label{equ:regret-induced-in-state-1-ub}
        & \expt\left[ \sum_{t=1}^{T(K,\pi)} \sum_{i=2}^{M} \mathbbm{1}\{S_t = 1, A_t = a_i\} \left[ V^*(1) - Q^*(1, a_i)\right]\right] \nonumber\\
        \le & \sum_{i\neq 1} \frac{(1+\epsilon)\left[V^*(1) - Q^*(1,a_i)\right]}{2(\mu(a_1)-\mu(a_i))^2} \log K + o(\log K)
    \end{align}
\end{lemma}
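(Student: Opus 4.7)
The plan is to factor $V^*(1)-Q^*(1,a_i)$ out of the expectation and bound $\expt\bigl[\sum_{t=1}^{T(K,\pi)}\mathbbm{1}\{S_t=1,A_t=a_i\}\bigr]$ by the classical UCB peeling argument, adapted to the random horizon of MAB-A.

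First I would reduce the state-dependent indicator to an index inequality. Since in state $1$ the ULCB algorithm chooses $A_t\in\operatorname{argmax}_{a}\tilde{\mu}_t^1(a)$, the event $\{S_t=1,A_t=a_i\}$ forces $\tilde{\mu}_t^1(a_i)\ge\tilde{\mu}_t^1(a_1)$, so
\begin{equation*}
\sum_{t=1}^{T(K,\pi)}\mathbbm{1}\{S_t=1,A_t=a_i\}\le \sum_{t=1}^{T(K,\pi)}\mathbbm{1}\bigl\{A_t=a_i,\;\tilde{\mu}_t^1(a_i)\ge\tilde{\mu}_t^1(a_1)\bigr\}.
\end{equation*}
The state-$1$ bookkeeping has now disappeared, and what remains is a standard UCB-style analysis against the bonus $\sqrt{(\log t+c\log\log t)/(2N_t(a))}$ with coefficient $c_1=1$.

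Second I would control the random horizon. Under Assumption~\ref{assum:1} together with $\mu(a_i)<1$, $q(0,0)>0$, $q(0,1)<1$, $q(1,1)<1$, within any two consecutive steps the episode ends with probability at least $(1-\mu(a_1))^2 q(0,0)>0$ regardless of the actions chosen. Hence $I_k$ is stochastically dominated by a geometric random variable uniformly over every policy, so $\expt[T(K,\pi)]\le CK$ and a Chernoff-type bound gives $\prob(T(K,\pi)>T^*)\le K^{-r}$ at $T^*=\tilde{C}K$ for any fixed $r>0$. On the event $\{T(K,\pi)\le T^*\}$ I upper-bound the sum by $\sum_{t=1}^{T^*}\mathbbm{1}\{A_t=a_i,\tilde{\mu}_t^1(a_i)\ge\tilde{\mu}_t^1(a_1)\}$; on the rare event $\{T(K,\pi)>T^*\}$ I use the crude bound $\sum_t\mathbbm{1}\{\cdot\}\le T(K,\pi)$ together with Cauchy--Schwarz and $\expt[T(K,\pi)^2]=O(K^2)$ to make its contribution $o(1)$.

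Third I would run the peeling argument at the deterministic horizon $T^*$. Set $L_i\coloneqq \lceil(1+\epsilon)\log T^*/(2(\mu(a_1)-\mu(a_i))^2)\rceil$; for any $t$ with $N_t(a_i)\ge L_i$ the bonus is strictly smaller than $(\mu(a_1)-\mu(a_i))/\sqrt{1+\epsilon/2}$, so the index inequality $\tilde{\mu}_t^1(a_i)\ge\tilde{\mu}_t^1(a_1)$ forces either an atypical empirical mean of $a_i$ (handled by Hoeffding, summed over sample sizes $n\ge L_i$) or an atypical downward deviation of $\tilde{\mu}_t^1(a_1)$ below $\mu(a_1)$ (handled by a uniform-in-$N_t(a_1)$ confidence bound together with Lemma~\ref{lemma:sd-cb-1}, which guarantees $N_t(a_1)$ grows linearly in $t$). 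The resulting tail probabilities sum to $O(1)$, yielding $\expt\bigl[\sum_{t=1}^{T^*}\mathbbm{1}\{A_t=a_i,\tilde{\mu}_t^1(a_i)\ge\tilde{\mu}_t^1(a_1)\}\bigr]\le L_i+O(1)=\frac{(1+\epsilon)\log K}{2(\mu(a_1)-\mu(a_i))^2}+o(\log K)$ since $\log T^*=\log K+O(1)$. Multiplying by $V^*(1)-Q^*(1,a_i)$ and summing over $i\ne 1$ gives the claim.

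The main obstacle is the passage from the random horizon $T(K,\pi)$ to a deterministic $T^*=O(K)$ while preserving the sharp constant $\tfrac{1}{2(\mu(a_1)-\mu(a_i))^2}$. A cruder horizon bound such as $T^*=K^{1+\alpha}$ would inflate $\log T^*$ to $(1+\alpha)\log K$ and destroy the sharpness unless $\alpha$ can be sent to zero. This is precisely where the uniform geometric tail of episode lengths---which holds under every (possibly data-dependent) policy because of the combined assumptions on $q$ and $\mu$---becomes the indispensable structural input; without it, one would at best recover an $O(\log K\cdot\log\log K)$ bound with a non-sharp constant.
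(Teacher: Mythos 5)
Your proposal is correct in its overall architecture and matches the paper's proof in its core: reduce the state-$1$ indicator to the index inequality $\tilde{\mu}_t^1(a_i)\ge\tilde{\mu}_t^1(a_1)$, split on whether $N_t(a_i)$ has exceeded a threshold of order $\frac{(1+\epsilon)\log(\cdot)}{2(\mu(a_1)-\mu(a_i))^2}$, and control the two resulting bad events (the optimal arm's upper index falling below $\mu(a_1)$; the suboptimal empirical mean exceeding a level strictly above $\mu(a_i)$) by concentration. Where you genuinely diverge is the treatment of the random horizon. You truncate at a deterministic $T^*=\tilde{C}K$, justified by the uniform two-step termination probability $(1-\mu(a_1))^2 q(0,0)>0$ (a correct observation under the stated assumptions), plus a Chernoff bound and a Cauchy--Schwarz cleanup of the rare event $\{T(K,\pi)>T^*\}$. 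The paper never truncates: it keeps the threshold $B(T(K,\pi))$ random, observes that at most $B(T(K,\pi))$ pulls of $a_i$ can occur before $N_t(a_i)\ge B(T(K,\pi))$, and then bounds $\expt[B(T(K,\pi))]$ by Jensen's inequality applied to the concave functions $\log$ and $\log\log$, using only the first-moment bound $\expt[T(K,\pi)]\le c_5K$ from Lemma~\ref{lemma:sd-cb-bound-TKpi}. The paper's route is lighter (no tail or second-moment bounds needed); yours is more standard and both preserve the sharp constant because $\log(O(K))=\log K+O(1)$.

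One step you must make precise, because as written it is the only place the argument could actually fail: the bound on $\prob(\tilde{\mu}_t^1(a_1)<\mu(a_1))$. A naive union bound over the $t-1$ possible values of $N_t(a_1)$ gives only $O(1/(\log t)^4)$ per time step, and summing this up to $T^*=O(K)$ yields $O(K/(\log K)^4)$, which is polynomial in $K$ and destroys the result. Lemma~\ref{lemma:sd-cb-1} does not repair this: restricting $N_t(a_1)$ to $[ct,t]$ still leaves $\Theta(t)$ values to union over, and the per-value Hoeffding bound $\exp(-(\log t+4\log\log t))=1/(t(\log t)^4)$ does not improve with the sample size. What is needed is the self-normalized peeling concentration bound (Theorem~10 in~\cite{garivier2011kl}, which the paper invokes verbatim), giving $\prob(\tilde{\mu}_t^1(a_1)<\mu(a_1))\le 6e/(t(\log t)^2)$, summable over all $t$. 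If your phrase ``uniform-in-$N_t(a_1)$ confidence bound'' denotes this peeling bound, the step is fine and the appeal to Lemma~\ref{lemma:sd-cb-1} is simply unnecessary here; if it denotes a union bound, the step fails.
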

In the proof of Lemma~\ref{lemma:sd-cb-3}, we first use techniques from~\cite{garivier2011kl} since ULCB uses upper confidence bound in state $1$, and then we bound the term $\expt[\log (T(K,\pi))]$ to get a bound of order $\log K$. See Appendix~\ref{app:lemma:sd-cb-3} for a complete proof.

Combining~\eqref{equ:regret-decomp} with Lemma~\ref{lemma:sd-cb-2} and Lemma~\ref{lemma:sd-cb-3}, Theorem~\ref{theorem:1} is proved.

\section{Extension to a general-state setting}
\label{sec:extension}

In this section, we extend our results to the general-state setting. We first present an MAB-A model with continuous state space, and then verify that the optimal policy is still always pulling the optimal arm. Next, we propose two types of algorithms and analyze the regret. We obtain the same form of regret lower bound for the general-state setting. We also obtain regret upper bounds for DISC-ULCB and DISC-KL-ULCB algorithms.

\subsection{Model and the optimal policy}
\label{sec:cont-ext-model}
Define the continuous state space by $\mathcal{S}=[0,1]\cup\{g\}$.
Define the state $S_{k,h}$ at step $h$ of episode $k$ as an exponential moving average of previous rewards in episode $k$, i.e.,
\begin{align}
    S_{k,h} \coloneqq  (1-\theta) S_{k,h-1} + \theta R_{k,h-1} 
\end{align}
for any $k\ge 1$ and $h\ge 2$, where $\theta\in(0,1)$ is a constant forgetting factor, which means how much the user forgets their previous experience. $S_{k,1}\in[0,1]$ is sampled from an arbitrary distribution. The abandonment probability at step $h$ of episode $k$ is a function of the next state $S_{k, h+1}$, denoted by $q(S_{k,h+1})$. 
\begin{assumption}\label{assum:2}
Assume that $0 < q(s_1)\le q(s_2)$ if $s_1\ge s_2$ for any $s_1,s_2\in[0,1]$, and $\mu(a_M) \le \mu(a_{M-1}) \le ... \le \mu(a_2) \le \mu(a_1)$.
\end{assumption}
The assumptions on $q(\cdot)$ is reasonable since the abandonment probability becomes larger when the user's experience becomes worse. The positivity assumption on $q$ ensures that all policies are proper. Without loss of generality, we let $\mu(a_M) \le \mu(a_{M-1}) \le ... \le \mu(a_2) \le \mu(a_1)$.

Define the genie-aided (model-based) optimal policy $\pi^*$ the same way as in the original setting. Then we have Lemma~\ref{lemma:cont-1}. The proof can be found in Appendix~\ref{app:proof-lemma-optimal-policy-general}.
\begin{lemma}
\label{lemma:cont-1}
Let Assumption~\ref{assum:2} hold. Then the genie-aided optimal policy $\pi^*$ is always pulling arm $a_1$.
\end{lemma}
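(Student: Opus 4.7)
The plan is to establish, via a Bellman-equation and value-iteration argument, that the optimal value function $V^*$ is non-decreasing on $[0,1]$, and then to use this monotonicity together with the monotonicity of $1-q(\cdot)$ to show that the one-step Q-function is non-decreasing in $\mu(a)$, so $a_1$ maximizes it at every state. Since $q(s)>0$ on $[0,1]$, every stationary policy is proper in the SSP sense, so standard SSP theory yields a bounded $V^*$ satisfying
$$
V^*(s) \;=\; \max_{a\in\mathcal{A}} \Bigl\{ \mu(a) + \mu(a)\bigl(1-q(s_1)\bigr) V^*(s_1) + (1-\mu(a))\bigl(1-q(s_0)\bigr) V^*(s_0) \Bigr\},
$$
where $s_0 \coloneqq (1-\theta)s$, $s_1 \coloneqq (1-\theta)s + \theta$, and $V^*(g)=0$. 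A boundedness guarantee comes either from bounding $q$ below by its infimum on $[0,1]$ (which under a mild continuity assumption is strictly positive) or from comparing $V^*(s)$ with the expected cumulative reward of the single-arm policy that always pulls $a_1$.

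To establish monotonicity of $V^*$, I would study the Bellman operator $T$ defined by the right-hand side above. For each fixed arm $a$, the map
$$
s\;\mapsto\; \mu(a) + \mu(a)\bigl(1-q(s_1)\bigr)f(s_1) + (1-\mu(a))\bigl(1-q(s_0)\bigr) f(s_0)
$$
is non-decreasing in $s$ whenever $f\ge 0$ is non-decreasing, since both $s\mapsto s_0$ and $s\mapsto s_1$ are non-decreasing, $1-q(\cdot)$ is non-decreasing by Assumption~\ref{assum:2}, and all factors are non-negative. Taking the pointwise maximum over $a$ preserves this property, so $T$ maps non-negative non-decreasing functions to themselves. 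Starting from $V_0\equiv 0$ and iterating, each $V_k = T V_{k-1}$ is non-decreasing, and by standard SSP convergence $V_k \to V^*$ pointwise, hence $V^*$ is non-decreasing on $[0,1]$.

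With monotonicity in hand, fix any $s\neq g$ and rewrite the expression inside the max as an affine function of $\mu(a)$:
$$
(1-q(s_0))V^*(s_0) \;+\; \mu(a)\bigl[\,1 + (1-q(s_1))V^*(s_1) - (1-q(s_0))V^*(s_0)\,\bigr].
$$
Because $s_1>s_0$, monotonicity of $V^*$ combined with monotonicity of $1-q(\cdot)$ gives $(1-q(s_1))V^*(s_1) \ge (1-q(s_0))V^*(s_0)$, so the bracketed slope is at least $1>0$. The expression is therefore non-decreasing in $\mu(a)$, and Assumption~\ref{assum:2} supplies $\mu(a_1)\ge \mu(a_i)$ for all $i$, so $a_1$ attains the maximum at every state, proving that $\pi^*\equiv a_1$.

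The main obstacle I anticipate is purely technical: justifying the Bellman formulation and the pointwise convergence of value iteration over a continuous state space. I expect this to go through along the same lines as for the two-state model in Lemma~\ref{lemma:1}, because the only structural ingredients being used are a uniform positivity of the abandonment probability and the order-preserving nature of the state dynamics. Once the Bellman equation is in place, the monotonicity of $V^*$ and the concluding affine comparison are the clean and crucial steps.
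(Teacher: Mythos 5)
Your proposal is correct and follows essentially the same route as the paper: both rest on the SSP properness argument, the Bellman equation, monotonicity of $V^*$ on $[0,1]$, and the observation that $(1-q(s_1))V^*(s_1)\ge(1-q(s_0))V^*(s_0)$ makes $Q^*(s,\cdot)$ affine and increasing in $\mu(a)$, so $a_1$ attains the maximum everywhere. The only difference is how monotonicity of $V^*$ is obtained --- you propagate it through value iteration (the Bellman operator preserves the cone of non-negative non-decreasing functions), whereas the paper derives a recursive inequality for $V^*(s_1)-V^*(s_2)$ directly from the fixed-point equation and iterates it using $1-q(s)\le 1-q(1)<1$; your version is, if anything, slightly easier to make rigorous, and note that no continuity assumption on $q$ is needed since monotonicity of $q$ already gives $\inf_{s\in[0,1]} q(s)=q(1)>0$.
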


\subsection{Algorithms and regret analysis}
\label{sec:cont-ext-alg}

We propose DISC-ULCB and DISC-KL-ULCB algorithms, which first discretize the state space $[0,1]$ into $n$ bins, $[0,\frac{1}{n})$, $[\frac{1}{n},\frac{2}{n})$,...,$[\frac{n-1}{n}, 1]$, and then use the ULCB or KL-ULCB algorithm, where we view any state in $[\frac{n-1}{n}, 1]$ as state $1$ and any state in the other bins as state $0$. 

We next analyze the regret of these two algorithms. Following the same way as the regret decomposition in~\eqref{equ:regret-decomp}, we have
\begin{align}
    \expt[\mathrm{Reg}_{\pi}(K)] = & \expt\left[\sum_{t=1}^{T(K,\pi)} \sum_{i=2}^{M} 
    \sum_{m=1}^{n-1} \mathbbm{1}\left\{S_t \in \left[\frac{m-1}{n}, \frac{m}{n}\right), A_t = a_i\right\}
    \left[ V^*(S_t) - Q^*(S_t, a_i)\right]\right. \nonumber\\
    &~~~~~~~~~~~~~~~~~~~~~~\left. + \mathbbm{1}\left\{S_t \in \left[\frac{n-1}{n}, 1\right], A_t = a_i\right\}
    \left[ V^*(S_t) - Q^*(S_t, a_i)\right] \right]
\end{align}
for any integer $n\ge 2$, where $V^*(S_t) - Q^*(S_t, a_i)$ can be interpreted as the regret induced by pulling arm $a_i$ in state $S_t$. We consider the case where $V^*(s_1) - Q^*(s_1, a) \le V^*(s_2) - Q^*(s_2, a)$ for any $a\in\mathcal{A}, s_1,s_2\in[0,1], s_1\ge s_2$. Some examples can be found in Section~\ref{app:cont-state-gap-function}. In this case, we can obtain an upper bound
\begin{align}\label{equ:cont-state-upper-bound}
    \expt[\mathrm{Reg}_{\pi}(K)] \le & \expt\left[\sum_{t=1}^{T(K,\pi)} \sum_{i=2}^{M} 
    \sum_{m=1}^{n-1} \mathbbm{1}\left\{S_t \in \left[\frac{m-1}{n}, \frac{m}{n}\right), A_t = a_i\right\}
    \left[ V^*\left(\frac{m-1}{n}\right) - Q^*\left(\frac{m-1}{n}, a_i\right)\right]\right. \nonumber\\
    &~~~~~~~~~~~~~~~~~~~~~~\Biggl. + \mathbbm{1}\left\{S_t \in \left[\frac{n-1}{n}, 1\right], A_t = a_i\right\}
    \left[ V^*\left(\frac{n-1}{n}\right) - Q^*\left(\frac{n-1}{n}, a_i\right)\right] \Biggr],
\end{align}
and a lower bound
\begin{align}\label{equ:cont-state-lower-bound}
    \expt[\mathrm{Reg}_{\pi}(K)] \ge & \expt\left[ \sum_{t=1}^{T(K,\pi)} \sum_{i=2}^{M} \mathbbm{1}\{A_t = a_i\} \right] \left[ V^*(1) - Q^*(1, a_i)\right].
\end{align}
From~\eqref{equ:cont-state-lower-bound} we can obtain the same regret lower bound as Theorem~\ref{theorem:3} by following the same proof. For the upper bounds for DISC-ULCB and DISC-KL-ULCB algorithms, we have the following theorem.
\begin{theorem}\label{theorem:cont-state}
Let Assumption~\ref{assum:2} hold. Let $n\ge 2$ denote the number of bins for DISC-ULCB or DISC-KL-ULCB algorithms. Assume that $\mu(a_1)>\mu(a_2)$, $\mu(a_M)>0$, $q(s)<1~\forall s\in[\frac{n-1}{n}, 1]$, and 
\begin{align}
    V^*(s_1) - Q^*(s_1, a) \le V^*(s_2) - Q^*(s_2, a)
\end{align}
for any $a\in\mathcal{A}, s_1,s_2\in[0,1], s_1\ge s_2$. Then using DISC-ULCB algorithm with $c_0=-1$, $c_1=1$, and $c=4$, we have
\begin{align}
    \limsup_{K\rightarrow \infty} \frac{\expt[\mathrm{Reg}_{\pi}(K)]}{\log K} \le \sum_{i\neq 1} \frac{V^*\left(\frac{n-1}{n}\right) - Q^*\left(\frac{n-1}{n},a_i\right)}{2(\mu(a_1)-\mu(a_i))^2}.
\end{align}
Using DISC-KL-ULCB algorithm with $c_0=c_1=1$, and $c=4$, we have
\begin{align}\label{equ:disc-kl-ulcb-upper-bound}
    \limsup_{K\rightarrow \infty} \frac{\expt[\mathrm{Reg}_{\pi}(K)]}{\log K} \le \sum_{i\neq 1} \frac{V^*\left(\frac{n-1}{n}\right) - Q^*\left(\frac{n-1}{n},a_i\right)}{\mathrm{kl}(\mu(a_i),\mu(a_1))}.
\end{align}
\end{theorem}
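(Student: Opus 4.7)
\textbf{Proof Proposal for Theorem~\ref{theorem:cont-state}.}

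The plan is to reduce the continuous-state analysis to the binary-state analysis already carried out for Theorems~\ref{theorem:1} and~\ref{theorem:2}, by exploiting the decomposition~\eqref{equ:cont-state-upper-bound}. Under DISC-ULCB (respectively DISC-KL-ULCB), the algorithm treats every state in $[\frac{n-1}{n},1]$ as ``state $1$'' and every state in $[0,\frac{n-1}{n})$ as ``state $0$,'' and uses exactly the ULCB (respectively KL-ULCB) update on this binarized state. Combined with the gap-monotonicity hypothesis $V^*(s_1)-Q^*(s_1,a)\le V^*(s_2)-Q^*(s_2,a)$ for $s_1\ge s_2$, the decomposition~\eqref{equ:cont-state-upper-bound} bounds the per-pull regret in bin $[\frac{m-1}{n},\frac{m}{n})$ by $V^*(\frac{m-1}{n})-Q^*(\frac{m-1}{n},a_i)$, and in the top bin by $V^*(\frac{n-1}{n})-Q^*(\frac{n-1}{n},a_i)$. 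Hence it suffices to show: (i) the cumulative number of suboptimal pulls taken while $S_t\in[0,\frac{n-1}{n})$ has a constant expectation, and (ii) the cumulative number of suboptimal pulls taken while $S_t\in[\frac{n-1}{n},1]$ is at most $(1+\epsilon)\log K/[2(\mu(a_1)-\mu(a_i))^2]$ (or $(1+\epsilon)\log K/\mathrm{kl}(\mu(a_i),\mu(a_1))$ for DISC-KL-ULCB) up to lower-order terms.

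For both (i) and (ii) the workflow mirrors Lemmas~\ref{lemma:sd-cb-1}--\ref{lemma:sd-cb-3}. First I would prove the analog of Lemma~\ref{lemma:sd-cb-1}, namely that $N_t(a_1)$ grows linearly in $t$ with high probability. The crux is to derive a continuous-state analog of $p_{\min}$, i.e., a strictly positive constant $p_{\min}^{(n)}$ lower-bounding the one-step probability that an episode reaches and remains in the top bin. Since $S_{k,h}$ is an exponential moving average with factor $\theta\in(0,1)$, starting from any state $s_0\in[0,1]$, a deterministic number of consecutive reward-$1$ outcomes $L_n := \lceil \log n / \log(1/(1-\theta)) \rceil$ suffices to drive $S_{k,h}$ above $\frac{n-1}{n}$; each such outcome occurs with probability at least $\mu(a_M)$, and no abandonment happens with probability at least $\min_{s\in[0,1]}(1-q(s))>0$ by Assumption~\ref{assum:2} together with $q(s)<1$ on the top bin. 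Concatenating these geometric events gives the required positive lower bound $p_{\min}^{(n)}$, after which the proof of Lemma~\ref{lemma:sd-cb-1} transfers verbatim, noting that the Hoeffding/KL concentration on $\bar\mu_t(a)$ is unchanged since rewards remain Bernoulli.

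Once the linear-scaling of $N_t(a_1)$ is in place, the analog of Lemma~\ref{lemma:sd-cb-2} follows by the same pessimism argument: for $t$ large, $\tilde\mu_t^0(a_1)\approx\mu(a_1)\ge\mu(a_i)\ge\tilde\mu_t^0(a_i)$ with probability $1-O(1/(t\log^2 t))$, so the expected number of suboptimal pulls made in the non-top bins is finite, giving step (i) with a constant bound multiplied by $\sum_{i\ne 1}[V^*(\frac{m-1}{n})-Q^*(\frac{m-1}{n},a_i)]$ which is absorbed into $o(\log K)$. Step (ii) is then the standard optimistic-index analysis: in the top bin we apply the Garivier--Capp\'e argument from~\cite{garivier2011kl} as in Lemma~\ref{lemma:sd-cb-3}, replacing the Hoeffding bonus by the KL bonus for DISC-KL-ULCB. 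The substitution $\log t\to \log T(K,\pi)$ is controlled by the same expected-horizon bound $\expt[\log T(K,\pi)]=\log K+o(\log K)$ used in the binary-state proof, since the episode length has a geometric-type tail driven by the strictly positive top-bin abandonment probability (together with $q>0$ off the top bin).

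The main obstacle will be step (i), specifically establishing the continuous-state analog of $p_{\min}$: the exponential moving average introduces memory, so the ``positive fraction of time in the top bin'' cannot be read off a single-step Markov chain on $\{0,1\}$ as before. I expect to handle this by partitioning time into blocks of length $L_n$ and lower-bounding, for each block, the probability that the block both ends inside the top bin and has all its reward realizations equal to $1$, uniformly over the block's starting state; a coupling with an i.i.d.\ sequence of such block-indicators then yields the linear growth of the time spent in $[\frac{n-1}{n},1]$, which is the sole ingredient needed to reuse Lemma~\ref{lemma:sd-cb-1}'s conclusion and close the argument.
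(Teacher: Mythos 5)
Your proposal follows essentially the route the paper intends: the paper omits the proof of Theorem~\ref{theorem:cont-state}, stating only that it follows from the decomposition~\eqref{equ:cont-state-upper-bound} together with the method of Theorems~\ref{theorem:1} and~\ref{theorem:2}, and that is exactly what you do, with the block argument for the top-bin occupancy being the one genuinely new ingredient you correctly identify and supply (your computation $1-(1-\theta)^{L_n}\ge \frac{n-1}{n}$ for $L_n=\lceil \log n/\log(1/(1-\theta))\rceil$ is right). One caveat: your justification that the block survives abandonment ``with probability at least $\min_{s\in[0,1]}(1-q(s))>0$'' does not follow from the stated hypotheses, since Assumption~\ref{assum:2} only gives $q>0$ and the theorem only assumes $q(s)<1$ on $[\frac{n-1}{n},1]$; what you actually need (and what your own argument uses, since after each reward-$1$ the state lies in $[\theta,1]$) is $q(\theta)<1$, which by monotonicity of $q$ is implied by the theorem's hypotheses only when $\theta\ge \frac{n-1}{n}$. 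This is a gap in the theorem's stated assumptions as much as in your write-up; you should either add $q(\theta)<1$ as a hypothesis or restrict the block's survival bound to the states $1-(1-\theta)^j$ actually visited and note where positivity is needed. The remaining steps (constant regret off the top bin via pessimism, the Garivier--Capp\'e analysis in the top bin, and $\expt[\log T(K,\pi)]=\log K+o(\log K)$ via the uniform geometric tail $q(s)\ge q(1)>0$) transfer as you describe.
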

Theorem~\ref{theorem:cont-state} shows that we have $O(\log K)$ upper bounds for DISC-ULCB and DISC-KL-ULCB. The proof is by~\eqref{equ:cont-state-upper-bound} and the same method as the proofs of Theorem~\ref{theorem:1} and Theorem~\ref{theorem:2}, and therefore is omitted.
For DISC-KL-ULCB, the asymptotic upper bound is nearly tight for large $n$. However, if $n$ is large, there is a very small fraction of time when the state of the system is in $[\frac{n-1}{n}, 1]$. It results in a very slow exploration since most of the exploration happens in $[\frac{n-1}{n}, 1]$. Hence, the regret might be large initially despite the fact that the asymptotic regret upper bound is near optimal.
To overcome this, we propose a second type of algorithms, CONT-ULCB and CONT-KL-ULCB.
For CONT-ULCB, we use indices $\tilde{\mu}_t^s(a)$ as follows:
\begin{align}
    \tilde{\mu}_t^{s}(a)=\bar{\mu}_{t}(a)+(2s - 1)\sqrt{\frac{\log t + c \log(\log t)}{2 N_t(a)}}.
\end{align}
Similarly, CONT-KL-ULCB uses KL divergence in the indices.
$\tilde{\mu}_t^{s}(a)$ changes gradually from lower confidence bound to upper confidence bound when $s$ changes from $0$ to $1$, which means that the algorithm changes from exploitation to exploration continuously, which therefore leads to more exploration at the beginning compared to DISC-ULCB and DISC-KL-ULCB. 

More details, proofs, and simulation results about this extension can be found in Appendix~\ref{app:extension-cont}.

\section{Simulation results}
\label{sec:simu}

In this section, we present simulation results for the performance of the proposed algorithms. In the simulation, we assume $S_{k,1}=1$ for simplicity. This is to say that the user assumes a class of items are good if the user has not yet seen the items. Let $M=2$, $\mu(a_1)=0.9$, and $\mu(a_2)=0.8$. Note that for all the algorithms in the simulation, we do not include the $\log \log $ terms (i.e., $c=0$) in the indices which are also omitted in~\cite{garivier2011kl}.
We simulated $2\times 10^4$ episodes with $10^7$ independent runs.
We set $c_1=1$, $c_0=-1$ for ULCB and $c_1=c_0=1$ for KL-ULCB.
For Figure~\ref{fig:simu-0-1-comp}, the $95\%$ confidence bounds are at most $\pm 8.73$. For Figure~\ref{fig:simu-0-1-comp-2}, the $95\%$ confidence bounds are at most $\pm 0.63$.
In terms of average cumulative regret, Figure~\ref{fig:simu-0-1-comp} and~\ref{fig:simu-0-1-comp-2} show that ULCB outperforms traditional UCB and that KL-ULCB outperforms traditional KL-UCB. Our algorithms have order-wise lower regrets than Q-learning~\cite{watkins1989learning} with $\epsilon$-greedy and Q-learning with UCB~\cite{yang2021q}. Note that the asymptotic upper bound (UB) and lower bound (LB) in the figures only consider the $\log K$ term in the regret and ignore the other lower order terms, so only the slopes matter.
Figure~\ref{fig:simu-0-1-ub-lb} and~\ref{fig:simu-0-1-ub-lb-2} plot the average cumulative regret over $K$ episodes divided by $\log K$. It can be seen that the curves go towards the asymptotic regret upper bound (UB) and the asymptotic lower bound (LB). These results confirm our theoretical results. See Appendix~\ref{app:add-simulations} for simulation parameters and additional simulation results. Simulations for the general-state setting can be found in Appendix~\ref{app:cont-state-simu}.

\begin{figure}[ht]
    \centering
    \subfigure[Comparison]
    {
        \includegraphics[width=0.48\textwidth]{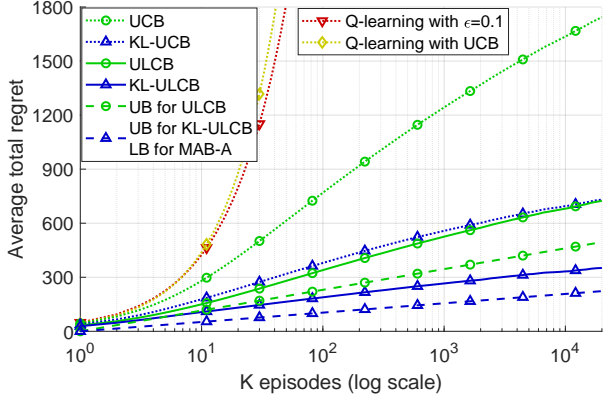}
        \label{fig:simu-0-1-comp}
    }
    \hfill
    \subfigure[UB and LB]
    {
        \includegraphics[width=0.48\textwidth]{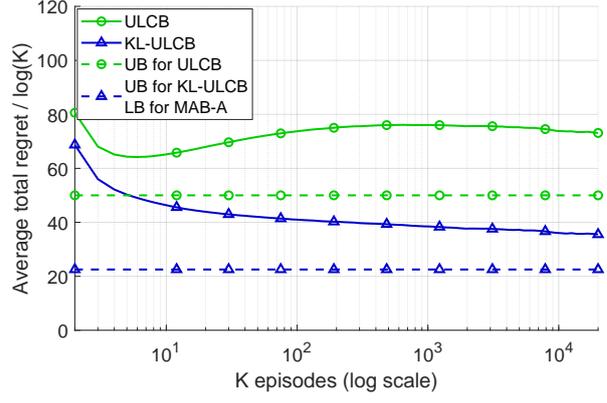}
        \label{fig:simu-0-1-ub-lb}
    }
    \hfill
    \subfigure[Comparison]
    {
        \includegraphics[width=0.48\textwidth]{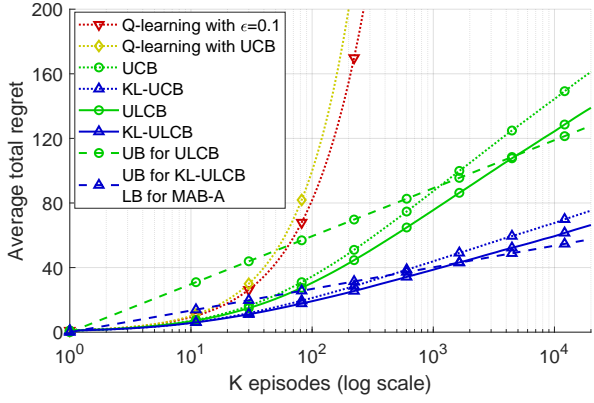}
        \label{fig:simu-0-1-comp-2}
    }
    \hfill
    \subfigure[UB and LB]
    {
        \includegraphics[width=0.48\textwidth]{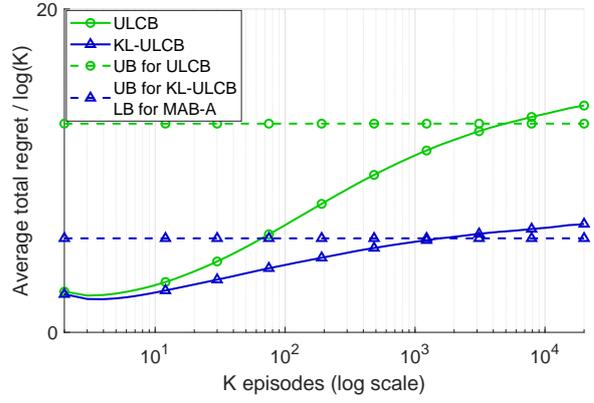}
        \label{fig:simu-0-1-ub-lb-2}
    }
    \hfill
    \caption{Simulation results: For (a) and (b), $q(0,0)=1$, $q(1,1)=q(1,0)=q(0,1)=0$. For (c) and (d), $q(0,0)=0.8$, $q(1,0)=q(0,1)=0.2$, $q(1,1)=0.1$.}
    \label{fig:simu-0-1}
\end{figure}

\section{Conclusion}
\label{sec:conclusion}

We studied a new MAB problem with abandonment. The proposed ULCB and KL-ULCB achieve $O(\log K)$ regret, and KL-ULCB is asymptotically sharp. We also extended our algorithms to the general-state setting. Simulation results show that our algorithms outperform UCB, KL-UCB, and Q-learning-based algorithms and confirm our theoretical results about the state-dependent exploration-exploitation mechanism.

\newpage
\bibliographystyle{plain}
\bibliography{MAB-A-arxiv}

\begin{thebibliography}{10}

\bibitem{auer2002finite}
Peter Auer, Nicolo Cesa-Bianchi, and Paul Fischer.
\newblock Finite-time analysis of the multiarmed bandit problem.
\newblock {\em Machine Learning}, 47(2):235--256, 2002.

\bibitem{ben2022modeling}
Omer Ben-Porat, Lee Cohen, Liu Leqi, Zachary~C Lipton, and Yishay Mansour.
\newblock Modeling attrition in recommender systems with departing bandits.
\newblock {\em arXiv preprint arXiv:2203.13423}, 2022.

\bibitem{bertsekas1991analysis}
Dimitri~P Bertsekas and John~N Tsitsiklis.
\newblock An analysis of stochastic shortest path problems.
\newblock {\em Mathematics of Operations Research}, 16(3):580--595, 1991.

\bibitem{cao2019dynamic}
Junyu Cao and Wei Sun.
\newblock Dynamic learning of sequential choice bandit problem under marketing
  fatigue.
\newblock {\em Proceedings of the AAAI Conference on Artificial Intelligence},
  33(01):3264--3271, 2019.

\bibitem{chen2021implicit}
Liyu Chen, Mehdi Jafarnia-Jahromi, Rahul Jain, and Haipeng Luo.
\newblock Implicit finite-horizon approximation and efficient optimal
  algorithms for stochastic shortest path.
\newblock In {\em Neural Information Processing Systems}, 2021.

\bibitem{cohen2021minimax}
Alon Cohen, Yonathan Efroni, Yishay Mansour, and Aviv Rosenberg.
\newblock Minimax regret for stochastic shortest path.
\newblock In {\em Neural Information Processing Systems}, 2021.

\bibitem{garivier2011kl}
Aur{\'e}lien Garivier and Olivier Capp{\'e}.
\newblock The kl-ucb algorithm for bounded stochastic bandits and beyond.
\newblock In {\em Conference on Learning Theory}, pages 359--376. JMLR Workshop
  and Conference Proceedings, 2011.

\bibitem{Kakade02approximatelyoptimal}
Sham Kakade and John Langford.
\newblock Approximately optimal approximate reinforcement learning.
\newblock In {\em International Conference on Machine Learning}, pages
  267--274, 2002.

\bibitem{lai1985asymptotically}
Tze~Leung Lai and Herbert Robbins.
\newblock Asymptotically efficient adaptive allocation rules.
\newblock {\em Advances in Applied Mathematics}, 6(1):4--22, 1985.

\bibitem{lattimore2020bandit}
Tor Lattimore and Csaba Szepesv{\'a}ri.
\newblock {\em Bandit algorithms}.
\newblock Cambridge University Press, 2020.

\bibitem{mardia2020concentration}
Jay Mardia, Jiantao Jiao, Ervin T{\'a}nczos, Robert~D Nowak, and Tsachy
  Weissman.
\newblock Concentration inequalities for the empirical distribution of discrete
  distributions: beyond the method of types.
\newblock {\em Information and Inference: A Journal of the IMA}, 9(4):813--850,
  2020.

\bibitem{schmit2018learning}
Sven Schmit and Ramesh Johari.
\newblock Learning with abandonment.
\newblock In {\em International Conference on Machine Learning}, pages
  4509--4517. PMLR, 2018.

\bibitem{tarbouriech2021stochastic}
Jean Tarbouriech, Runlong Zhou, Simon~S. Du, Matteo Pirotta, Michal Valko, and
  Alessandro Lazaric.
\newblock Stochastic shortest path: Minimax, parameter-free and towards
  horizon-free regret.
\newblock In {\em Neural Information Processing Systems}, 2021.

\bibitem{van2016probability}
Ramon Van~Handel.
\newblock Probability in high dimension.
\newblock Technical report, PRINCETON UNIV NJ, 2016.

\bibitem{vial2021regret}
Daniel Vial, Advait Parulekar, Sanjay Shakkottai, and R~Srikant.
\newblock Regret bounds for stochastic shortest path problems with linear
  function approximation.
\newblock {\em arXiv preprint arXiv:2105.01593}, 2021.

\bibitem{watkins1989learning}
Christopher John Cornish~Hellaby Watkins.
\newblock {\em Learning from delayed rewards}.
\newblock PhD thesis, King's College, Cambridge, United Kingdom, 1989.

\bibitem{wu2018adaptive}
Huasen Wu, Xueying Guo, and Xin Liu.
\newblock Adaptive exploration-exploitation tradeoff for opportunistic bandits.
\newblock In {\em International Conference on Machine Learning}, pages
  5306--5314. PMLR, 2018.

\bibitem{yang2021q}
Kunhe Yang, Lin Yang, and Simon Du.
\newblock Q-learning with logarithmic regret.
\newblock In {\em International Conference on Artificial Intelligence and
  Statistics}, pages 1576--1584. PMLR, 2021.

\end{thebibliography}

\newpage
\appendix
\allowdisplaybreaks[0]

\section{Organization of the appendices}

In the appendices, we present additional details of the model, the details of the proofs of the theorems and lemmas, additional details of the extension to the general-state setting, and additional simulations. The organization of the content is as follows:
\begin{itemize}[leftmargin=*]
    \item Section~\ref{app:transition} contains the transition graph and probabilities of the MDP.
    \item Section~\ref{app:missing-proofs} contains the proofs of the theorems and lemmas in the paper.
    \begin{itemize}
        \item Section~\ref{app:lemma1}: Proof of Lemma~\ref{lemma:1}.
        \item Section~\ref{app:regret-decomp}: Proof of the regret decomposition~\eqref{equ:regret-decomp} in Section~\ref{sec:decom-regret}.
        \item Section~\ref{app:lemma:sufficient-condition}: Proof of Lemma~\ref{lemma:sufficient-condition}.
        \item Section~\ref{app:lemma:sd-cb-1}: Proof of Lemma~\ref{lemma:sd-cb-1}.
        \item Section~\ref{app:lemma:sd-cb-2}: Proof of Lemma~\ref{lemma:sd-cb-2}.
        \item Section~\ref{app:lemma:sd-cb-3}: Proof of Lemma~\ref{lemma:sd-cb-3}.
        \item Section~\ref{app:theorem2}: Proof of Theorem~\ref{theorem:2}.
        \item Section~\ref{app:theorem3}: Proof of Theorem~\ref{theorem:3}.
    \end{itemize}
    \item Section~\ref{app:extension-cont} contains additional details of the extension to the general-state setting.
    \begin{itemize}
        \item Section~\ref{app:cont-ext-model}: Transition probabilities of the MDP.
        \item Section~\ref{app:proof-lemma-optimal-policy-general}: Proof of Lemma~\ref{lemma:cont-1}.
        \item Section~\ref{app:cont-state-gap-function}: Some examples of abandonment probability functions that satisfy $V^*(s_1) - Q^*(s_1, a) \le V^*(s_2) - Q^*(s_2, a)$.
        \item Section~\ref{app:cont-ext-alg}: Details of CONT-ULCB and CONT-KL-ULCB algorithms.
        \item Section~\ref{app:cont-state-simu}: Simulation results.
        
    \end{itemize}
    
    \item Section~\ref{app:add-simulations} contains simulation parameters and additional simulation results.
    
    \item Section~\ref{app:add-cases} contains the algorithms and the results for the case where $V^*(1)-Q^*(1,a_i)\ge V^*(0)-Q^*(0,a_i)$.
    \begin{itemize}
        \item Section~\ref{app:theorem1-case2}: Theorem~\ref{theorem:5}
        \item Section~\ref{app:theorem2-case2}: Theorem~\ref{theorem:6}
        \item Section~\ref{app:theorem3-case2}: Theorem~\ref{theorem:7}
    \end{itemize}
    
\end{itemize}

\newpage
\section{State transition}
\label{app:transition}
The transition graph of the MDP is shown in Fig.~\ref{fig:mdp} with state space $\mathcal{S} = \{0,1,g\}$, action space $\mathcal{A} = \{a_1,...,a_M\}$, and Bernoulli random rewards. 

\begin{figure}[ht]
    \centering
    \includegraphics[width=0.9\linewidth]{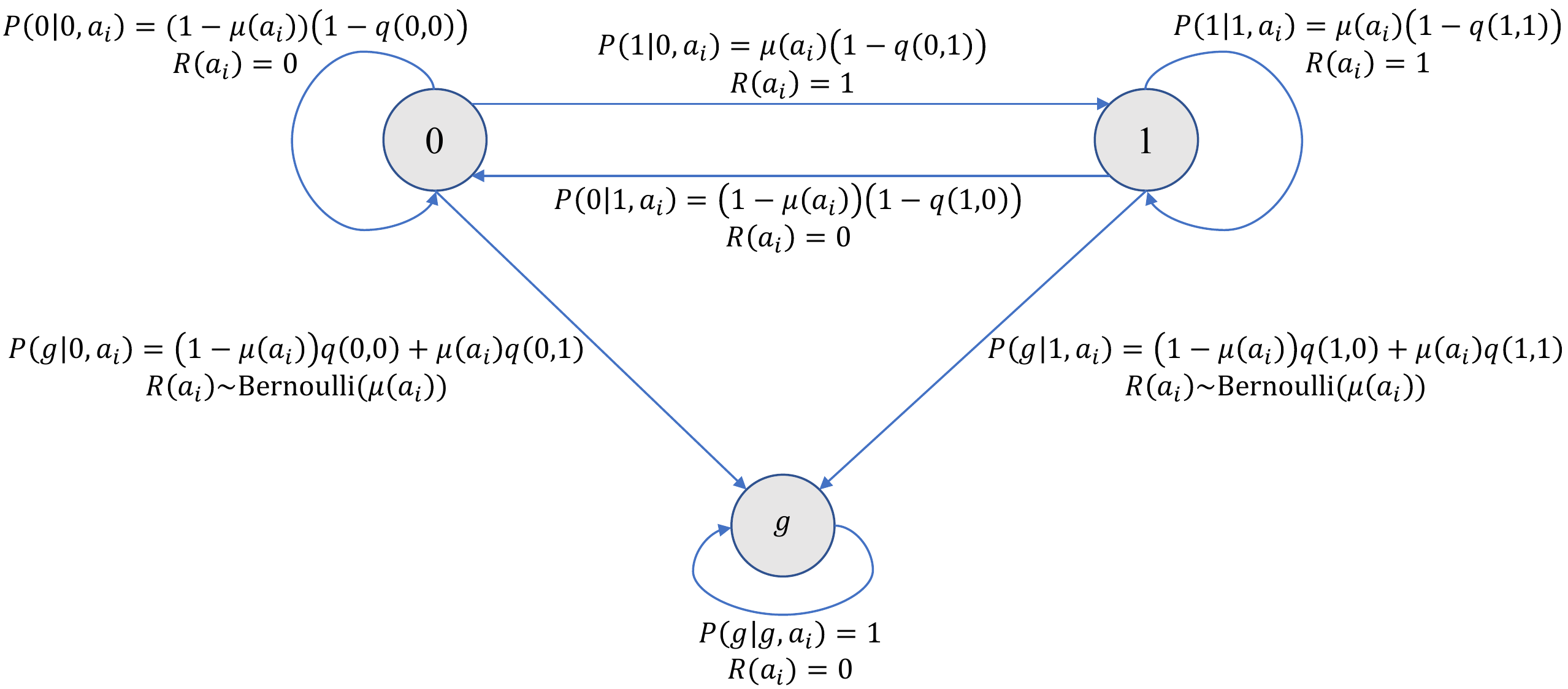}
    \caption{Transition graph for action $a_i$, $i\in\{1,2,...,M\}$.}
    \label{fig:mdp}
\end{figure}

The transition probabilities $P(s'|s,a)$ while pulling arm $a$ are shown in Table~\ref{tab:tran-prob}. The model can also be extended to the case where users never abandon the system at the first step by defining one more state in which the abandonment probability is 0.

\begin{table}[!htbp]
\centering
\small
\caption{Transition probabilities $P(s'|s,a)$}
\label{tab:tran-prob}
\begin{tabular}{|c|c|c|c|c|}
\hline
\multicolumn{2}{|c|}{ \multirow{2}*{$P(s'|s,a)$} }& \multicolumn{3}{c|}{Next state $s'$}\\
\cline{3-5}
\multicolumn{2}{|c|}{}&0&1&g\\
\hline
\multirow{3}*{Current state $s$}&0&$(1-\mu(a))(1-q(0,0))$&$\mu(a)(1-q(0,1))$&$(1-\mu(a))q(0,0)+\mu(a)q(0,1)$\\
\cline{2-5}
&1&$(1-\mu(a))(1-q(1,0))$&$\mu(a)(1-q(1,1))$&$(1-\mu(a))q(1,0)+\mu(a)q(1,1)$\\
\cline{2-5}
&g&0&0&1\\
\hline
\end{tabular}
\end{table}

\section{Missing proofs}
\label{app:missing-proofs}

\subsection{Proof of Lemma~\ref{lemma:1}: optimal policy}
\label{app:lemma1}

If the model is known, this problem can be viewed as a SSP problem~\cite{bertsekas1991analysis}. Since $\mu(a_i) \le \mu(a_1) < 1, \forall i=2,...,M$ and $q(0,0)>0$, all policies are proper. Hence, by the results in~\cite{bertsekas1991analysis}, there exists a stationary optimal policy. Therefore, it is enough to consider only stationary policies for $\pi^*$.
Define for any state $s\in\mathcal{S}$ and $a\in\mathcal{A}$,
\begin{equation}\label{equ:v*-def}
    V^{*}(s) \coloneqq  \expt_{\pi^*} \left[\sum_{h=1}^{\infty} R_{k,h}(A_{k,h})\left|\right.S_{k,1}=s\right]
\end{equation}
where $A_{k,h}$ follows the policy $\pi^*$, and
\begin{equation}\label{equ:q*-def}
    Q^{*}(s, a) \coloneqq  \begin{cases} 
    \mu(a) + \expt_{\pi^*} \left[\sum_{h=2}^{\infty} R_{k,h}(A_{k,h})\left|\right.S_{k,1}=s, A_{k,1}=a\right], & s\neq g\\
    0, & s=g
    \end{cases}
\end{equation}
where $A_{k,h}, h\ge 2$ follows the policy $\pi^*$.
Note that $V^{*}(s)$ and $Q^{*}(s, a)$ do not depend on $k$ since the statistics of the MDPs remain the same among episodes and these MDPs are independent. For $s\neq g$, we have the Bellman equation as follows:
\begin{equation}\label{equ:bellman-v*q*}
\begin{aligned}
    V^{*}(s) =& \max_{a} Q^{*}(s, a)\\
    Q^*(s,a) = & \mu(a) + \expt\left[\expt_{\pi^*}\left[\sum_{h=2}^{\infty}R_{k,h}(A_{k,h})\left|S_{k,2}\right.\right]\left|\right.S_{k,1}=s,A_{k,1}=a\right]\\
    = & \mu(a)+\expt \left[V^*(S_{k,2})|S_{k,1} = s, A_{k,1} = a\right]
\end{aligned}
\end{equation}
Thus, we have
\begin{equation}\label{equ:q-function}
    \begin{aligned}
    Q^*(s,a) = &\mu(a) + P(0\vert s,a)V^*(0) +P(1\vert s,a)V^*(1)\\
    = & \mu(a) + (1-\mu(a))(1-q(s,0))V^*(0) + \mu(a)(1-q(s,1))V^*(1)
    \end{aligned}
\end{equation}
for any $s\in\{0,1\}$ and $a\in\mathcal{A}$. Then we have
\begin{equation}\label{equ:q1-q0}
    \begin{aligned}
        Q^*(1,a)-Q^*(0,a) = & \biggl[(1-\mu(a))(1-q(1,0))V^*(0) + \mu(a)(1-q(1,1))V^*(1) \biggr]\\
        & - \biggl[ (1-\mu(a))(1-q(0,0))V^*(0) + \mu(a)(1-q(0,1))V^*(1)\biggr]\\
        = & (1-\mu(a))V^*(0)(q(0,0)-q(1,0)) + \mu(a)V^*(1)(q(0,1)-q(1,1))
    \end{aligned}
\end{equation}
Since by definition we know $V^*(s)\ge 0$ for any $s\in\{0,1\}$, and we know $q(0,0)-q(1,0)\ge 0$ and $q(0,1)-q(1,1)\ge 0$, from the result of (\ref{equ:q1-q0}), we have $Q^*(1,a)-Q^*(0,a)\ge 0$ for any $a\in\{a_1,...,a_M\}$. Therefore, we have
\begin{equation}
    \begin{aligned}
    V^*(1)- V^*(0) = & \max_{a} Q^*(1,a) - \max_{a} Q^*(0,a) \\
                   = & \max_{a} Q^*(1,a) -  Q^*(0,a')\\
                   \ge & Q^*(1,a') -  Q^*(0,a')\\
                   \ge & 0
    \end{aligned}
\end{equation}
where $a'\coloneqq \mathop{\mathrm{argmax}}_a Q^*(0,a)$. Then by (\ref{equ:q-function}), for any $i=2,...,M$, we have
\begin{align}\label{equ:qa1-qa2}
    Q^*(s,a_1) - Q^*(s,a_i) = & (\mu(a_1)-\mu(a_i)) + (\mu(a_i)-\mu(a_1))(1-q(s,0))V^*(0) \nonumber\\
    & + (\mu(a_1)-\mu(a_i))(1-q(s,1))V^*(1)\nonumber\\
    = & (\mu(a_1)-\mu(a_i))+(\mu(a_1)-\mu(a_i))\biggl[(1-q(s,1))V^*(1)-(1-q(s,0))V^*(0)\biggr]
\end{align}
where
\begin{equation}\label{equ:temp-1}
    (1-q(s,1))V^*(1)-(1-q(s,0))V^*(0) \ge 0
\end{equation}
due to the fact that $V^*(1)\ge V^*(0)\ge 0$ and $q(s,0)\ge q(s,1)$ for any $s\in\{0,1\}$. Therefore, by (\ref{equ:qa1-qa2}), (\ref{equ:temp-1}), and $\mu(a_1)\ge \mu(a_i), \forall i=2,...,M$, we have $Q^*(s,a_1)\ge Q^*(s,a_i)$ for any $i=2,...,M$ and $s\in\{0,1\}$. Therefore,  always pulling Arm $a_1$ is an optimal policy.

\subsection{Proof of the regret decomposition}
\label{app:regret-decomp}

From the definition of $V^*$ and $Q^*$ in~\eqref{equ:v*-def-v2} and~\eqref{equ:q*-def-v2} and by Lemma~\ref{lemma:1}, we have the following Bellman equation:
\begin{equation}\label{equ:bellman-v*q*-v2}
\begin{aligned}
    V^{*}(s) =& \max_{a} Q^{*}(s, a) = Q^{*}(s, a_1)\\
    Q^*(s,a)
    = & \mu(a)+\expt \left[V^*(S_{k,2})|S_{k,1} = s, A_{k,1} = a\right]
\end{aligned}
\end{equation}
for $s\neq g$. Similarly, we have the Bellman equation for $V^{\pi}$ and $Q^{\pi}$ as follows:
\begin{equation}\label{equ:bellman-vpi-qpi}
    \begin{aligned}
    V^{\pi}(s,\varphi) =& Q^{\pi}(s,\varphi, \pi(s,\varphi))\\
    Q^{\pi}(s,\varphi,a) =& \mu(a) + \expt\Biggl[\expt\biggl[\sum_{h=2}^{I_k(\pi,S_{k,2},\phi_{k,2})+1}R_{k,h}(\pi(S_{k,h},\phi_{k,h}))\left|S_{k,2},\phi_{k,2}\right.\biggr]\Biggr.\nonumber\\
    &~~~~~~~~~~~~~~~~~~\biggl|\biggr.\Biggl.S_{k,1}=s,\phi_{k,1}=\varphi,A_{k,1}=a\Biggr]\\
    = & \mu(a) + \expt\left[V^{\pi}(S_{k,2},\phi_{k,2})\left|\right.S_{k,1}=s,\phi_{k,1}=\varphi,A_{k,1}=a\right]
    \end{aligned}
\end{equation}
for $s\neq g$.

From~\eqref{equ:regret-2}, the regret induced in episode $k$ is $\expt\left[V^*(S_{k,1})\right] - \expt\left[V^{\pi}(S_{k,1},\phi_{k,1})\right]$, which can be decomposed as follows:
\begin{align}
    & \expt\left[V^*(S_{k,1})\right] - \expt\left[V^{\pi}(S_{k,1},\phi_{k,1})\right] \nonumber\\
    =& \expt\left[ V^*(S_{k,1}) - Q^*(S_{k,1}, A_{k,1}) \right] + \expt\left[Q^*(S_{k,1}, A_{k,1}) - V^{\pi}(S_{k,1},\phi_{k,1})\right] \nonumber\\
    =& \expt\left[ V^*(S_{k,1}) - Q^*(S_{k,1}, A_{k,1}) \right] + \expt\left[Q^*(S_{k,1}, A_{k,1}) - Q^{\pi}(S_{k,1},\phi_{k,1}, A_{k,1}) \right]\nonumber\\
    =& \expt\left[ V^*(S_{k,1}) - Q^*(S_{k,1}, A_{k,1}) \right]\nonumber\\
    & + \expt\left[\expt\left[V^*(S_{k,2}) \vert S_{k,1}, A_{k,1} \right] - \expt\left[V^{\pi}(S_{k,2}, \phi_{k,2})\vert S_{k,1},\phi_{k,1}, A_{k,1} \right] \right]\nonumber\\
    =& \expt\left[ V^*(S_{k,1}) - Q^*(S_{k,1}, A_{k,1}) \right] + \expt\left[V^*(S_{k,2}) - V^{\pi}(S_{k,2},\phi_{k,2})\right] = ...\nonumber\\
    =& \sum_{h=1}^{\infty} \expt\left[V^*(S_{k,h}) - Q^*(S_{k,h}, A_{k,h})\right]
\end{align}
where $S_{k,h}$, $\phi_{k,h}$, and $A_{k,h}$ are the states, historical samples, and actions following the policy $\pi$, respectively. The second equality is due to the fact that $A_{k,h}=\pi(S_{k,h},\phi_{k,h})$, the third equality follows from the Bellman equations~\eqref{equ:bellman-v*q*-v2} and~\eqref{equ:bellman-vpi-qpi}, and the fourth equality is by the tower law. The limit in the result is well-defined since $V^*(S_{k,h}) - Q^*(S_{k,h}, A_{k,h})\ge 0$. In fact, this regret decomposition borrows from~\cite{yang2021q}, and it can also be viewed as the performance difference formula~\cite{Kakade02approximatelyoptimal} in the RL literature. Then the regret can be further decomposed into the summation of the gaps between value function and Q function in different states shown as follows:
\begin{align}\label{equ:regret-decomp-app}
    & \expt[\mathrm{Reg}_{\pi}(K)] = \sum_{k=1}^{K} \sum_{h=1}^{\infty} \expt\left[V^*(S_{k,h}) - Q^*(S_{k,h}, A_{k,h})\right]\nonumber\\
    = & \expt\left[ \sum_{k=1}^{K} \sum_{h=1}^{\infty} V^*(S_{k,h}) - Q^*(S_{k,h}, A_{k,h})\right]\nonumber\\
    = & \expt\left[ \sum_{k=1}^{K} \sum_{h=1}^{I_k(\pi, S_{k,1}, \phi_{k,1})} V^*(S_{k,h}) - Q^*(S_{k,h}, A_{k,h})\right]
    = \expt\left[ \sum_{t=1}^{T(K,\pi)} V^*(S_{t}) - Q^*(S_{t}, A_{t})\right]\nonumber\\
    = & \expt\left[ \sum_{t=1}^{T(K,\pi)} \sum_{i=2}^{M} \mathbbm{1}\{S_t = 0, A_t = a_i\} \left[ V^*(0) - Q^*(0, a_i)\right]+ \mathbbm{1}\{S_t = 1, A_t = a_i\} \left[ V^*(1) - Q^*(1, a_i)\right] \right]
\end{align}
where the second equality is by monotone convergence theorem, the third equality is by the definition of $I_k(\pi, S_{k,1}, \phi_{k,1})$, $T(K,\pi)\coloneqq \sum_{k=1}^{K} I_k(\pi, S_{k,1}, \phi_{k,1})$ is the number of pulls over $K$ episodes following the policy $\pi$, and the last equality follows from the fact that $V^*(s)-Q^*(s,a_1)=0$ for any $s$.

\subsection{Proof of Lemma~\ref{lemma:sufficient-condition}}
\label{app:lemma:sufficient-condition}

Lemma~\ref{lemma:sufficient-condition} can be proved by obtaining a lower bound for the ratio $V^*(0)/V^*(1)$. 

For $i\in\{2,3,...,M\}$, we have
\begin{align}\label{equ:vqdiff}
    & \left[V^*(0) - Q^*(0, a_i)\right] - \left[V^*(1) - Q^*(1, a_i)\right]\nonumber\\
    = & \left[Q^*(0, a_1) - Q^*(0, a_i)\right] - \left[Q^*(1, a_1) - Q^*(1, a_i)\right]\nonumber\\
    = & \left[\mu(a_1)-\mu(a_i)\right]\left[1 + (1-q(0,1))V^*(1) - (1-q(0,0))V^*(0)\right]\nonumber\\
    & - \left[\mu(a_1)-\mu(a_i)\right]\left[1 + (1-q(1,1))V^*(1) - (1-q(1,0))V^*(0)\right] \nonumber\\
    = & \left[\mu(a_1)-\mu(a_i)\right] \left[(q(0,0)-q(1,0))V^*(0) - (q(0,1)-q(1,1))V^*(1)\right]
\end{align}
where the first and second equalities follow from~\eqref{equ:bellman-v*q*-v2} and Lemma~\ref{lemma:1}. Since $\mu(a_1)>0$, we have $V^*(1) > 0$. Then by the Bellman equation~\eqref{equ:bellman-v*q*-v2} and Lemma~\ref{lemma:1}, we have
\begin{align}\label{equ:lower-bound-ratio}
    &\frac{V^*(0)}{V^*(1)} \nonumber\\
    = & \frac{\mu(a_1) + \mu(a_1)(1-q(0,1))V^*(1) + (1-\mu(a_1))(1-q(0,0))V^*(0)}{\mu(a_1) + \mu(a_1)(1-q(1,1))V^*(1) + (1-\mu(a_1))(1-q(1,0))V^*(0)}\nonumber\\
    = & \frac{\mu(a_1) + \frac{(1-q(0,1))}{(1-q(1,1))}\mu(a_1)(1-q(1,1))V^*(1) + \frac{(1-q(0,0))}{(1-q(1,0))}(1-\mu(a_1))(1-q(1,0))V^*(0)}{\mu(a_1) + \mu(a_1)(1-q(1,1))V^*(1) + (1-\mu(a_1))(1-q(1,0))V^*(0)}\nonumber\\
    \ge & \frac{\min\left\{\frac{1-q(0,1)}{1-q(1,1)}, \frac{1-q(0,0)}{1-q(1,0)}\right\}\left[\mu(a_1) + \mu(a_1)(1-q(1,1))V^*(1) + (1-\mu(a_1))(1-q(1,0))V^*(0)\right]}{\mu(a_1) + \mu(a_1)(1-q(1,1))V^*(1) + (1-\mu(a_1))(1-q(1,0))V^*(0)}\nonumber\\
    = & \min\left\{\frac{1-q(0,1)}{1-q(1,1)}, \frac{1-q(0,0)}{1-q(1,0)}\right\}
\end{align}
where the inequality is due to the fact that $\min\left\{\frac{1-q(0,1)}{1-q(1,1)}, \frac{1-q(0,0)}{1-q(1,0)}\right\}\le 1$. It follows from~\eqref{equ:theorem:1-condition} and~\eqref{equ:lower-bound-ratio} that $\frac{V^*(0)}{V^*(1)} \ge \frac{q(0,1)-q(1,1)}{q(0,0)-q(1,0)}$, which implies 
$$(q(0,0)-q(1,0))V^*(0) - (q(0,1)-q(1,1))V^*(1) \ge 0.$$
Hence, it follows from~\eqref{equ:vqdiff} that $V^*(0) - Q^*(0, a_i) \ge V^*(1) - Q^*(1, a_i)$.
 
\subsection{Proof of Lemma~\ref{lemma:sd-cb-1}}
\label{app:lemma:sd-cb-1}

Choose a $T_1$ such that for any $t\ge T_1$, 
    $$\frac{(p_{\min}-\eta)(t-1)}{2(M-1)}\ge 2,~\mathrm{and}~ \sqrt{\frac{\log t + 4 \log \left(\log t \right)}{\frac{(p_{\min} - \eta)(t-1)}{(M-1)} - 2}} \le (\mu(a_1) - \mu(a_2)) - \gamma.$$

Let $N_{t}^{1}(a)$ be the number of times arm $a\in\mathcal{A}$ was pulled in state $1$ before time step $t$. Then
\begin{align}\label{equ:prob-nta}
    &\prob\left(N_{t}(a_1)\le \frac{(p_{\min}-\eta)(t-1)}{2}\right) \nonumber \\
    \le & \prob\left(N_{t}^{1}(a_1)\le \frac{(p_{\min}-\eta)(t-1)}{2}\right) \nonumber \\
    \le & \prob\left(N_{t}^{1}(a_1)\le \frac{(p_{\min} - \eta)(t-1)}{2}, \sum_{i=1}^{t-1}\mathbbm{1}\{S_i=1\}>(p_{\min}-\eta)(t-1)\right) \nonumber \\
    & + \prob\left(\sum_{i=1}^{t-1}\mathbbm{1}\{S_i=1\}\le(p_{\min}-\eta)(t-1)\right)
\end{align}
where the first inequality follows from the fact that $N_{t}^{1}(a)\le N_{t}(a)$. Next we show that 
\[
    \prob\left(\sum_{i=1}^{t-1}\mathbbm{1}\{S_i=1\}\le(p_{\min}-\eta)(t-1)\right)
\]
is small. Let $\mathcal{F}_0\coloneqq \{\emptyset, \Omega\}$ be the minimum $\sigma$-algebra, and $\mathcal{F}_i\coloneqq \sigma(S_1,A_1,...,S_i,A_i)$ be the $\sigma$-algebra generated by the random variables up to time $i$. Since for any $a\in\mathcal{A}$,
\begin{align}
    \prob\left(S_i=1\vert S_{i-1}=0, A_{i-1}=a\right) \ge & \mu(a)(1-q(0,1))\nonumber\\
    \prob\left(S_i=1\vert S_{i-1}=1, A_{i-1}=a\right) \ge & \mu(a)(1-q(1,1)),
\end{align}
we have
\begin{align}
   \expt\left[\mathbbm{1}\{S_i=1\}\vert \mathcal{F}_{i-1}\right] \ge \mu(a_M)\min\left\{1-q(0,1), 1-q(1,1)\right\} = p_{\min} > 0 \nonumber\\
\end{align}
Hence we have
\begin{align}\label{equ:prob-sum-indicator}
    &\prob\left(\sum_{i=1}^{t-1}\mathbbm{1}\{S_i=1\}\le(p_{\min}-\eta)(t-1)\right)\nonumber\\
    = & \prob\left(\sum_{i=1}^{t-1}p_{\min} - \sum_{i=1}^{t-1}\mathbbm{1}\{S_i=1\}\ge \eta(t-1)\right)\nonumber\\
    \le & \prob\left(\sum_{i=1}^{t-1}\left(\expt\left[\mathbbm{1}\{S_i=1\}\vert \mathcal{F}_{i-1}\right] - \mathbbm{1}\{S_i=1\}\right)\ge \eta(t-1)\right)
\end{align}
Let $\Delta_i\coloneqq \expt\left[\mathbbm{1}\{S_i=1\}\vert \mathcal{F}_{i-1}\right] - \mathbbm{1}\{S_i=1\}$. Note that $\Delta_i$ is measurable with respect to $\mathcal{F}_{i}$, $\expt[\Delta_i\vert \mathcal{F}_{i-1}]=0$, and $\lvert \Delta_i \rvert \le 1$. Hence by Azuma-Hoeffding inequality~\cite{van2016probability}, we have
\begin{align}\label{equ:prob-conc-indicator-state}
\prob\left(\sum_{i=1}^{t-1}\left(\expt\left[\mathbbm{1}\{S_i=1\}\vert \mathcal{F}_{i-1}\right] - \mathbbm{1}\{S_i=1\}\right)\ge \eta(t-1)\right)\le \exp\left(-\frac{\eta^2(t-1)^2}{2(t-1)}\right) = \exp\left(-\frac{\eta^2(t-1)}{2}\right)
\end{align}
Therefore, from~\eqref{equ:prob-nta},~\eqref{equ:prob-sum-indicator} and~\eqref{equ:prob-conc-indicator-state}, it follows that
\begin{align}\label{equ:nta1-first-ineq}
    &\prob\left(N_{t}(a_1)\le \frac{(p_{\min}-\eta)(t-1)}{2}\right) \nonumber \\
    \le & \prob\left(N_{t}^{1}(a_1)\le \frac{(p_{\min} - \eta)(t-1)}{2}, \sum_{i=1}^{t-1}\mathbbm{1}\{S_i=1\}>(p_{\min}-\eta)(t-1)\right) + \exp\left(-\frac{\eta^2(t-1)}{2}\right)\nonumber\\
    \le & \prob\left(\sum_{i=2}^{M} N_{t}^{1}(a_i) > \frac{(p_{\min} - \eta)(t-1)}{2}\right) + \exp\left(-\frac{\eta^2(t-1)}{2}\right)\nonumber\\
    \le & \prob\left(N_{t}^{1}(a_j) > \frac{(p_{\min} - \eta)(t-1)}{2(M-1)}\right) + \exp\left(-\frac{\eta^2(t-1)}{2}\right)\nonumber\\
\end{align}
where the second inequality is due to the fact that $\sum_{i=1}^{M} N_t^{1}(a_i) = \sum_{i=1}^{t-1}\mathbbm{1}\{S_i=1\}$ and in the last inequality $j\in\operatorname{\mathrm{argmax}}_{i\in\{2,...,M\}} N_{t}^{1}(a_i)$. Consider the event $\{N_{t}^{1}(a_j) > \frac{(p_{\min} - \eta)(t-1)}{2(M-1)}\}$. Let $\tau_t<t$ be the time step when $a_j$ is pulled in state $1$ for the $\left\lceil{\frac{(p_{\min} - \eta)(t-1)}{2(M-1)}}\right\rceil$-th time. Then we have
\begin{align}\label{equ:tau-t}
    \tau_t \ge & \left\lceil{\frac{(p_{\min} - \eta)(t-1)}{2(M-1)}}\right\rceil + (M-1) 
    \ge \frac{(p_{\min} - \eta)(t-1)}{2(M-1)} + (M-1)\\
    \label{equ:N-tau-t}
    N_{\tau_t}^{1}(a_j) = & \left\lceil{\frac{(p_{\min} - \eta)(t-1)}{2(M-1)}}\right\rceil - 1
    \ge \frac{(p_{\min} - \eta)(t-1)}{2(M-1)} - 1
\end{align}
where the first inequality is due to the fact that the ULCB algorithm pulls the other $(M-1)$ arms at the beginning. Let $L_t\coloneqq \frac{(p_{\min} - \eta)(t-1)}{2(M-1)} + (M-1)$. Then we have
\begin{align}\label{equ:prob-nt1aj}
    & \prob\left(N_{t}^{1}(a_j) > \frac{(p_{\min} - \eta)(t-1)}{2(M-1)}\right) \nonumber\\
    \le & \prob\left(\tau_t \ge L_t, N_{\tau_t}^{1}(a_j) = \left\lceil{\frac{(p_{\min} - \eta)(t-1)}{2(M-1)}}\right\rceil - 1, S_{\tau_t} = 1, A_{\tau_t}=a_j\right)\nonumber\\
    \le & \prob\left(\tilde{\mu}_{\tau_t}^{1}(a_j)\ge \tilde{\mu}_{\tau_t}^{1}(a_1), \tau_t \ge L_t, N_{\tau_t}^{1}(a_j) = \left\lceil{\frac{(p_{\min} - \eta)(t-1)}{2(M-1)}}\right\rceil - 1 \right)\nonumber\\
    \le & \prob\left(\tilde{\mu}_{\tau_t}^{1}(a_j)\ge \tilde{\mu}_{\tau_t}^{1}(a_1), \tau_t \ge L_t, N_{\tau_t}(a_j) \ge \left\lceil{\frac{(p_{\min} - \eta)(t-1)}{2(M-1)}}\right\rceil - 1 \right)\nonumber\\
    \le & \prob\left(\tilde{\mu}_{\tau_t}^{1}(a_j)\ge \tilde{\mu}_{\tau_t}^{1}(a_1), \tilde{\mu}_{\tau_t}^{1}(a_1) \ge \mu(a_1), \tau_t \ge L_t, N_{\tau_t}(a_j) \ge \left\lceil{\frac{(p_{\min} - \eta)(t-1)}{2(M-1)}}\right\rceil - 1 \right) \nonumber\\
    & + \prob\left(\tilde{\mu}_{\tau_t}^{1}(a_1) < \mu(a_1), \tau_t \ge L_t\right)
\end{align}
where the first inequality follows from~\eqref{equ:tau-t},~\eqref{equ:N-tau-t}, and the definition of $\tau_t$, the second inequality follows from $S_{\tau_t}=1$, $A_{\tau_t}=a_j$, and Line~\ref{line:alg1-state1-action} of Algorithm~\ref{alg:1}, the third inequality is due to the fact that $ N_{\tau_t}(a_j) \ge  N_{\tau_t}^{1}(a_j)$, and the last inequality is by law of total probability. 

By Lemma~\ref{lemma:sd-cb-prob-tilde-mu} (which is presented after this proof) and $c_1=1, c=4$, for any $t\ge T_1$, we can bound the second term in~\eqref{equ:prob-nt1aj} as follows:
\begin{align}\label{equ:nta1-second-ineq}
    & \prob\left(\tilde{\mu}_{\tau_t}^{1}(a_1) < \mu(a_1), \tau_t \ge L_t\right) \nonumber\\
    \le & \frac{e\log L_t \log(t-2) + 4e\log (\log L_t) \log(t-2) + e}{L_t(\log L_t)^4}\nonumber\\
    \le & \frac{e\log(t-2) + 4\log(t-2) + e}{L_t(\log L_t)^3}\nonumber\\
    \le & \frac{(4+e)\log(t-1) + e}{\frac{(p_{\min} - \eta)(t-1)}{2(M-1)} \left[\log\left(\frac{(p_{\min} - \eta)(t-1)}{2(M-1)}\right)\right]^3}\nonumber\\
    \le & \frac{c_3}{c_2(t-1)\left[\log \left(c_2(t-1)\right)\right]^2}
\end{align}
where the second inequality is obtained by dividing the numerator and the denominator by $\log L_t$, the third inequality is by the definition of $L_t$, and the last inequality is obtained by dividing the numerator and the denominator by $\log \left(c_2(t-1)\right)$, where $c_2\coloneqq  \frac{p_{\min} - \eta}{2(M-1)}$ and $c_3\coloneqq \frac{4+e}{\log 2}\log\frac{2}{c_2} + \frac{e}{\log 2}$. For the first term in~\eqref{equ:prob-nt1aj}, we have
\begin{align}\label{equ:nta1-third-ineq}
    &\prob\left(\tilde{\mu}_{\tau_t}^{1}(a_j)\ge \tilde{\mu}_{\tau_t}^{1}(a_1), \tilde{\mu}_{\tau_t}^{1}(a_1) \ge \mu(a_1), \tau_t \ge L_t, N_{\tau_t}(a_j) \ge \left\lceil{\frac{(p_{\min} - \eta)(t-1)}{2(M-1)}}\right\rceil - 1 \right) \nonumber\\
    \le & \prob\left(\tilde{\mu}_{\tau_t}^{1}(a_j)\ge \mu(a_1), N_{\tau_t}(a_j) \ge \left\lceil{\frac{(p_{\min} - \eta)(t-1)}{2(M-1)}}\right\rceil - 1\right)
\end{align}
Consider the event $\{\tilde{\mu}_{\tau_t}^{1}(a_j)\ge \mu(a_1), N_{\tau_t}(a_j) \ge \left\lceil{\frac{(p_{\min} - \eta)(t-1)}{2(M-1)}}\right\rceil - 1\}$. Then we have
\begin{align}
    \tilde{\mu}_{\tau_t}^{1}(a_j) = & \bar{\mu}_{\tau_t}(a_j) + \sqrt{\frac{\log \tau_t + 4 \log \left(\log \tau_t\right)}{2N_{\tau_t}(a_j)}}\nonumber\\
    \ge & \mu(a_1) = \mu(a_j) + (\mu(a_1) - \mu(a_j)) \ge \mu(a_j) + (\mu(a_1) - \mu(a_2))
\end{align}
which implies
\begin{align}
    \bar{\mu}_{\tau_t}(a_j) - \mu(a_j) \ge & (\mu(a_1) - \mu(a_2)) - \sqrt{\frac{\log \tau_t + 4 \log \left(\log \tau_t\right)}{2N_{\tau_t}(a_j)}}\nonumber\\
    \ge & (\mu(a_1) - \mu(a_2)) - \sqrt{\frac{\log t + 4 \log \left(\log t \right)}{\frac{(p_{\min} - \eta)(t-1)}{(M-1)} - 2}}\nonumber\\
    \ge & \gamma
\end{align}
where the second inequality is by $\tau_t<t$ and $N_{\tau_t}(a_j) \ge \left\lceil{\frac{(p_{\min} - \eta)(t-1)}{2(M-1)}}\right\rceil - 1$, and the last inequality is by $t\ge T_1$ and the definition of $T_1$.
Hence we have
\begin{align}\label{equ:nta1-fourth-ineq}
    & \prob\left(\tilde{\mu}_{\tau_t}^{1}(a_j)\ge \mu(a_1), N_{\tau_t}(a_j) \ge \left\lceil{\frac{(p_{\min} - \eta)(t-1)}{2(M-1)}}\right\rceil - 1\right)\nonumber\\
    \le &\prob\left(\bar{\mu}_{\tau_t}(a_j) - \mu(a_j)\ge \gamma, N_{\tau_t}(a_j) \ge \left\lceil{\frac{(p_{\min} - \eta)(t-1)}{2(M-1)}}\right\rceil - 1 \right)\nonumber\\
    \le & \sum_{i=2}^M \prob\left(\bar{\mu}_{\tau_t}(a_i) - \mu(a_i)\ge \gamma, N_{\tau_t}(a_i) \ge \left\lceil{\frac{(p_{\min} - \eta)(t-1)}{2(M-1)}}\right\rceil - 1 \right)\nonumber\\
    \le & \sum_{i=2}^M \sum_{n=\left\lceil{\frac{(p_{\min} - \eta)(t-1)}{2(M-1)}}\right\rceil - 1}^{t - 1} \prob\left( \frac{1}{n}\sum_{s=1}^n R_s(a_i)  - \mu(a_i)\ge \gamma\right)\nonumber\\
    \le & (M-1) \sum_{n=\left\lceil{\frac{(p_{\min} - \eta)(t-1)}{2(M-1)}}\right\rceil - 1}^{t - 1} \exp(-2n\gamma^2)\nonumber\\
    \le & \frac{M-1}{2\gamma^2\exp\left(2\gamma^2 c_2(t-1) - 4\gamma^2 \right)}.
\end{align}
In the derivation above, the second inequality is by the union bound over all possible $j$. The third inequality is by the union bound over all possible number of pulls of arm $a_i$, where $\{R_s(a_i)\}_{s=1}^n$ are $n$ i.i.d. Bernoulli rewards of pulling arm $a_i$. The fourth inequality uses Hoeffding inequality, and the last inequality is by integration.

From~\eqref{equ:nta1-first-ineq},~\eqref{equ:prob-nt1aj},~\eqref{equ:nta1-second-ineq},~\eqref{equ:nta1-third-ineq}, and~\eqref{equ:nta1-fourth-ineq}, it follows that
\begin{align}
    & \prob\left(N_{t}(a_1)\le \frac{(p_{\min}-\eta)(t-1)}{2}\right) \nonumber\\
    \le & \frac{M-1}{2\gamma^2\exp\left(2\gamma^2 c_2(t-1) - 4\gamma^2 \right)} + \frac{c_3}{c_2(t-1)\left[\log \left(c_2(t-1)\right)\right]^2} + \exp\left(-\frac{\eta^2(t-1)}{2}\right)
\end{align}
for any $t\ge T_1$.

\begin{lemma}\label{lemma:sd-cb-prob-tilde-mu}
    Let all the assumptions in Lemma~\ref{lemma:sd-cb-1} hold. Consider the ULCB algorithm. Let $L_t\coloneqq \frac{(p_{\min} - \eta)(t-1)}{2(M-1)} + (M-1)$, $\delta_{L_t}\coloneqq  c_1^2\log L_t + c c_1^2 \log(\log L_t)$. Then for any $t\ge T_1$, 
    \begin{align}
        \prob\left(\tilde{\mu}_{\tau_t}^{1}(a_1) < \mu(a_1), \tau_t \ge L_t\right) \le e \left\lceil \delta_{L_t} \log (t-2) \right\rceil \exp(-\delta_{L_t})
    \end{align}
\end{lemma}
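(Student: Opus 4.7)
\textbf{Proof plan for Lemma~\ref{lemma:sd-cb-prob-tilde-mu}.} The plan is to translate the confidence-bound event into a lower-tail deviation event for the empirical mean of arm $a_1$, and then apply a standard time-uniform (``peeling'') concentration inequality. First, I would unpack the definition of the index in~\eqref{equ:alg:1:mu1}: since $c_1>0$, the event $\{\tilde\mu_{\tau_t}^1(a_1)<\mu(a_1)\}$ is equivalent to $c_1\sqrt{(\log\tau_t+c\log(\log\tau_t))/(2N_{\tau_t}(a_1))}<\mu(a_1)-\bar\mu_{\tau_t}(a_1)$, which forces $\bar\mu_{\tau_t}(a_1)<\mu(a_1)$ and, upon squaring, is equivalent to
\begin{equation*}
2N_{\tau_t}(a_1)\bigl(\mu(a_1)-\bar\mu_{\tau_t}(a_1)\bigr)^2 \;>\; c_1^2\log\tau_t + c\,c_1^2\log(\log\tau_t) \;=\; \delta_{\tau_t}.
\end{equation*}
Because $s\mapsto\delta_s$ is increasing and $\tau_t\ge L_t$, we have $\delta_{\tau_t}\ge\delta_{L_t}$, so this event is contained in the larger event $\{2N_{\tau_t}(a_1)(\mu(a_1)-\bar\mu_{\tau_t}(a_1))^2 > \delta_{L_t}\}$. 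Taking a union bound over the possible values $\tau_t\in\{L_t,L_t+1,\dots,t-1\}$ and then applying Pinsker's inequality $\mathrm{kl}(p,q)\ge 2(p-q)^2$ yields
\begin{equation*}
\prob\bigl(\tilde\mu_{\tau_t}^1(a_1)<\mu(a_1),\,\tau_t\ge L_t\bigr)\;\le\; \prob\Bigl(\exists\, s\in[L_t,t-1]:\,\bar\mu_s(a_1)<\mu(a_1),\; N_s(a_1)\,\mathrm{kl}(\bar\mu_s(a_1),\mu(a_1))>\delta_{L_t}\Bigr).
\end{equation*}

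Second, I would invoke the peeling-based maximal inequality of Garivier and Cappé (Lemma~8 in~\cite{garivier2011kl}), which states that for i.i.d.\ Bernoulli$(\mu(a_1))$ rewards of arm $a_1$ and any $\delta>0$, integer $n\ge 2$,
\begin{equation*}
\prob\Bigl(\exists\, s\le n:\,\bar\mu_s(a_1)<\mu(a_1),\;N_s(a_1)\,\mathrm{kl}(\bar\mu_s(a_1),\mu(a_1))>\delta\Bigr)\;\le\; e\lceil\delta\log n\rceil\exp(-\delta).
\end{equation*}
Since $N_s(a_1)\le s-1\le t-2$ whenever $s\le t-1$, applying this bound with $n=t-2$ and $\delta=\delta_{L_t}$ gives exactly $e\lceil\delta_{L_t}\log(t-2)\rceil\exp(-\delta_{L_t})$, the claimed right-hand side.

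The main subtlety to address is that the times at which arm $a_1$ is pulled are determined adaptively by the ULCB algorithm and hence random, so $\bar\mu_s(a_1)$ is not literally the average of the first $s-1$ i.i.d.\ Bernoulli$(\mu(a_1))$ samples. This is resolved by the standard observation that the successive rewards obtained from pulls of $a_1$ form an i.i.d.\ Bernoulli$(\mu(a_1))$ sequence, independent of the algorithm's pulling schedule; thus $\bar\mu_s(a_1)$ equals the empirical mean of the first $N_s(a_1)$ terms of that exogenous i.i.d.\ sequence. The peeling argument underlying Lemma~8 of~\cite{garivier2011kl} partitions the range $N_s(a_1)\in[1,n]$ into $\lceil\delta\log n\rceil$ geometric slices and applies a Chernoff/Chernoff--Hoeffding bound on each slice, which is exactly the device that decouples the random pull counter $N_s(a_1)$ from the time index $s$ and produces the $e\lceil\delta\log n\rceil$ factor. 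Combining the two reductions then yields the stated inequality.
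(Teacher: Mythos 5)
Your proposal is correct and follows essentially the same route as the paper: the paper's proof is exactly the reduction you describe (its containment step~\eqref{equ:event-inclusion} uses $\tau_t\ge L_t$ only through $\delta_{\tau_t}\ge\delta_{L_t}$, just as you do) followed by a self-contained rerun of the Garivier--Capp\'e peeling argument, which you instead invoke as a black box. The one point worth flagging is that the off-the-shelf statement in~\cite{garivier2011kl} is Theorem~10 (a fixed-time bound with a predictable selector) rather than a maximal inequality over $s\le n$; your reduction to a union over the pull count $m\in\{1,\dots,t-2\}$ is the right fix, but it then requires the maximal form of the deviation bound (Doob's inequality on each peeling slice rather than plain Markov at the terminal time), which is precisely the ``minor modification'' the paper chooses to write out in full via the stopped indicators $\epsilon_i=\mathbbm{1}\{A_i=a_1,\,i\le\tau_t-1\}$.
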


\begin{proof}
This lemma can be proved by a minor modification of the proof of Theorem~10 in~\cite{garivier2011kl}. The main difference is that $\tau_t$ is a random variable with a lower bound $L_t$. We present the whole proof for completeness. We first define several notations and construct a martingale. Let $n$ be any time step. Let $\{X_i\}_{i=1}^{n}$ be the i.i.d. Bernoulli rewards generated by pulling arm $a_1$. Let $\{\mathcal{F}'_{i}\}$ be an increasing sequence of $\sigma$-algebra such that
\begin{align}
    \mathcal{F}'_0\coloneqq &\sigma(S_1,A_1)\nonumber\\
    \mathcal{F}'_i\coloneqq &\sigma(S_1,A_1,X_1,...,S_i, A_i, X_i, S_{i+1}, A_{i+1}),~i\ge 1
\end{align}
Note that $X_i$ is independent of $\mathcal{F}'_{i-1}$ and $X_i$ is measurable with respect to $\mathcal{F}'_{i}$. By the definition of $\tau_n$, the event $\{\tau_n - 1\ge i\}=\{\tau_n \le i\}^{\mathrm{c}}$ is measurable with respect to $\mathcal{F}'_{i-1}$ for any $i\in\{1,...,n-2\}$. Let
\begin{align}
    V_{n}\coloneqq & \sum_{i=1}^{n-2}\epsilon_i X_i,~n\ge 2\\
    U_{n}\coloneqq & \sum_{i=1}^{n-2}\epsilon_i,~n\ge 2
\end{align}
where $\epsilon_i\coloneqq \mathbbm{1}\{A_i=a_1, i\le \tau_{n} - 1\}$, which is measurable with respect to $\mathcal{F}'_{i-1}$. Hence, $V_{n}$ and $U_{n+1}$ are measurable with respect to $\mathcal{F}'_{n-2}$. For any $\lambda\in\mathbb{R}$, let $\phi(\lambda)\coloneqq \log\expt\left[\exp(\lambda X_1)\right]$.
For any $n\ge 0$, define $W_{n}^{\lambda}$ by
\begin{align}\label{equ:wn-lambda}
    W_{n}^{\lambda} \coloneqq  \exp\left(\lambda V_{n+2} - U_{n+2}\phi(\lambda)\right)
\end{align}
For any $n\ge 1$, we have
\begin{align}\label{equ:expt-exp-lambda-v-diff}
    & \expt\left[\exp\left(\lambda(V_{n+2}-V_{n+1})\right)\vert \mathcal{F}'_{n-1}\right]\nonumber\\
    = & \expt\left[\exp\left(\lambda\epsilon_{n}X_{n}\right)\vert \mathcal{F}'_{n-1} \right]\nonumber\\
    = & \expt\left[\left(\exp\left(\lambda X_{n}\right)\right)^{\epsilon_n} \vert \mathcal{F}'_{n-1} \right]\nonumber\\
    = & \left(\expt\left[\exp\left(\lambda X_{n}\right) \vert \mathcal{F}'_{n-1} \right]\right)^{\epsilon_n}\nonumber\\
    = & \exp\left(\epsilon_n \log\expt\left[\exp(\lambda X_n)\vert \mathcal{F}'_{n-1} \right]\right)\nonumber\\
    = & \exp\left(\epsilon_n \phi(\lambda) \right)\nonumber\\
    = & \exp\left(\left(U_{n+2}-U_{n+1}\right) \phi(\lambda) \right)
\end{align}
where the third equality is due to the fact that $\epsilon_n\in\{0, 1\}$ and $\epsilon_n$ is measurable with respect to $\mathcal{F}'_{n-1}$, and the fifth equality is due to the fact that $X_n$ is independent of $\mathcal{F}'_{n-1}$. Hence, by~\eqref{equ:expt-exp-lambda-v-diff} and the fact that $V_{n+1}$ and $U_{n+2}$ are measurable with respect to $\mathcal{F}'_{n-1}$ we have
\begin{align}
    \expt\left[\exp\left(\lambda V_{n+2}-U_{n+2}\phi(\lambda)\right)\vert \mathcal{F}'_{n-1}\right]
    = \exp\left(\lambda V_{n+1}-U_{n+1}\phi(\lambda) \right),
\end{align}
i.e., $\expt\left[W_{n}^{\lambda}\vert \mathcal{F}'_{n-1} \right] = W_{n-1}^{\lambda}$, which implies that $W_{n}^{\lambda}$ is a martingale with respect to the filtration $\{\mathcal{F}'_n\}$. Hence we have for any $n$,
\begin{align}\label{equ:martingale}
    \expt\left[W_{n}^{\lambda}\right] =  \expt\left[W_{0}^{\lambda}\right] = 1
\end{align}

Next we will use this conclusion to bound $\prob\left(\tilde{\mu}_{\tau_t}^{1}(a_1) < \mu(a_1), \tau_t \ge L_t\right)$. Note that $N_{\tau_t}(a_1)=U_{t}$ and $\bar{\mu}_{\tau_t}(a_1)=V_{t}/U_{t}$ by definition. By Line~\ref{line:alg1-state1-mu-tilde} of Algorithm~\ref{alg:1}, $\tilde{\mu}_{\tau_t}^{1}(a_1)$ can also be written as
\begin{align}\label{equ:mu-tilde-1-taut}
    \tilde{\mu}_{\tau_t}^{1}(a_1) = \max\left\{p: 2 U_t \left(p-\bar{\mu}_{\tau_t}(a_1)\right)^2\le c_1^2\log \tau_t + c c_1^2 \log(\log \tau_t) \right\}
\end{align}
Since $t\ge T_1$, we have $L_t\ge M+1 \ge 3$ by the definition of $L_t$. Hence, by the definition of $\delta_{L_t}$, we have $\delta_{L_t} > 0$. If $\delta_{L_t}\le 1$, then
\begin{align}
    \prob\left(\tilde{\mu}_{\tau_t}^{1}(a_1) < \mu(a_1), \tau_t \ge L_t\right)\le 1 \le \exp\left(1-\delta_{L_t} \right) \lceil \delta_{L_t} \log(t-2) \rceil
\end{align}
for $t\ge T_1$. Then we only need to consider the case where $\delta_{L_t}>1$. We use the same ``peeling trick'' as in~\cite{garivier2011kl}: we divide $\{1,2,...,t-2\}$ of possible values for $U_t$ into slices $\{t_{n-1}+1,...,t_{n-1}\}$ of geometrically increasing size, and treat the slices individually. Let $\beta_t\coloneqq  \frac{1}{\delta_{L_t}-1}$. Let $t_0\coloneqq 0$ and for $n\in\mathbb{N}$, $t_n\coloneqq \lfloor (1+\beta_t)^n \rfloor$. Let $D_t$ be the first integer such that $t_{D_t}\ge t-2$. Hence, $D_t=\left\lceil \frac{\log(t-2)}{\log(1+\beta_t)} \right\rceil$. Define $E_n\coloneqq \{t_{n-1}<U_t\le t_n\}\cap \{\tilde{\mu}_{\tau_t}^{1}(a_1) < \mu(a_1), \tau_t \ge L_t\}$. Then by union bound, we have:
\begin{align}\label{equ:prob-tilde-mu-union}
    \prob\left(\tilde{\mu}_{\tau_t}^{1}(a_1) < \mu(a_1), \tau_t \ge L_t\right) = \prob\left(\bigcup_{n=1}^{D_t}E_n\right)\le \sum_{n=1}^{D_t} \prob\left(E_n\right)
\end{align}
Note that 
\begin{align}\label{equ:event-inclusion}
    &\left\{\tilde{\mu}_{\tau_t}^{1}(a_1) < \mu(a_1), \tau_t \ge L_t\right\} \nonumber\\
    \subseteq & \left\{\bar{\mu}_{\tau_t}(a_1) < \mu(a_1),  2 U_t (\mu(a_1) - \bar{\mu}_{\tau_t}(a_1))^2 > c_1^2\log \tau_t + c c_1^2 \log(\log \tau_t), \tau_t \ge L_t\right\}\nonumber\\
    \subseteq & \left\{\bar{\mu}_{\tau_t}(a_1) < \mu(a_1),  2 U_t (\mu(a_1) - \bar{\mu}_{\tau_t}(a_1))^2 > \delta_{L_t}\right\}
\end{align}
by~\eqref{equ:mu-tilde-1-taut}, $\tau_t \ge L_t$, and the definition of $\delta_{L_t}$. Let $m_t$ be the smallest integer such that $\frac{\delta_{L_t}}{m_t + 1}\le 2\mu(a_1)^2$. If $U_t\le m_t$ and $\bar{\mu}_{\tau_t}(a_1) < \mu(a_1)$, then
\begin{align}
    2 U_t (\mu(a_1) - \bar{\mu}_{\tau_t}(a_1))^2 \le 2 m_t (\mu(a_1) - \bar{\mu}_{\tau_t}(a_1))^2
    \le 2 m_t \mu(a_1)^2 < \delta_{L_t}
\end{align}
which implies that
\begin{align}\label{equ:event-empty}
    \left\{ U_t\le m_t, \bar{\mu}_{\tau_t}(a_1) < \mu(a_1), 2 U_t (\mu(a_1) - \bar{\mu}_{\tau_t}(a_1))^2 > \delta_{L_t} \right\} = \emptyset
\end{align}
Therefore, it follows from~\eqref{equ:event-inclusion} and~\eqref{equ:event-empty} that
\begin{align}\label{equ:event-ut-le-st}
    \left\{ U_t\le m_t, \tilde{\mu}_{\tau_t}^{1}(a_1) < \mu(a_1), \tau_t \ge L_t \right\} = \emptyset
\end{align}
Hence, $E_n=\emptyset$ for all $n$ such that $t_n\le m_t$. For $n$ such that $t_n > m_t$, let $\tilde{t}_{n-1} \coloneqq  \max\{t_{n-1}, m_t\}$. Then we have
\begin{align}\label{equ:event-en-subset1}
    E_n \subseteq & \{\tilde{t}_{n-1} < U_t \le t_n\} \cap \{\tilde{\mu}_{\tau_t}^{1}(a_1) < \mu(a_1), \tau_t \ge L_t\}\nonumber\\
    \subseteq & \{\tilde{t}_{n-1} < U_t \le t_n\} \cap \{\bar{\mu}_{\tau_t}(a_1) < \mu(a_1),  2 U_t (\mu(a_1) - \bar{\mu}_{\tau_t}(a_1))^2 > \delta_{L_t}\}
\end{align}
where the second relation follows from~\eqref{equ:event-inclusion}.
Define $z_t$ such that $0 \le z_t < \mu(a_1)$ and $2(\mu(a_1)-z_t)^2 = \frac{\delta_{L_t}}{(1+\beta_t)^n}$. Note that if $E_n$ occurs, then the definition of $z_t$ is valid since
\begin{align}
    \frac{\delta_{L_t}}{(1+\beta_t)^n}\le \frac{\delta_{L_t}}{U_t} \le \frac{\delta_{L_t}}{m_t + 1} \le 2\mu(a_1)^2
\end{align}
where the first inequality follows from $U_t\le t_n \le (1+\beta_t)^n$, the second inequality follows from $U_t \ge m_t + 1$, and the third inequality is by the definition of $m_t$.
For $U_t > \tilde{t}_{n-1}$, we have
\begin{align}\label{equ:event-en-subset2}
    E_n \cap \left\{U_t > \tilde{t}_{n-1}\right\}\subseteq & E_n \cap\left\{U_t > t_{n-1}\right\} \subseteq E_n \cap\left\{U_t > \lfloor(1+\beta_t)^{n-1}\rfloor\right\} \nonumber\\
    \subseteq & E_n \cap\left\{U_t \ge (1+\beta_t)^{n-1}\right\} \nonumber\\
    \subseteq & E_n \cap\left\{2(\mu(a_1)-z_t)^2 = \frac{\delta_{L_t}}{(1+\beta_t)^n}\ge \frac{\delta_{L_t}}{(1+\beta_t)U_t}\right\}
\end{align}
where the first relation is by definition of $\tilde{t}_{n-1}$, the second relation is by the definition of $t_{n-1}$, and the last relation uses the definition of $z_t$.
For $U_t \le t_n$, we have
\begin{align}\label{equ:event-en-subset3}
    E_n \cap \left\{U_t \le t_n\right\} \subseteq & E_n \cap\left\{U_t \le \lfloor(1+\beta_t)^{n}\rfloor\right\} \subseteq E_n \cap\left\{U_t \le (1+\beta_t)^{n}\right\} \nonumber\\
    \subseteq & E_n \cap \left\{U_t \le (1+\beta_t)^{n}\right\} \cap \left\{2 U_t (\mu(a_1) - \bar{\mu}_{\tau_t}(a_1))^2 > \delta_{L_t}\right\} \nonumber\\
    \subseteq & E_n \cap \left\{2 (\mu(a_1) - \bar{\mu}_{\tau_t}(a_1))^2 > \frac{\delta_{L_t}}{U_t} \ge \frac{\delta_{L_t}}{(1+\beta_t)^n } = 2(\mu(a_1)-z_t)^2 \right\}
\end{align}
where the first relation is by the definition of $t_n$, the third relation follows from~\eqref{equ:event-en-subset1}, and the last relation uses the definition of $z_t$.
Hence, from~\eqref{equ:event-en-subset1}~\eqref{equ:event-en-subset2}, and~\eqref{equ:event-en-subset3}, it follows that
\begin{align}\label{equ:event-en-subset4}
    E_n \subseteq & \left\{ \bar{\mu}_{\tau_t}(a_1) < \mu(a_1), 2 (\mu(a_1) - \bar{\mu}_{\tau_t}(a_1))^2 > 2 (\mu(a_1) - z_t)^2 \ge \frac{\delta_{L_t}}{(1+\beta_t)U_t} \right\}\nonumber\\
    \subseteq & \left\{ \bar{\mu}_{\tau_t}(a_1) < z_t, 2 (\mu(a_1) - z_t)^2 \ge \frac{\delta_{L_t}}{(1+\beta_t)U_t} \right\}
\end{align}
Define
\begin{align}
    \lambda_t \coloneqq  \log\left(z_t(1-\mu(a_1))\right) - \log\left(\mu(a_1)(1-z_t)\right) \le 0.
\end{align}
By Lemma~9 in~\cite{garivier2011kl}, we have
\begin{align}\label{equ:phi-lambda-le-log}
    \phi(\lambda_t)\le \log \left(1-\mu(a_1) + \mu(a_1) \exp(\lambda_t) \right)
\end{align}
Then it follows from~\eqref{equ:event-en-subset4} and~\eqref{equ:phi-lambda-le-log} that
\begin{align}\label{equ:event-en-subset5}
    E_n \subseteq & \left\{ \lambda_t \bar{\mu}_{\tau_t}(a_1) - \phi(\lambda_t) \ge \lambda_t z_t - \log \left(1-\mu(a_1) + \mu(a_1) \exp(\lambda_t) \right), 2 (\mu(a_1) - z_t)^2 \ge \frac{\delta_{L_t}}{(1+\beta_t)U_t} \right\}\nonumber\\
    = & \left\{ \lambda_t \bar{\mu}_{\tau_t}(a_1) - \phi(\lambda_t) \ge \mathrm{kl}(z_t, \mu(a_1)), 2 (\mu(a_1) - z_t)^2 \ge \frac{\delta_{L_t}}{(1+\beta_t)U_t} \right\}\nonumber\\
    \subseteq & \left\{ \lambda_t \bar{\mu}_{\tau_t}(a_1) - \phi(\lambda_t) \ge 2(\mu(a_1)-z_t)^2, 2 (\mu(a_1) - z_t)^2 \ge \frac{\delta_{L_t}}{(1+\beta_t)U_t} \right\}\nonumber\\
    \subseteq & \left\{ \lambda_t \bar{\mu}_{\tau_t}(a_1) - \phi(\lambda_t) \ge \frac{\delta_{L_t}}{(1+\beta_t)U_t} \right\}
\end{align}
where the second relation is by the definition of $\lambda_t$ and $\mathrm{kl}(\cdot,\cdot)$, and the third relation uses Pinsker's inequality such that $2(\mu(a_1)-z_t)^2\le \mathrm{kl}(z_t,\mu(a_1))$. By the relation that $\bar{\mu}_{\tau_t}(a_1)=V_{t}/U_{t}$, the definition of $W_n^{\lambda}$ in~\eqref{equ:wn-lambda}, and~\eqref{equ:event-en-subset5}, we have
\begin{align}
    E_n \subseteq \left\{ \log \left(W_{t-2}^{\lambda_t}\right) \ge \frac{\delta_{L_t}}{(1+\beta_t)} \right\} = \left\{ W_{t-2}^{\lambda_t} \ge \exp\left(\frac{\delta_{L_t}}{(1+\beta_t)}\right) \right\}
\end{align}
By Markov's inequality, we have
\begin{align}
    \prob\left(E_n\right) \le \prob\left( W_{t-2}^{\lambda_t} \ge \exp\left(\frac{\delta_{L_t}}{(1+\beta_t)}\right) \right) \le \frac{\expt\left[ W_{t-2}^{\lambda_t} \right]}{\exp\left(\frac{\delta_{L_t}}{(1+\beta_t)}\right)} = \exp\left(-\frac{\delta_{L_t}}{(1+\beta_t)}\right)
\end{align}
where the equality follows from~\eqref{equ:martingale}. Hence, from~\eqref{equ:prob-tilde-mu-union}, it follows that
\begin{align}\label{equ:prob-tilde-mu-le-dt-exp}
    \prob\left(\tilde{\mu}_{\tau_t}^{1}(a_1) < \mu(a_1), \tau_t \ge L_t\right) \le \sum_{n=1}^{D_t} \prob\left(E_n\right) \le D_t \exp\left(-\frac{\delta_{L_t}}{(1+\beta_t)}\right).
\end{align}
Recall that $\beta_t = \frac{1}{\delta_{L_t}-1}$. Then
\begin{align}\label{equ:dt-le-sth}
    D_t = \left\lceil \frac{\log(t-2)}{\log(1+\beta_t)} \right\rceil = \left\lceil \frac{\log(t-2)}{\log(1+\frac{1}{\delta_{L_t}-1})} \right\rceil \le \left\lceil \delta_{L_t} \log (t-2) \right\rceil
\end{align}
where the last inequality follows from the fact that $\log(1+\frac{1}{x-1})\ge \frac{1}{x}$ for any $x>1$. From~\eqref{equ:prob-tilde-mu-le-dt-exp} and~\eqref{equ:dt-le-sth}, it follows that
\begin{align}
    \prob\left(\tilde{\mu}_{\tau_t}^{1}(a_1) < \mu(a_1), \tau_t \ge L_t\right) \le & \left\lceil \delta_{L_t} \log (t-2) \right\rceil \exp\left(-\frac{\delta_{L_t}}{(1+\beta_t)}\right) \nonumber\\
    = & e \left\lceil \delta_{L_t} \log (t-2) \right\rceil \exp(-\delta_{L_t})
\end{align}
\end{proof}

\subsection{Proof of Lemma~\ref{lemma:sd-cb-2}}
\label{app:lemma:sd-cb-2}

By monotone convergence theorem and linearity of expectation, we have
\begin{align}\label{equ:sum-prob-subopt-pulls-state-0}
    & \expt\left[ \sum_{t=1}^{T(K,\pi)} \sum_{i=2}^{M} \mathbbm{1}\{S_t = 0, A_t = a_i\} \left[ V^*(0) - Q^*(0, a_i)\right]\right] \nonumber\\
    = & \expt\left[ \sum_{t=1}^{\infty} \mathbbm{1}\{t\le T(K,\pi)\} \sum_{i=2}^{M} \mathbbm{1}\{S_t = 0, A_t = a_i\} \left[ V^*(0) - Q^*(0, a_i)\right]\right]\nonumber\\
    = & \sum_{t=1}^{\infty} \expt\left[ \mathbbm{1}\{t\le T(K,\pi)\} \sum_{i=2}^{M} \mathbbm{1}\{S_t = 0, A_t = a_i\} \left[ V^*(0) - Q^*(0, a_i)\right]\right]\nonumber\\
    \le & \sum_{t=1}^{\infty} \expt\left[ \sum_{i=2}^{M} \mathbbm{1}\{S_t = 0, A_t = a_i\} \left[ V^*(0) - Q^*(0, a_i)\right]\right]\nonumber\\
    = &  \sum_{t=1}^{\infty} \sum_{i=2}^{M} \left[ V^*(0) - Q^*(0, a_i)\right] \expt\left[\mathbbm{1}\{S_t = 0, A_t = a_i\}\right]\nonumber\\
    = & \sum_{t=1}^{\infty} \sum_{i=2}^{M} \left[ V^*(0) - Q^*(0, a_i)\right] \prob \left(S_t = 0, A_t = a_i\right)\nonumber\\
    = & \sum_{i=2}^{M} \left[ V^*(0) - Q^*(0, a_i)\right] \sum_{t=1}^{\infty} \prob \left(S_t = 0, A_t = a_i\right)
\end{align}
where $\prob \left(S_t = 0, A_t = a_i\right)$ can be bounded by
\begin{align}
    \prob \left(S_t = 0, A_t = a_i\right) \le \prob \left(S_t = 0, A_t = a_i, \mu(a_i)\ge \tilde{\mu}_t^0(a_i)\right) + \prob\left(\mu(a_i) < \tilde{\mu}_t^0(a_i)\right).
\end{align}
Note that by definition of $\tilde{\mu}_t^0(a_i)$ we know
\begin{align}
    \tilde{\mu}_t^0(a_i) = \bar{\mu}_{t}(a_i)-\sqrt{\frac{\log t + 4 \log(\log t)}{2 N_t(a_i)}}
    = \min\left\{p: 2(\bar{\mu}_t(a_i) - p)^2 N_t(a_i) \le \log t + 4 \log(\log t) \right\}.
\end{align}
Hence by Theorem~10 in~\cite{garivier2011kl}, we have
\begin{align}\label{equ:theorem-10-ref-upper-bound}
    \prob\left(\mu(a_i) < \tilde{\mu}_t^0(a_i)\right) \le \frac{e\left \lceil \left[\log t + 4\log (\log t)\right] \log t \right \rceil}{t(\log t)^4} 
    \le  \frac{6e}{t (\log t)^2}
\end{align}
for any $t\ge T_1$. Hence, we have
\begin{align}
    \prob \left(S_t = 0, A_t = a_i\right) \le \prob \left(S_t = 0, A_t = a_i, \mu(a_i)\ge \tilde{\mu}_t^0(a_i)\right) + \frac{6e}{t (\log t)^2}
\end{align}
for any $t\ge T_1$.
Similarly, we have
\begin{align}\label{equ:prob-subopt-pulls-state-0}
    & \prob \left(S_t = 0, A_t = a_i\right)\nonumber\\
    \le & \prob \left(S_t = 0, A_t = a_i, \tilde{\mu}_t^1(a_1)\ge \mu(a_1), \mu(a_i)\ge \tilde{\mu}_t^0(a_i)\right) + \prob\left(\tilde{\mu}_t^1(a_1) < \mu(a_1)\right) + \frac{6e}{t (\log t)^2}\nonumber\\
    \le & \prob \left(S_t = 0, A_t = a_i, \tilde{\mu}_t^1(a_1)\ge \mu(a_1), \mu(a_i)\ge \tilde{\mu}_t^0(a_i)\right) + \frac{12e}{t (\log t)^2}
\end{align}
for any $t\ge T_1$.
Note that
\begin{align}
    \tilde{\mu}_t^1(a_1) = \tilde{\mu}_t^0(a_1) + 2\sqrt{\frac{\log t + 4 \log(\log t)}{2 N_t(a_1)}}
\end{align}
by definition. Hence we have
\begin{align}
    & \prob \left(S_t = 0, A_t = a_i, \tilde{\mu}_t^1(a_1)\ge \mu(a_1), \mu(a_i)\ge \tilde{\mu}_t^0(a_i)\right)\nonumber\\
    = & \prob \left(S_t = 0, A_t = a_i, \tilde{\mu}_t^0(a_1) + 2\sqrt{\frac{\log t + 4 \log(\log t)}{2 N_t(a_1)}}\ge \mu(a_1), \mu(a_i)\ge \tilde{\mu}_t^0(a_i)\right) \nonumber\\
    = & \prob \left(S_t = 0, A_t = a_i, \tilde{\mu}_t^0(a_1) + 2\sqrt{\frac{\log t + 4 \log(\log t)}{2 N_t(a_1)}}\ge \mu(a_i) + (\mu(a_1) - \mu(a_i)), \mu(a_i)\ge \tilde{\mu}_t^0(a_i)\right) \nonumber\\
    \le & \prob \left(S_t = 0, A_t = a_i, \tilde{\mu}_t^0(a_1) + 2\sqrt{\frac{\log t + 4 \log(\log t)}{2 N_t(a_1)}}\ge \tilde{\mu}_t^0(a_i) + (\mu(a_1) - \mu(a_i)) \right) \nonumber\\
    \le & \prob \left(S_t = 0, A_t = a_i, \tilde{\mu}_t^0(a_1) \ge \tilde{\mu}_t^0(a_i) + (\mu(a_1) - \mu(a_2)) - 2\sqrt{\frac{\log t + 4 \log(\log t)}{2 N_t(a_1)}} \right)
\end{align}
Then by Lemma~\ref{lemma:sd-cb-1}, for any $t\ge T_1$, we have
\begin{align}
    & \prob \left(S_t = 0, A_t = a_i, \tilde{\mu}_t^1(a_1)\ge \mu(a_1), \mu(a_i)\ge \tilde{\mu}_t^0(a_i)\right)\nonumber\\
    \le & \prob \left(S_t = 0, A_t = a_i, \tilde{\mu}_t^0(a_1) \ge \tilde{\mu}_t^0(a_i) + (\mu(a_1) - \mu(a_2)) - 2\sqrt{\frac{\log t + 4 \log(\log t)}{2 N_t(a_1)}},\right.\nonumber\\
    &~~~~ \left.N_t(a_1) > \frac{(p_{\mathrm{min}}-\eta)(t-1)}{2} \right) + \prob\left(N_t(a_1) \le \frac{(p_{\mathrm{min}}-\eta)(t-1)}{2}\right)\nonumber\\
    \le & \prob \left(S_t = 0, A_t = a_i, \tilde{\mu}_t^0(a_1) > \tilde{\mu}_t^0(a_i) + (\mu(a_1) - \mu(a_2)) - 2\sqrt{\frac{\log t + 4 \log(\log t)}{(p_{\mathrm{min}}-\eta)(t-1)}} \right)\nonumber\\ 
    & + \frac{M-1}{2\gamma^2\exp\left(2\gamma^2 c_2(t-1) - 4\gamma^2 \right)} + \frac{c_3}{c_2(t-1)\left(\log \left(c_2(t-1)\right)\right)^2} + \exp\left(-\frac{\eta^2(t-1)}{2}\right)
\end{align}
Define $T_2$ such that $T_2 \ge T_1$ and for any $t\ge T_2$, 
\begin{align}
    \mu(a_1) - \mu(a_2) \ge 2\sqrt{\frac{\log t + 4 \log(\log t)}{(p_{\mathrm{min}}-\eta)(t-1)}}
\end{align}
Then for any $t\ge T_2$, we have
\begin{align}\label{equ:prob-state-0-contradic-event}
    & \prob \left(S_t = 0, A_t = a_i, \tilde{\mu}_t^1(a_1)\ge \mu(a_1), \mu(a_i)\ge \tilde{\mu}_t^0(a_i)\right)\nonumber\\
    \le & \prob \left(S_t = 0, A_t = a_i, \tilde{\mu}_t^0(a_1) > \tilde{\mu}_t^0(a_i) \right)\nonumber\\ 
    & + \frac{M-1}{2\gamma^2\exp\left(2\gamma^2 c_2(t-1) - 4\gamma^2 \right)} + \frac{c_3}{c_2(t-1)\left(\log \left(c_2(t-1)\right)\right)^2} + \exp\left(-\frac{\eta^2(t-1)}{2}\right)\nonumber\\
    \le & \frac{M-1}{2\gamma^2\exp\left(2\gamma^2 c_2(t-1) - 4\gamma^2 \right)} + \frac{c_3}{c_2(t-1)\left(\log \left(c_2(t-1)\right)\right)^2} + \exp\left(-\frac{\eta^2(t-1)}{2}\right)
\end{align}
where the last inequality follows from $\prob \left(S_t = 0, A_t = a_i, \tilde{\mu}_t^0(a_1) > \tilde{\mu}_t^0(a_i) \right)=0$ by the ULCB algorithm. Therefore, it follows from~\eqref{equ:prob-subopt-pulls-state-0} and~\eqref{equ:prob-state-0-contradic-event} that
\begin{align}\label{equ:prob-state-0-subopt-pulls-ub-constant}
    &\sum_{t=T_2}^{\infty} \prob \left(S_t = 0, A_t = a_i\right)\nonumber\\
    \le & \sum_{t=T_2}^{\infty} \frac{M-1}{2\gamma^2\exp\left(2\gamma^2 c_2(t-1) - 4\gamma^2 \right)} + \frac{c_3}{c_2(t-1)\left(\log \left(c_2(t-1)\right)\right)^2} + \exp\left(-\frac{\eta^2(t-1)}{2}\right) + \frac{12e}{t (\log t)^2}\nonumber\\
    \le & \frac{(M-1)\exp(4\gamma^2 - 2\gamma^2c_2(T_2-2))}{4\gamma^4 c_2} + \frac{c_3}{c_2\log [c_2(T_2-2)]} + \frac{2}{\eta^2}\exp\left(-\frac{\eta^2(T_2-2)}{2}\right) + \frac{12e}{\log(T_2-1)}
\end{align}
Hence, from~\eqref{equ:sum-prob-subopt-pulls-state-0}, it follows that
\begin{align}
    & \expt\left[ \sum_{t=1}^{T(K,\pi)} \sum_{i=2}^{M} \mathbbm{1}\{S_t = 0, A_t = a_i\} \left[ V^*(0) - Q^*(0, a_i)\right]\right] \nonumber\\
    \le & \sum_{i=2}^{M} \left[ V^*(0) - Q^*(0, a_i)\right] \left[ T_2 + \sum_{t=T_2}^{\infty} \prob \left(S_t = 0, A_t = a_i\right)\right]\nonumber\\
    \le & c_4 \sum_{i=2}^{M} \left[ V^*(0) - Q^*(0, a_i)\right]
\end{align}
where the last inequality follows from~\eqref{equ:prob-state-0-subopt-pulls-ub-constant} and
\begin{align}
    c_4 \coloneqq  & T_2 + \frac{(M-1)\exp(4\gamma^2 - 2\gamma^2c_2(T_2-2))}{4\gamma^4 c_2} + \frac{c_3}{c_2\log [c_2(T_2-2)]} + \frac{2}{\eta^2}\exp\left(-\frac{\eta^2(T_2-2)}{2}\right)\nonumber\\
    & + \frac{12e}{\log(T_2-1)}
\end{align}

\subsection{Proof of Lemma~\ref{lemma:sd-cb-3}}
\label{app:lemma:sd-cb-3}

By monotone convergence theorem and linearity of expectation, we have
\begin{align}\label{equ:proof-lemma-sd-cb-3-first}
    & \expt\left[ \sum_{t=1}^{T(K,\pi)} \sum_{i=2}^{M} \mathbbm{1}\{S_t = 1, A_t = a_i\} \left[ V^*(1) - Q^*(1, a_i)\right]\right] \nonumber\\
    = & \expt\left[ \sum_{t=1}^{\infty} \mathbbm{1}\{t\le T(K,\pi)\} \sum_{i=2}^{M} \mathbbm{1}\{S_t = 1, A_t = a_i\} \left[ V^*(1) - Q^*(1, a_i)\right]\right]\nonumber\\
    = & \expt\left[ \sum_{i=2}^{M} \sum_{t=1}^{\infty} \mathbbm{1}\{S_t = 1, A_t = a_i, t\le T(K,\pi)\} \left[ V^*(1) - Q^*(1, a_i)\right]\right]\nonumber\\
    = & \sum_{i=2}^{M} \left[ V^*(1) - Q^*(1, a_i)\right] \expt\left[ \sum_{t=1}^{\infty} \mathbbm{1}\{S_t = 1, A_t = a_i, t\le T(K,\pi)\} \right]\nonumber\\
    \le & \sum_{i=2}^{M} \left[ V^*(1) - Q^*(1, a_i)\right] \left( M + \expt\left[ \sum_{t=M + 1}^{\infty} \mathbbm{1}\{S_t = 1, A_t = a_i, t\le T(K,\pi)\} \right]\right).
\end{align}
Let $\epsilon>0$. Let $B(t)\coloneqq  \frac{1+\epsilon}{2(\mu(a_1) - \mu(a_i))^2}\left[\log t + 4\log (\log t) \right]$. Then we have
\begin{align}\label{equ:proof-lemma-sd-cb-3-second}
    & \expt\left[ \sum_{t=M+1}^{\infty} \mathbbm{1}\{S_t = 1, A_t = a_i, t\le T(K,\pi)\} \right] \nonumber\\
    = & \expt\left[ B(T(K,\pi)) + \sum_{t=M+1}^{\infty} \mathbbm{1}\{S_t = 1, A_t = a_i, t\le T(K,\pi), N^1_t(a_i)\ge B(T(K,\pi))\} \right]\nonumber\\
    \le & \expt\left[ B(T(K,\pi)) + \sum_{t=M+1}^{\infty} \mathbbm{1}\{S_t = 1, A_t = a_i, t\le T(K,\pi), N_t(a_i)\ge B(T(K,\pi))\} \right]\nonumber\\
    \le & \expt\left[ B(T(K,\pi)) + \sum_{t=M+1}^{\infty} \mathbbm{1}\{S_t = 1, A_t = a_i, N_t(a_i)\ge B(t)\} \right] \nonumber\\
    = & \expt[B(T(K,\pi))] + \sum_{t=M+1}^{\infty} \prob\left(S_t = 1, A_t = a_i, N_t(a_i)\ge B(t)\right)
\end{align}
where the first inequality is due to the fact that $N_t(a_i)\ge N^1_t(a_i)$, the second inequality is due to the fact that $B(t)$ is increasing in $t$, and the last equality is by monotone convergence theorem. For the second term, we have
\begin{align}\label{equ:proof-lemma-sd-cb-3-summation-prob}
    & \sum_{t=M+1}^{\infty}\prob\left(S_t = 1, A_t = a_i, N_t(a_i)\ge B(t)\right)\nonumber\\
    \le & \sum_{t=M+1}^{\infty}\prob\left(\tilde{\mu}^1_t(a_1) < \mu(a_1)\right) + \sum_{t=M+1}^{\infty}\prob\left(S_t = 1, A_t = a_i, N_t(a_i)\ge B(t), \tilde{\mu}^1_t(a_1) \ge \mu(a_1)\right)\nonumber\\
    \le & \sum_{t=M+1}^{\infty} \frac{6e}{t(\log t)^2} + \sum_{t=M+1}^{\infty}\prob\left(S_t = 1, A_t = a_i, N_t(a_i)\ge B(t), \tilde{\mu}^1_t(a_1) \ge \mu(a_1)\right)\nonumber\\
    \le & \sum_{t=M+1}^{\infty} \frac{6e}{t(\log t)^2} + \sum_{t=M+1}^{\infty}\prob\left(S_t = 1, A_t = a_i, N_t(a_i)\ge B(t), \tilde{\mu}^1_t(a_i)\ge \tilde{\mu}^1_t(a_1) \ge \mu(a_1)\right)\nonumber\\
    \le & \frac{6e}{\log M} + \sum_{t=M+1}^{\infty}\prob\left(A_t = a_i, N_t(a_i)\ge B(t), \tilde{\mu}^1_t(a_i) \ge \mu(a_1)\right)
\end{align}
where the second inequality is by Theorem~10 in~\cite{garivier2011kl}, similar to~\eqref{equ:theorem-10-ref-upper-bound} in the proof of Lemma~\ref{lemma:sd-cb-2}. The third inequality holds since the ULCB algorithm pulls arm $a_i$ in state $1$ if and only if $\tilde{\mu}^1_t(a_i)\ge \tilde{\mu}^1_t(a_1)$ when $t\ge M+1$. Consider that the event $\{A_t = a_i, N_t(a_i)\ge B(t), \tilde{\mu}^1_t(a_i) \ge \mu(a_1)\}$ holds. Then we have
\begin{align}\label{equ:mu-a1-proof-lemma-sd-cb-3}
    \mu(a_1) \le \tilde{\mu}^1_t(a_i) = \bar{\mu}_t(a_i) + \sqrt{\frac{\log t + 4\log(\log t)}{2 N_t(a_i)}} \le \bar{\mu}_t(a_i) + \frac{\mu(a_1)-\mu(a_i)}{\sqrt{1+\epsilon}}
\end{align}
where the last inequality is by $N_t(a_i)\ge B(t)$ and the definition of $B(t)$. Define
\[r_{\epsilon}(a_i)\in(\mu(a_i), \mu(a_1))\]
such that
\begin{align}\label{equ:rai-proof-lemma-sd-cb-3}
    \mu(a_1) - r_{\epsilon}(a_i) = \frac{\mu(a_1)-\mu(a_i)}{\sqrt{1+\epsilon}}.
\end{align}
Then it follows from~\eqref{equ:mu-a1-proof-lemma-sd-cb-3} and~\eqref{equ:rai-proof-lemma-sd-cb-3} that
\begin{align}
    \mu(a_1) \le \bar{\mu}_t(a_i) + \mu(a_1) - r_{\epsilon}(a_i)
\end{align}
which implies that $\bar{\mu}_t(a_i) \ge r_{\epsilon}(a_i)$. Hence, the second term in~\eqref{equ:proof-lemma-sd-cb-3-summation-prob} can be bounded by
\begin{align}\label{equ:proof-lemma-sd-cb-3-sum-prob-2}
    & \sum_{t=M+1}^{\infty}\prob\left(A_t = a_i, N_t(a_i)\ge B(t), \tilde{\mu}^1_t(a_i) \ge \mu(a_1)\right)\nonumber\\
    \le & \sum_{t=M+1}^{\infty}\prob\left(A_t = a_i, \bar{\mu}_t(a_i) \ge r_{\epsilon}(a_i)\right)\nonumber\\
    = & \sum_{t=M+1}^{\infty}\prob\left(A_t = a_i, \bar{\mu}_t(a_i) - \mu(a_i) \ge r_{\epsilon}(a_i) - \mu(a_i)\right)\nonumber\\
    = & \sum_{t=M+1}^{\infty} \sum_{n=1}^{t-1} \prob\left(A_t = a_i, N_t(a_i)=n, \frac{1}{n}\sum_{s=1}^n R_s(a_i) - \mu(a_i) \ge r_{\epsilon}(a_i) - \mu(a_i)\right)\nonumber\\
    \le & \sum_{n=1}^{\infty} \sum_{t=n+1}^{\infty} \prob\left(A_t = a_i, N_t(a_i)=n, \frac{1}{n}\sum_{s=1}^n R_s(a_i) - \mu(a_i) \ge r_{\epsilon}(a_i) - \mu(a_i)\right)\nonumber\\
    \le & \sum_{n=1}^{\infty} \prob\left(\frac{1}{n}\sum_{s=1}^n R_s(a_i) - \mu(a_i) \ge r_{\epsilon}(a_i) - \mu(a_i)\right)
\end{align}
where the second equality is by law of total probability, where $\{R_s(a_i)\}_{s=1}^n$ are i.i.d. Bernoulli rewards of pulling arm $a_i$, and the last inequality is due to the fact that $\{A_t=a_i, N_t(a_i)=n\}_{t=n+1}^{\infty}$ are mutually exclusive and the countable additivity of probability measure. Then by Hoeffding inequality and~\eqref{equ:proof-lemma-sd-cb-3-sum-prob-2}, we have
\begin{align}\label{equ:proof-lemma-sd-cb-3-concentration}
    & \sum_{t=M+1}^{\infty}\prob\left(A_t = a_i, N_t(a_i)\ge B(t), \tilde{\mu}^1_t(a_i) \ge \mu(a_1)\right)\nonumber\\
    \le & \sum_{n=1}^{\infty} \exp\left(-2n \left(r_{\epsilon}(a_i) - \mu(a_i)\right)^2\right)\nonumber\\
    \le & \frac{1}{2\left(r_{\epsilon}(a_i) - \mu(a_i)\right)^2}
\end{align}
Therefore, from~\eqref{equ:proof-lemma-sd-cb-3-first},~\eqref{equ:proof-lemma-sd-cb-3-second},~\eqref{equ:proof-lemma-sd-cb-3-summation-prob}, and~\eqref{equ:proof-lemma-sd-cb-3-concentration}, it follows that
\begin{align}\label{equ:proof-lemma-sd-cb-3-final-1}
    & \expt\left[ \sum_{t=1}^{T(K,\pi)} \sum_{i=2}^{M} \mathbbm{1}\{S_t = 1, A_t = a_i\} \left[ V^*(1) - Q^*(1, a_i)\right]\right] \nonumber\\
    \le & \sum_{i=2}^{M} \left[ V^*(1) - Q^*(1, a_i)\right] \left[ M + \expt[B(T(K,\pi))] + \frac{6e}{\log M} + \frac{1}{2\left(r_{\epsilon}(a_i) - \mu(a_i)\right)^2} \right]
\end{align}
where
\begin{align}\label{equ:proof-lemma-sd-cb-3-final-2}
    \expt[B(T(K,\pi))] = & \expt\left[\frac{1+\epsilon}{2(\mu(a_1) - \mu(a_i))^2}\left[\log T(K,\pi) + 4\log (\log T(K,\pi)) \right]\right]\nonumber\\
    \le & \frac{1+\epsilon}{2(\mu(a_1) - \mu(a_i))^2}\biggl[\log \expt\left[T(K,\pi)\right] + 4\log\left(\log\expt\left[T(K,\pi)\right]\right)\biggr]\nonumber\\
    \le & \frac{1+\epsilon}{2(\mu(a_1) - \mu(a_i))^2}\biggl[\log c_5K + 4\log\left(\log (c_5 K) \right)\biggr]
\end{align}
where the first inequality is by Jensen's inequality, and the second inequality is by Lemma~\ref{lemma:sd-cb-bound-TKpi} (which is presented after this proof). Then it follows from~\eqref{equ:proof-lemma-sd-cb-3-final-1} and~\eqref{equ:proof-lemma-sd-cb-3-final-2} that
\begin{align}
    & \expt\left[ \sum_{t=1}^{T(K,\pi)} \sum_{i=2}^{M} \mathbbm{1}\{S_t = 1, A_t = a_i\} \left[ V^*(1) - Q^*(1, a_i)\right]\right] \nonumber\\
    \le & \sum_{i\neq 1} \frac{1+\epsilon}{2(\mu(a_1)-\mu(a_i))^2} \left(V^*(1) - Q^*(1,a_i)\right) \log K + o(\log K)
\end{align}

\begin{lemma}\label{lemma:sd-cb-bound-TKpi}
Let Assumption~\ref{assum:1} hold. Then for any policy $\pi\in\Pi$, we have
\begin{align}
    \expt[T(K,\pi)] = \sum_{k=1}^K \expt\left[I_k(\pi,S_{k,1},\phi_{k,1})\right] \le \sum_{k=1}^K \expt\left[I_k(\pi^*, S_{k,1})\right] \le c_5 K
\end{align}
\end{lemma}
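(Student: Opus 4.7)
The first equality is immediate from the definition $T(K,\pi) \coloneqq \sum_{k=1}^{K} I_k(\pi, S_{k,1}, \phi_{k,1})$ and linearity of expectation.

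For the inequality $\expt[I_k(\pi, S_{k,1}, \phi_{k,1})] \le \expt[I_k(\pi^*, S_{k,1})]$, the plan is a pathwise coupling. Conditional on any realization $(S_{k,1}, \phi_{k,1}) = (s, \varphi)$, I would construct both the $\pi$-episode and the $\pi^*$-episode on a common probability space driven by i.i.d.\ pairs $(U_h, V_h) \sim \mathrm{Unif}(0,1)^2$: at step $h$, set $R_h^{\pi} \coloneqq \mathbbm{1}\{U_h \le \mu(A_h^{\pi})\}$ with $A_h^{\pi} = \pi(S_h^{\pi}, \phi_h^{\pi})$ and $R_h^* \coloneqq \mathbbm{1}\{U_h \le \mu(a_1)\}$, and declare the respective abandonment events by $V_h \le q(S_h^{\pi}, R_h^{\pi})$ and $V_h \le q(S_h^*, R_h^*)$. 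Because Assumption~\ref{assum:1} gives $\mu(a) \le \mu(a_1)$ for every $a$, we have $R_h^{\pi} \le R_h^*$ pointwise. An induction on $h$ then shows that while both episodes are alive, $S_h^{\pi} \le S_h^*$ (base case $h=1$ since both start at $s$; the inductive step follows from $S_{h+1} = R_h$). Assumption~\ref{assum:1} applied to $S_h^{\pi} + R_h^{\pi} \le S_h^* + R_h^*$ yields $q(S_h^{\pi}, R_h^{\pi}) \ge q(S_h^*, R_h^*)$, so any step at which $\pi^*$ abandons is also one at which $\pi$ abandons under the shared threshold $V_h$. Hence $I_k(\pi, s, \varphi) \le I_k(\pi^*, s)$ almost surely on the coupled space, and integrating over $(s, \varphi)$ gives the first inequality.

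For the bound $\sum_{k=1}^{K} \expt[I_k(\pi^*, S_{k,1})] \le c_5 K$, it suffices to show $L^*(s) \coloneqq \expt[I_k(\pi^*, s)]$ is finite uniformly over $s \in \{0, 1\}$ and take $c_5 \coloneqq \max_{s \in \{0,1\}} L^*(s)$. Conditioning on the first transition under $\pi^*$, the vector $\mathbf{L}^* = (L^*(0), L^*(1))^{\top}$ satisfies $\mathbf{L}^* = \mathbf{1} + P \mathbf{L}^*$, where $P$ is the $2 \times 2$ sub-stochastic transition matrix on $\{0, 1\}$ induced by always pulling $a_1$ (entries read off from the transition probabilities in Appendix~\ref{app:transition}). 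Under Assumption~\ref{assum:1}, the conditions $q(0,0)>0$ and $\mu(a_1)<1$ make $\pi^*$ proper; for a finite-state sub-stochastic chain this is equivalent to $\rho(P)<1$, so $I - P$ is invertible and $\mathbf{L}^* = (I-P)^{-1}\mathbf{1}$ has finite entries. Taking expectation over $S_{k,1}$ and summing over $k$ then yields the claim.

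The main technical care lies in the coupling: the action $A_h^{\pi}$ depends on the $\pi$-episode's own realized history $\phi_h^{\pi}$ rather than on the $\pi^*$-episode, so the coupling is only on the exogenous randomness $(U_h, V_h)$, and the induction $S_h^{\pi} \le S_h^*$ must be tracked only up to the first abandonment (after which the comparison becomes vacuous since $I_k(\pi, s, \varphi)$ is already determined and the $\pi^*$-episode is still alive by the previous step). The remainder is a standard absorbing-chain calculation.
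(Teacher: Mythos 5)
Your proof is correct, but it takes a genuinely different route from the paper's. The paper proves the middle inequality by a dynamic-programming argument: it introduces the episode-length-maximizing policy $\pi_*'$, writes the Bellman equations for $\expt[I(\pi_*',s)\mid A_{k,1}=a]$, shows $\expt[I(\pi_*',1)]\ge\expt[I(\pi_*',0)]$, and concludes (mirroring the proof of Lemma~\ref{lemma:1} with reward identically $1$) that $\pi_*'=\pi^*$, so every policy's expected length is dominated in expectation; the constant is then $c_5=\expt[I(\pi^*,1)]$. You instead build an exogenous-randomness coupling that yields the stronger pathwise statement $I_k(\pi,s,\varphi)\le I_k(\pi^*,s)$, and handle finiteness by the absorbing-chain identity $\mathbf{L}^*=(I-P)^{-1}\mathbf{1}$. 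Your coupling is sound: the only delicate point is that Assumption~\ref{assum:1} orders $q$ only under a \emph{strict} inequality of the sums, but your construction gives componentwise domination $(S_h^{\pi},R_h^{\pi})\le(S_h^*,R_h^*)$, so equality of sums forces equality of both arguments and the inequality $q(S_h^{\pi},R_h^{\pi})\ge q(S_h^*,R_h^*)$ still holds; it would be worth making that one-line observation explicit. What each approach buys: your coupling establishes stochastic dominance of the episode lengths (not just of their means) and avoids invoking SSP theory to reduce the length-maximization problem to stationary policies, while the paper's DP argument reuses machinery already set up for Lemma~\ref{lemma:1} and would extend to settings where no monotone coupling exists.
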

\begin{proof}
Define a genie-aided (model-based) policy $\pi_*'$ that maximizes the expected number of steps in one episode, i.e., for any $s\in\{0,1\}$,
$$
\pi_*' \coloneqq  \operatorname{argmax}_{\pi} \expt\left[I(\pi,s)\right]
$$
where $\pi$ is taken over all policies, $I(\pi,s)$ is the number of steps in one episode given initial state $s$. We omit the subscript $k$ in $I(\pi,s)$ since the distribution does not depend on the episode number $k$. Then we have
\begin{align}
    \label{equ:lemma-sd-cb-bound-TKpi-1}
    \expt\left[I(\pi_*',1)|A_{k,1}=a\right] =& 1 + \mu_{a} (1-q(1,1)) \expt\left[I(\pi_*',1)\right] + (1-\mu_{a})(1-q(1,0)) \expt\left[I(\pi_*',0)\right]\\
    \label{equ:lemma-sd-cb-bound-TKpi-2}
    \expt\left[I(\pi_*',0)|A_{k,1}=a\right] =& 1 + \mu_{a}(1-q(0,1)) \expt\left[I(\pi_*',1)\right] + (1-\mu_{a})(1-q(0,0))\expt\left[I(\pi_*',0)\right]
\end{align}
Then
\begin{align}
    &\expt\left[I(\pi_*',1)|A_{k,1}=a\right] - \expt\left[I(\pi_*',0)|A_{k,1}=a\right]\nonumber\\
    =& \mu_{a} (q(0,1)-q(1,1)) \expt\left[I(\pi_*',1)\right] +(1-\mu_{a}) (q(0,0)-q(1,0)) \expt\left[I(\pi_*',0)\right] \ge 0
\end{align}
where the inequality follows from Assumption~\ref{assum:1}. Hence, we have
\begin{align}\label{equ:lemma-sd-cb-bound-TKpi-3}
    & \expt\left[I(\pi_*',1)\right] - \expt\left[I(\pi_*',0)\right]\nonumber\\
    = & \max_{a} \expt\left[I(\pi_*',1)|A_{k,1}=a\right] - \max_{a}\expt\left[I(\pi_*',0)|A_{k,1}=a\right]\nonumber\\
    = & \max_{a} \expt\left[I(\pi_*',1)|A_{k,1}=a\right] - \expt\left[I(\pi_*',0)|A_{k,1}=a'\right]\nonumber\\
    \ge & \expt\left[I(\pi_*',1)|A_{k,1}=a'\right] - \expt\left[I(\pi_*',0)|A_{k,1}=a'\right]\nonumber\\
    \ge & 0
\end{align}
where $a'\coloneqq \mathop{\mathrm{argmax}}_a \expt\left[I(\pi_*',0)|A_{k,1}=a\right]$. Hence, from~\eqref{equ:lemma-sd-cb-bound-TKpi-1} and~\eqref{equ:lemma-sd-cb-bound-TKpi-2}, it follows that
\begin{equation}
    \begin{aligned}
    & \expt\left[I(\pi_*',1)|A_{k,1}=a_1\right] - \expt\left[I(\pi_*',1)|A_{k,1}=a_i\right]\\
    = & (\mu_{a_1}-\mu_{a_i})\biggl[(1-q(1,1))\expt\left[I(\pi_*',1)\right]-(1-q(1,0))\expt\left[I(\pi_*',0)\right]\biggr] \\
    \ge & 0,
    \end{aligned}
\end{equation}
where the inequality follows from~\eqref{equ:lemma-sd-cb-bound-TKpi-3} and Assumption~~\ref{assum:1}. Similarly,
\begin{equation}
    \begin{aligned}
    & \expt\left[I(\pi_*',0)|A_{k,1}=a_1\right] - \expt\left[I(\pi_*',0)|A_{k,1}=a_i\right]\\
    = & (\mu_{a_1}-\mu_{a_i}) \biggl[(1-q(0,1))\expt\left[I(\pi_*',1)\right]-(1-q(0,0))\expt\left[I(\pi_*',0)\right]\biggr]\\
    \ge & 0.
    \end{aligned}
\end{equation}
Therefore, the policy $\pi_*'$ is always pulling $a_1$, i.e., $\pi_*'=\pi^*$, which implies that
\begin{align}
    \expt[T(K,\pi)] = \sum_{k=1}^K \expt\left[I_k(\pi,S_{k,1},\phi_{k,1})\right] \le \sum_{k=1}^K \expt\left[I_k(\pi^*, S_{k,1})\right] \le \sum_{k=1}^K \expt\left[I(\pi^*, 1)\right] \le c_5 K
\end{align}
\end{proof}

\subsection{Proof of Theorem~\ref{theorem:2}: upper bound for KL-ULCB}
\label{app:theorem2}

The proof idea is similar to that of Theorem~\ref{theorem:1}. The proof is based on the regret decomposition in~\eqref{equ:regret-decomp}. We will first show that the number of optimal pulls scales linearly with $t$ with high probability. Then, we will bound the expected regrets induced in state $0$ and state $1$ respectively. We first prove a lemma similar to Lemma~\ref{lemma:sd-cb-1} as follows:
\begin{lemma}\label{lemma:sd-kl-cb-1}
Let all the assumptions in Theorem~\ref{theorem:2} hold. Consider the KL-ULCB algorithm with $c_0=c_1=1$, and $c=4$. Let $p_{\min}\coloneqq \mu(a_M)\min\left\{1-q(0,1), 1-q(1,1)\right\}$. Let $\eta\in(0, p_{\min})$ be a constant. Let $\gamma' >0$ be a constant and $r_{\gamma'}\in(\mu(a_2), \mu(a_1))$ such that $\mathrm{kl}(r_{\gamma'}, \mu(a_1))=\frac{\mathrm{kl}(\mu(a_2), \mu(a_1))}{1+\gamma'}$. Define $T_3$ such that for any $t\ge T_3$, 
    $$\frac{(p_{\min}-\eta)(t-1)}{2(M-1)}\ge 2,~\mathrm{and}~ \frac{\log t + 4 \log \left(\log t \right)}{\frac{(p_{\min} - \eta)(t-1)}{2(M-1)} - 1} \le \frac{\mathrm{kl}(\mu(a_2), \mu(a_1))}{1+\gamma'}.$$
    Then for any $t\ge T_3$
    \begin{align}
        & \prob\left(N_{t}(a_1)\le \frac{(p_{\min}-\eta)(t-1)}{2}\right) \le \prob\left(N_{t}^1(a_1)\le \frac{(p_{\min}-\eta)(t-1)}{2}\right) \le \exp\left(-\frac{\eta^2(t-1)}{2}\right)\nonumber\\
        & + \frac{M-1}{\mathrm{kl}\left(r_{\gamma'}, \mu(a_2)\right) \exp\left(\mathrm{kl}\left(r_{\gamma'}, \mu(a_2)\right) \left[\frac{(p_{\min} - \eta)(t-1)}{2(M-1)} - 2\right]\right)} + \frac{c_3}{c_2(t-1)\left[\log \left(c_2(t-1)\right)\right]^2}
    \end{align}
    where $c_2\coloneqq  \frac{p_{\min} - \eta}{2(M-1)}$ and $c_3\coloneqq \frac{4+e}{\log 2}\log\frac{2}{c_2} + \frac{e}{\log 2}$ are constants.
\end{lemma}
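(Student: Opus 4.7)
The plan is to mirror the structure of the proof of Lemma~\ref{lemma:sd-cb-1}, but replace the sub-Gaussian (Hoeffding-type) bounds with their KL-divergence counterparts. The first inequality $\prob(N_t(a_1)\le \cdot)\le \prob(N_t^1(a_1)\le \cdot)$ is immediate since $N_t^1(a_1)\le N_t(a_1)$. To bound $\prob(N_t^1(a_1)\le (p_{\min}-\eta)(t-1)/2)$, I will split on the event $\{\sum_{i=1}^{t-1}\mathbbm{1}\{S_i=1\}>(p_{\min}-\eta)(t-1)\}$: when this event fails, a martingale Azuma–Hoeffding argument (identical to the one used in the sub-Gaussian proof, since it only depends on the MDP and not on the algorithm's indices) gives the term $\exp(-\eta^2(t-1)/2)$. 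When it holds, pigeonhole forces some suboptimal arm $a_j$ to satisfy $N_t^1(a_j)>\frac{(p_{\min}-\eta)(t-1)}{2(M-1)}$, so if $\tau_t$ denotes the time step of its $\lceil\cdot\rceil$-th pull in state $1$, then $\tau_t\ge L_t\coloneqq \frac{(p_{\min}-\eta)(t-1)}{2(M-1)}+(M-1)$, $N_{\tau_t}(a_j)\ge \frac{(p_{\min}-\eta)(t-1)}{2(M-1)}-1$, and $\tilde{\mu}_{\tau_t}^{1}(a_j)\ge \tilde{\mu}_{\tau_t}^{1}(a_1)$.

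I now decompose this event by whether $\tilde{\mu}_{\tau_t}^{1}(a_1)\ge \mu(a_1)$. For the failure case $\{\tilde{\mu}_{\tau_t}^{1}(a_1)<\mu(a_1),\tau_t\ge L_t\}$, I will invoke the KL version of the maximal self-normalized deviation bound used in Theorem~10 of~\cite{garivier2011kl} (exactly as Lemma~\ref{lemma:sd-cb-prob-tilde-mu} was used in the ULCB proof, replacing the Pinsker step with the direct KL bound), which yields the term $\frac{c_3}{c_2(t-1)[\log(c_2(t-1))]^2}$ after the same algebraic simplification done in~\eqref{equ:nta1-second-ineq}. The constants $c_2,c_3$ match because they come from the deterministic lower bound $L_t$ and not from the concentration constant.

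The key new step is the complementary event $\{\tilde{\mu}_{\tau_t}^{1}(a_j)\ge \tilde{\mu}_{\tau_t}^{1}(a_1)\ge \mu(a_1)\}$. By definition of the KL-ULCB index with $c_1=1,c=4$, $\tilde{\mu}_{\tau_t}^{1}(a_j)\ge \mu(a_1)$ together with $\bar{\mu}_{\tau_t}(a_j)\le \tilde{\mu}_{\tau_t}^{1}(a_j)$ forces
\begin{equation*}
\mathrm{kl}(\bar{\mu}_{\tau_t}(a_j),\mu(a_1))\,N_{\tau_t}(a_j)\le \log \tau_t+4\log\log\tau_t.
\end{equation*}
Using $\tau_t\le t$ and the lower bound on $N_{\tau_t}(a_j)$, together with the definition of $T_3$, gives $\mathrm{kl}(\bar{\mu}_{\tau_t}(a_j),\mu(a_1))\le \mathrm{kl}(\mu(a_2),\mu(a_1))/(1+\gamma')=\mathrm{kl}(r_{\gamma'},\mu(a_1))$. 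Since $\mathrm{kl}(\cdot,\mu(a_1))$ is strictly decreasing on $[0,\mu(a_1)]$, this implies $\bar{\mu}_{\tau_t}(a_j)\ge r_{\gamma'}$ (the case $\bar{\mu}_{\tau_t}(a_j)>\mu(a_1)$ is handled identically and is even stronger). I then bound $\prob(\bar{\mu}_{\tau_t}(a_j)\ge r_{\gamma'})$ by a union bound over $j\in\{2,\ldots,M\}$ and a further union over the value $n$ of $N_{\tau_t}(a_j)\in\{\lceil\cdot\rceil-1,\ldots,t-1\}$, to which I apply the Chernoff/Cramér bound for Bernoulli averages:
\begin{equation*}
\prob\!\left(\tfrac{1}{n}\sum_{s=1}^n R_s(a_j)\ge r_{\gamma'}\right)\le \exp\!\bigl(-n\,\mathrm{kl}(r_{\gamma'},\mu(a_j))\bigr)\le \exp\!\bigl(-n\,\mathrm{kl}(r_{\gamma'},\mu(a_2))\bigr),
\end{equation*}
where the last inequality uses monotonicity of $\mathrm{kl}(r_{\gamma'},\cdot)$ on $[0,r_{\gamma'}]$ and $\mu(a_j)\le \mu(a_2)<r_{\gamma'}$. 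Summing the geometric tail from $n=\lceil (p_{\min}-\eta)(t-1)/(2(M-1))\rceil-1$ produces the middle term $\frac{M-1}{\mathrm{kl}(r_{\gamma'},\mu(a_2))}\exp\bigl(-\mathrm{kl}(r_{\gamma'},\mu(a_2))[\frac{(p_{\min}-\eta)(t-1)}{2(M-1)}-2]\bigr)$.

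The main obstacle is the transition from the index inequality $\tilde{\mu}^{1}_{\tau_t}(a_j)\ge \mu(a_1)$ to the sample-mean inequality $\bar{\mu}_{\tau_t}(a_j)\ge r_{\gamma'}$: this uses both the exact KL-UCB definition of the index and the calibration of $T_3$ so that the confidence radius $[\log t+4\log\log t]/N_{\tau_t}(a_j)$ is at most $\mathrm{kl}(r_{\gamma'},\mu(a_1))$. Once this link is established, the rest of the argument follows the same template as Lemma~\ref{lemma:sd-cb-1}, and the tightness with respect to the Garivier–Cappé term depends on the KL self-normalized concentration lemma already used in the ULCB analysis.
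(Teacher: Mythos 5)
Your proposal is correct and follows essentially the same route as the paper's own proof: the same three-way decomposition inherited from Lemma~\ref{lemma:sd-cb-1}, the same KL analogue of the self-normalized deviation bound for the event $\{\tilde{\mu}_{\tau_t}^{1}(a_1)<\mu(a_1)\}$, and the same passage from $\tilde{\mu}_{\tau_t}^{1}(a_j)\ge\mu(a_1)$ to $\bar{\mu}_{\tau_t}(a_j)\ge r_{\gamma'}$ via the calibration of $T_3$, followed by a union bound and the Bernoulli Chernoff bound $\exp(-n\,\mathrm{kl}(r_{\gamma'},\mu(a_j)))$ (which is the same concentration inequality the paper cites). The only cosmetic difference is that the paper routes the last step through the truncated divergences $\mathrm{kl}^{+}/\mathrm{kl}^{-}$, which you handle equivalently by treating the case $\bar{\mu}_{\tau_t}(a_j)>\mu(a_1)$ separately.
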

\begin{proof}
Let $t\ge T_3$.
Using the same argument as in the proof of Lemma~\ref{lemma:sd-cb-1}, we have
\begin{align}\label{equ:sd-kl-cb-prob-nta1}
    &\prob\left(N_{t}(a_1)\le \frac{(p_{\min}-\eta)(t-1)}{2}\right) \le \prob\left(N_{t}^{1}(a_1)\le \frac{(p_{\min}-\eta)(t-1)}{2}\right) \nonumber \\
    \le & \prob\left(\tilde{\mu}_{\tau_t}^{1}(a_j) \ge \mu(a_1), N_{\tau_t}(a_j) \ge \left\lceil{\frac{(p_{\min} - \eta)(t-1)}{2(M-1)}}\right\rceil - 1 \right) \nonumber\\
    & + \prob\left(\tilde{\mu}_{\tau_t}^{1}(a_1) < \mu(a_1), \tau_t \ge L_t\right) + \exp\left(-\frac{\eta^2(t-1)}{2}\right)
\end{align}
where $N_{t}^{1}(a)$ denotes the number of times arm $a\in\mathcal{A}$ was pulled in state $1$ before time step $t$, $j\in\operatorname{\mathrm{argmax}}_{i\in\{2,...,M\}} N_{t}^{1}(a_i)$, $\tau_t<t$ denotes the time step when $a_j$ is pulled in state $1$ for the $\left\lceil{\frac{(p_{\min} - \eta)(t-1)}{2(M-1)}}\right\rceil$-th time, and $L_t\coloneqq \frac{(p_{\min} - \eta)(t-1)}{2(M-1)} + (M-1)$. The second term in~\eqref{equ:sd-kl-cb-prob-nta1}, $\prob\left(\tilde{\mu}_{\tau_t}^{1}(a_1) < \mu(a_1), \tau_t \ge L_t\right)$, can be bounded by
\begin{align}\label{equ:sd-kl-cb-prob-tilde-mu}
    \prob\left(\tilde{\mu}_{\tau_t}^{1}(a_1) < \mu(a_1), \tau_t \ge L_t\right) \le e \left\lceil \delta_{L_t} \log (t-2) \right\rceil \exp(-\delta_{L_t})
\end{align}
where $\delta_{L_t}\coloneqq  \log L_t + 4\log(\log L_t)$. The proof of~\eqref{equ:sd-kl-cb-prob-tilde-mu} is omitted since it can be proved the same way as Lemma~\ref{lemma:sd-cb-prob-tilde-mu} by just replacing the Euclidean distance in the proof of Lemma~\ref{lemma:sd-cb-prob-tilde-mu} with KL divergence. Hence,
we have
\begin{align}\label{equ:sd-kl-cb-prob-ub-second-term}
    & \prob\left(\tilde{\mu}_{\tau_t}^{1}(a_1) < \mu(a_1), \tau_t \ge L_t\right) \le \frac{c_3}{c_2(t-1)\left[\log \left(c_2(t-1)\right)\right]^2}
\end{align}
where $c_2\coloneqq  \frac{p_{\min} - \eta}{2(M-1)}$ and $c_3\coloneqq \frac{4+e}{\log 2}\log\frac{2}{c_2} + \frac{e}{\log 2}$. Consider that the event
\[\left\{\tilde{\mu}_{\tau_t}^{1}(a_j)\ge \mu(a_1), N_{\tau_t}(a_j) \ge \left\lceil{\frac{(p_{\min} - \eta)(t-1)}{2(M-1)}}\right\rceil - 1\right\}\]
holds. Define $\mathrm{kl}^+(x,y)\coloneqq \mathrm{kl}(x,y)\mathbbm{1}\{x<y\}$. Then we have
\begin{align}
    \mathrm{kl}^+(\bar{\mu}_{\tau_t}(a_j),\mu(a_1))\le &  \mathrm{kl}^+(\bar{\mu}_{\tau_t}(a_j),\tilde{\mu}_{\tau_t}^{1}(a_j))=\frac{\log \tau_t + 4 \log \left(\log \tau_t \right)}{N_{\tau_t}(a_j)} \le \frac{\log t + 4 \log \left(\log t \right)}{\frac{(p_{\min} - \eta)(t-1)}{2(M-1)} - 1}\nonumber\\
    \le & \frac{\mathrm{kl}(\mu(a_2), \mu(a_1))}{1+\gamma'}
\end{align}
where the last inequality is by $t\ge T_3$ and the definition of $T_3$.
Hence, for the first term in~\eqref{equ:sd-kl-cb-prob-nta1}, we have
\begin{align}\label{equ:sd-kl-cb-prob-tilde-mu-1-taut}
    & \prob\left(\tilde{\mu}_{\tau_t}^{1}(a_j) \ge \mu(a_1), N_{\tau_t}(a_j) \ge \left\lceil{\frac{(p_{\min} - \eta)(t-1)}{2(M-1)}}\right\rceil - 1 \right)\nonumber\\
    \le & \prob\left(\mathrm{kl}^+(\bar{\mu}_{\tau_t}(a_j),\mu(a_1))\le \frac{\mathrm{kl}(\mu(a_2), \mu(a_1))}{1+\gamma'}, N_{\tau_t}(a_j) \ge \left\lceil{\frac{(p_{\min} - \eta)(t-1)}{2(M-1)}}\right\rceil - 1 \right)\nonumber\\
    \le & \sum_{i=2}^{M} \prob\left(\mathrm{kl}^+(\bar{\mu}_{\tau_t}(a_i),\mu(a_1))\le \frac{\mathrm{kl}(\mu(a_2), \mu(a_1))}{1+\gamma'}, N_{\tau_t}(a_i) \ge \left\lceil{\frac{(p_{\min} - \eta)(t-1)}{2(M-1)}}\right\rceil - 1 \right)\nonumber\\
    \le & \sum_{i=2}^{M} \sum_{n=\left\lceil{\frac{(p_{\min} - \eta)(t-1)}{2(M-1)}}\right\rceil - 1}^{t-1} \prob\left(\mathrm{kl}^+\left(\frac{1}{n}\sum_{s=1}^n R_s(a_i),\mu(a_1)\right)\le \frac{\mathrm{kl}(\mu(a_2), \mu(a_1))}{1+\gamma'} \right)
\end{align}
where the second inequality is by the union bound over all possible $j$, and the last inequality is by the union bound over all possible number of pulls of arm $a_i$, where $\{R_s(a_i)\}_{s=1}^n$ are $n$ i.i.d. Bernoulli rewards of pulling arm $a_i$. Consider that the event $\{\mathrm{kl}^+(\frac{1}{n}\sum_{s=1}^n R_s(a_i),\mu(a_1))\le \frac{\mathrm{kl}(\mu(a_2), \mu(a_1))}{1+\gamma'}\}$ holds. By the definition of $r_{\gamma'}$, we have $\mathrm{kl}^+(\frac{1}{n}\sum_{s=1}^n R_s(a_i),\mu(a_1))\le \mathrm{kl}(r_{\gamma'}, \mu(a_1))$, which implies that $\frac{1}{n}\sum_{s=1}^n R_s(a_i) \ge r_{\gamma'}$. Hence, we have
\begin{align}\label{equ:sd-kl-cb-kl-ineq}
    \mathrm{kl}\left(\frac{1}{n}\sum_{s=1}^n R_s(a_i), \mu(a_i)\right) \ge \mathrm{kl}\left(r_{\gamma'}, \mu(a_i)\right)\ge \mathrm{kl}\left(r_{\gamma'}, \mu(a_2)\right)
\end{align}
since $\mu(a_i)\le \mu(a_2) < r_{\gamma'}\le \frac{1}{n}\sum_{s=1}^n R_s(a_i)$. Hence, from~\eqref{equ:sd-kl-cb-prob-tilde-mu-1-taut} and~\eqref{equ:sd-kl-cb-kl-ineq}, it follows that
\begin{align}\label{equ:sd-kl-cb-prob-ub-first-term}
    & \prob\left(\tilde{\mu}_{\tau_t}^{1}(a_j) \ge \mu(a_1), N_{\tau_t}(a_j) \ge \left\lceil{\frac{(p_{\min} - \eta)(t-1)}{2(M-1)}}\right\rceil - 1 \right)\nonumber\\
    \le & \sum_{i=2}^{M} \sum_{n=\left\lceil{\frac{(p_{\min} - \eta)(t-1)}{2(M-1)}}\right\rceil - 1}^{t-1} \prob\left(\mathrm{kl}\left(\frac{1}{n}\sum_{s=1}^n R_s(a_i), \mu(a_i)\right) \ge \mathrm{kl}\left(r_{\gamma'}, \mu(a_2)\right), \frac{1}{n}\sum_{s=1}^n R_s(a_i) > \mu(a_i) \right)\nonumber\\
    \le & (M-1) \sum_{n=\left\lceil{\frac{(p_{\min} - \eta)(t-1)}{2(M-1)}}\right\rceil - 1}^{t-1} \exp\left(-n \mathrm{kl}\left(r_{\gamma'}, \mu(a_2)\right) \right)\nonumber\\
    \le & \frac{M-1}{\mathrm{kl}\left(r_{\gamma'}, \mu(a_2)\right) \exp\left(\mathrm{kl}\left(r_{\gamma'}, \mu(a_2)\right) \left[\frac{(p_{\min} - \eta)(t-1)}{2(M-1)} - 2\right]\right)}
\end{align}
where the second inequality uses the concentration inequality for KL divergence in~\cite{mardia2020concentration}, and the last inequality is by integration. From~\eqref{equ:sd-kl-cb-prob-nta1},~\eqref{equ:sd-kl-cb-prob-ub-second-term}, and~\eqref{equ:sd-kl-cb-prob-ub-first-term}, Lemma~\ref{lemma:sd-kl-cb-1} is proved.
\end{proof}

Lemma~\ref{lemma:sd-kl-cb-1} shows that when $t$ is large enough, the number of optimal pulls scales linearly with $t$ with high probability. This plays an important role in the proof of Lemma~\ref{lemma:sd-kl-cb-2}, which bounds the expected regret induced by pulling suboptimal arms in state $0$ by a constant.

\begin{lemma}\label{lemma:sd-kl-cb-2}
    Let all the assumptions in Theorem~\ref{theorem:2} hold. Consider the KL-ULCB algorithm with $c_0=c_1=1$, and $c=4$. The regret induced in state $0$ can be bounded by
    \begin{align}
        \expt\left[ \sum_{t=1}^{T(K,\pi)} \sum_{i=2}^{M} \mathbbm{1}\{S_t = 0, A_t = a_i\} \left[ V^*(0) - Q^*(0, a_i)\right]\right] \le \sum_{i=2}^{M} c_{6,i} \left[ V^*(0) - Q^*(0, a_i)\right]
    \end{align}
    where $c_{6,i}$ are constants which depend only on $M$, $\mu(a_1)$, $\mu(a_2)$, $\mu(a_i)$, $p_{\mathrm{min}}$, $\eta$, and $\gamma'$.
\end{lemma}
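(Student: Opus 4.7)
The plan is to mirror the proof of Lemma~\ref{lemma:sd-cb-2}, replacing the Hoeffding-based concentration by its KL-divergence analogue. First, I would apply monotone convergence and linearity of expectation to reduce the left-hand side to $\sum_{i=2}^M [V^*(0) - Q^*(0,a_i)] \sum_{t\ge 1} \prob(S_t = 0, A_t = a_i)$, so it suffices to prove each inner sum is finite. I would then split
\[
\prob(S_t = 0, A_t = a_i) \le \prob\bigl(\tilde{\mu}_t^0(a_i) > \mu(a_i)\bigr) + \prob\bigl(\tilde{\mu}_t^1(a_1) < \mu(a_1)\bigr) + \prob(\mathcal{G}_t),
\]
where $\mathcal{G}_t \coloneqq \{S_t = 0,\ A_t = a_i,\ \tilde{\mu}_t^0(a_i) \le \mu(a_i),\ \tilde{\mu}_t^1(a_1) \ge \mu(a_1)\}$. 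The first two terms are each $O(1/(t(\log t)^2))$, and hence summable, by the same KL-confidence concentration argument used to obtain Theorem~10 in~\cite{garivier2011kl} (applied once as a lower-tail bound for arm $a_i$ and once as an upper-tail bound for arm $a_1$; see the analogous Lemma~\ref{lemma:sd-cb-prob-tilde-mu} in the ULCB proof).

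For the third term, I would observe that on $\mathcal{G}_t$ the KL-ULCB selection rule in state $0$ forces $\tilde{\mu}_t^0(a_i) \ge \tilde{\mu}_t^0(a_1)$, so combined with $\tilde{\mu}_t^0(a_i) \le \mu(a_i)$ this gives $\tilde{\mu}_t^0(a_1) \le \mu(a_i)$. The strategy is to show that for $t$ above some threshold $T_i$, this inequality can only hold on a further rare event. Specifically, Lemma~\ref{lemma:sd-kl-cb-1} supplies $N_t(a_1) \ge (p_{\min}-\eta)(t-1)/2$ outside a summable-probability exception set, and a Hoeffding-type concentration on $\bar{\mu}_t(a_1)$, conditioned on $N_t(a_1) = \Omega(t)$, gives $\bar{\mu}_t(a_1) \ge \mu(a_1) - \gamma''$ outside another summable exception set, where I fix $\gamma'' \in (0,\mu(a_1) - \mu(a_2))$.

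The main obstacle is deriving the contradiction from the KL definition of $\tilde{\mu}_t^0(a_1)$: unlike the ULCB case, the KL confidence set is asymmetric about $\bar{\mu}_t(a_1)$, so $\tilde{\mu}_t^0(a_1)$ has no clean additive expression in terms of $\tilde{\mu}_t^1(a_1)$. Instead, I would use the equivalent characterisation that $\tilde{\mu}_t^0(a_1) > \mu(a_i)$ holds if and only if $\bar{\mu}_t(a_1) > \mu(a_i)$ and $\mathrm{kl}(\bar{\mu}_t(a_1),\mu(a_i))\, N_t(a_1) > \log t + 4\log(\log t)$. On the good events above, $\bar{\mu}_t(a_1) \ge \mu(a_1) - \gamma'' > \mu(a_2) \ge \mu(a_i)$, and monotonicity of $\mathrm{kl}(p,q)$ in $p$ and $q$ in the relevant regime gives $\mathrm{kl}(\bar{\mu}_t(a_1),\mu(a_i)) \ge \mathrm{kl}(\mu(a_1)-\gamma'',\mu(a_2)) > 0$, so the left side is $\Omega(t)$ while the right side is $O(\log t)$; thus $\tilde{\mu}_t^0(a_1) > \mu(a_i) \ge \tilde{\mu}_t^0(a_i)$ for all $t \ge T_i$, contradicting the action choice inside $\mathcal{G}_t$.

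Putting the pieces together, for $t\ge T_i$ I can bound $\prob(\mathcal{G}_t)$ by the sum of the two exception probabilities, which are summable by Lemma~\ref{lemma:sd-kl-cb-1} and Hoeffding respectively. Absorbing the finitely many terms with $t<T_i$ into the constant and combining the three contributions yields a finite $c_{6,i}$ depending only on $M$, $\mu(a_1)$, $\mu(a_2)$, $\mu(a_i)$, $p_{\min}$, $\eta$, and $\gamma'$, completing the proposed argument.
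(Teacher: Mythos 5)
Your proposal is correct and follows the same skeleton as the paper's proof: reduce to bounding $\sum_{t}\prob(S_t=0,A_t=a_i)$, dispose of the ``index on the wrong side of the true mean'' events via the self-normalized deviation bound (Theorem~10 of \cite{garivier2011kl}), invoke Lemma~\ref{lemma:sd-kl-cb-1} to get $N_t(a_1)=\Omega(t)$ outside a summable exception set, and then show the selection rule in state $0$ forces $a_1$ for large $t$. The one place you genuinely diverge is the last step. The paper stays entirely within KL language: on the surviving event it deduces $\mathrm{kl}^-(\bar{\mu}_t(a_1),\mu(a_i))\le O(\log t)/N_t(a_1)\le \mathrm{kl}(\mu(a_1),\mu(a_i))/(1+\gamma')$, concludes $\bar{\mu}_t(a_1)\le r'_{\gamma'}(a_i)<\mu(a_1)$, and kills that event with the Chernoff-type KL concentration inequality of Mardia et al. You instead establish $\bar{\mu}_t(a_1)\ge\mu(a_1)-\gamma''$ with high probability via Hoeffding (which does require the union bound over the possible values of $N_t(a_1)\ge(p_{\min}-\eta)(t-1)/2$, as in the proof of Lemma~\ref{lemma:sd-cb-1} --- you are terse about this but the technique is already in the paper), and then use the explicit characterization of $\tilde{\mu}_t^0(a_1)$ to show $\tilde{\mu}_t^0(a_1)>\mu(a_i)\ge\tilde{\mu}_t^0(a_i)$, contradicting $A_t=a_i$. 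Both routes give exponentially small probabilities and hence a finite constant of the required form; yours is marginally more elementary (Hoeffding in place of the KL concentration inequality at this step), at the cost of introducing an extra parameter $\gamma''$, which can be fixed as, say, $(\mu(a_1)-\mu(a_2))/2$ so the stated dependence of $c_{6,i}$ is preserved. Your second term $\prob(\tilde{\mu}_t^1(a_1)<\mu(a_1))$ is never used in the analysis of $\mathcal{G}_t$ and could be dropped; it is summable anyway, so this is harmless.
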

\begin{proof}
Using the same argument as in the proof of Lemma~\ref{lemma:sd-cb-2}, we have
\begin{align}\label{equ:sd-kl-cb-lemma-2-beginning}
    & \expt\left[ \sum_{t=1}^{T(K,\pi)} \sum_{i=2}^{M} \mathbbm{1}\{S_t = 0, A_t = a_i\} \left[ V^*(0) - Q^*(0, a_i)\right]\right] \nonumber\\
    \le & \sum_{i=2}^{M} \left[ V^*(0) - Q^*(0, a_i)\right] \sum_{t=1}^{\infty} \prob \left(S_t = 0, A_t = a_i\right)
\end{align}
where $\prob \left(S_t = 0, A_t = a_i\right)$ can be bounded by
\begin{align}
    & \prob \left(S_t = 0, A_t = a_i\right) \le \prob \left(S_t = 0, A_t = a_i, \mu(a_i)\ge \tilde{\mu}_t^0(a_i)\right) + \prob\left(\mu(a_i) < \tilde{\mu}_t^0(a_i)\right)\nonumber\\
    \le & \prob \left(S_t = 0, A_t = a_i, \mu(a_i)\ge \tilde{\mu}_t^0(a_i)\right) + \frac{6e}{t\left(\log t\right)^2}
\end{align}
for any $t\ge T_3$, where the inequality is by Theorem~10 in~\cite{garivier2011kl}. Note that $S_t=0, A_t=a_i$ implies that $\tilde{\mu}_t^0(a_i) \ge \tilde{\mu}_t^0(a_1)$ by the KL-ULCB algorithm. Hence, for any $t\ge T_3$, we have
\begin{align}\label{equ:sd-kl-cb-lemma-2-prob-state-0}
    & \prob \left(S_t = 0, A_t = a_i\right) \nonumber\\
    \le & \prob \left(\mu(a_i)\ge \tilde{\mu}_t^0(a_i), \tilde{\mu}_t^0(a_i) \ge \tilde{\mu}_t^0(a_1)\right) + \frac{6e}{t\left(\log t\right)^2}\nonumber\\
    \le & \prob \left(\mu(a_i) \ge \tilde{\mu}_t^0(a_1)\right) + \frac{6e}{t\left(\log t\right)^2}\nonumber\\
    \le & \prob \left(\mu(a_i) \ge \tilde{\mu}_t^0(a_1), N_{t}(a_1) > \frac{(p_{\min}-\eta)(t-1)}{2}\right)  + \prob\left(N_{t}(a_1)\le \frac{(p_{\min}-\eta)(t-1)}{2}\right) + \frac{6e}{t\left(\log t\right)^2}\nonumber\\
    \le & \prob \left(\mu(a_i) \ge \tilde{\mu}_t^0(a_1), N_{t}(a_1) > \frac{(p_{\min}-\eta)(t-1)}{2}\right) + \frac{6e}{t\left(\log t\right)^2} + \exp\left(-\frac{\eta^2(t-1)}{2}\right)\nonumber\\
    & + \frac{M-1}{\mathrm{kl}\left(r_{\gamma'}, \mu(a_2)\right) \exp\left(\mathrm{kl}\left(r_{\gamma'}, \mu(a_2)\right) \left[\frac{(p_{\min} - \eta)(t-1)}{2(M-1)} - 2\right]\right)} + \frac{c_3}{c_2(t-1)\left[\log \left(c_2(t-1)\right)\right]^2}
\end{align}
where the last inequality uses Lemma~\ref{lemma:sd-kl-cb-1}. Consider that the event
\[
    \left\{\mu(a_i) \ge \tilde{\mu}_t^0(a_1), N_{t}(a_1) > \frac{(p_{\min}-\eta)(t-1)}{2}\right\}
\]
holds. Define $\mathrm{kl}^-(x,y)\coloneqq \mathrm{kl}(x,y)\mathbbm{1}\{x>y\}$. Hence we have
\begin{align}
    \mathrm{kl}^-(\bar{\mu}_t(a_1),\mu(a_i)) \le \mathrm{kl}(\bar{\mu}_t(a_1), \tilde{\mu}_t^0(a_1))
    \le \frac{\log t + 4\log (\log t)}{N_t(a_1)} \le \frac{\log t + 4\log (\log t)}{\frac{(p_{\min}-\eta)(t-1)}{2}}.
\end{align}
Define $T_4$ such that $T_4\ge T_3$ and for any $t\ge T_4$,
\begin{align}
    \frac{\log t + 4\log (\log t)}{\frac{(p_{\min}-\eta)(t-1)}{2}} \le \frac{\mathrm{kl}(\mu(a_1), \mu(a_2))}{1+\gamma'}.
\end{align}
Then for any $t\ge T_4$, we have
\begin{align}
    \mathrm{kl}^-(\bar{\mu}_t(a_1),\mu(a_i)) \le \frac{\mathrm{kl}(\mu(a_1), \mu(a_2))}{1+\gamma'}\le \frac{\mathrm{kl}(\mu(a_1), \mu(a_i))}{1+\gamma'}.
\end{align}
Define $r'_{\gamma'}(a_i)\in(\mu(a_i), \mu(a_1))$ such that $\mathrm{kl}(r'_{\gamma'}(a_i), \mu(a_i))=\frac{\mathrm{kl}(\mu(a_1), \mu(a_i))}{1+\gamma'}$. Then we have
\begin{align}
    \mathrm{kl}^-(\bar{\mu}_t(a_1),\mu(a_i)) \le \mathrm{kl}(r'_{\gamma'}(a_i), \mu(a_i))
\end{align}
which implies that $\bar{\mu}_t(a_1) \le r'_{\gamma'}(a_i)$. Therefore, we have
\begin{align}
    \mathrm{kl}(\bar{\mu}_t(a_1),\mu(a_1)) \ge \mathrm{kl}(r'_{\gamma'}(a_i), \mu(a_1))
\end{align}
since $\bar{\mu}_t(a_1) \le r'_{\gamma'}(a_i) < \mu(a_1)$. Hence, the first term in~\eqref{equ:sd-kl-cb-lemma-2-prob-state-0} can be bounded by
\begin{align}\label{equ:sd-kl-cb-lemma-2-concentration}
    & \prob \left(\mu(a_i) \ge \tilde{\mu}_t^0(a_1), N_{t}(a_1) > \frac{(p_{\min}-\eta)(t-1)}{2}\right)\nonumber\\
    \le & \prob\left( \mathrm{kl}(\bar{\mu}_t(a_1),\mu(a_1)) \ge \mathrm{kl}(r'_{\gamma'}(a_i), \mu(a_1)), N_{t}(a_1) > \frac{(p_{\min}-\eta)(t-1)}{2}, \bar{\mu}_t(a_1) < \mu(a_1) \right)\nonumber\\
    \le & \sum_{n=\left\lceil\frac{(p_{\min}-\eta)(t-1)}{2}\right\rceil}^{t-1} \prob\left( \mathrm{kl}\left(\frac{1}{n}\sum_{s=1}^{n}R_s(a_1),\mu(a_1)\right) \ge \mathrm{kl}\left(r'_{\gamma'}(a_i), \mu(a_1)\right), \frac{1}{n}\sum_{s=1}^{n}R_s(a_1) < \mu(a_1) \right)\nonumber\\
    \le & \sum_{n=\left\lceil\frac{(p_{\min}-\eta)(t-1)}{2}\right\rceil}^{t-1} \exp\left( -n \mathrm{kl}\left(r'_{\gamma'}(a_i), \mu(a_1)\right) \right)\nonumber\\
    \le & \frac{1}{\mathrm{kl}\left(r'_{\gamma'}(a_i), \mu(a_1)\right)\exp\left(\mathrm{kl}\left(r'_{\gamma'}(a_i), \mu(a_1)\right) \left[ \frac{(p_{\min}-\eta)(t-1)}{2} - 1 \right] \right)}
\end{align}
where the second inequality is by the union bound over all possible number of pulls of arm $a_1$, where $\{R_s(a_1)\}_{s=1}^n$ are $n$ i.i.d. Bernoulli rewards of pulling arm $a_1$, the third inequality uses the concentration inequality for KL divergence in~\cite{mardia2020concentration}, and the last inequality is by integration. Then by combining~\eqref{equ:sd-kl-cb-lemma-2-prob-state-0} and~\eqref{equ:sd-kl-cb-lemma-2-concentration}, we have
\begin{align}\label{equ:sd-kl-cb-lemma-2-final}
    & \sum_{t=T_4}^{\infty} \prob \left(S_t = 0, A_t = a_i\right)\nonumber\\
    \le & \sum_{t=T_4}^{\infty} \frac{1}{\mathrm{kl}\left(r'_{\gamma'}(a_i), \mu(a_1)\right)\exp\left(\mathrm{kl}\left(r'_{\gamma'}(a_i), \mu(a_1)\right) \left[ \frac{(p_{\min}-\eta)(t-1)}{2} - 1 \right] \right)} + \frac{6e}{t\left(\log t\right)^2} \nonumber\\
    &~~~~ + \exp\left(-\frac{\eta^2(t-1)}{2}\right) + \frac{M-1}{\mathrm{kl}\left(r_{\gamma'}, \mu(a_2)\right) \exp\left(\mathrm{kl}\left(r_{\gamma'}, \mu(a_2)\right) \left[\frac{(p_{\min} - \eta)(t-1)}{2(M-1)} - 2\right]\right)} \nonumber\\
    &~~~~ + \frac{c_3}{c_2(t-1)\left[\log \left(c_2(t-1)\right)\right]^2}\nonumber\\
    \le & \frac{2}{(p_{\min}-\eta)\left[\mathrm{kl}\left(r'_{\gamma'}(a_i), \mu(a_1)\right)\right]^2} + \frac{6e}{\log (T_4 - 1)} + \frac{2}{\eta^2} + \frac{2(M-1)^2}{\left[\mathrm{kl}\left(r_{\gamma'}, \mu(a_2)\right)\right]^2(p_{\min}-\eta)} \nonumber\\
    & + \frac{c_3}{c_2\log [c_2(T_4-2)]}
\end{align}
Combining~\eqref{equ:sd-kl-cb-lemma-2-beginning} and~\eqref{equ:sd-kl-cb-lemma-2-final}, we have
\begin{align}
    & \expt\left[ \sum_{t=1}^{T(K,\pi)} \sum_{i=2}^{M} \mathbbm{1}\{S_t = 0, A_t = a_i\} \left[ V^*(0) - Q^*(0, a_i)\right]\right] \nonumber\\
    \le & \sum_{i=2}^{M} \left[ V^*(0) - Q^*(0, a_i)\right] \left[ T_4 + \sum_{t=T_4}^{\infty} \prob \left(S_t = 0, A_t = a_i\right)\right]\nonumber\\
    \le & \sum_{i=2}^{M} \left[ V^*(0) - Q^*(0, a_i)\right] c_{6,i}
\end{align}
where $c_{6,i}\coloneqq \frac{2}{(p_{\min}-\eta)\left[\mathrm{kl}\left(r'_{\gamma'}(a_i), \mu(a_1)\right)\right]^2} + \frac{6e}{\log (T_4 - 1)} + \frac{2}{\eta^2} + \frac{2(M-1)^2}{\left[\mathrm{kl}\left(r_{\gamma'}, \mu(a_2)\right)\right]^2(p_{\min}-\eta)}+ \frac{c_3}{c_2\log [c_2(T_4-2)]}$.
\end{proof}

Next, we will bound the expected regret induced by pulling suboptimal arms in state $1$. Lemma~\ref{lemma:sd-kl-cb-3} shows the result.
\begin{lemma}\label{lemma:sd-kl-cb-3}
    Let all the assumptions in Theorem~\ref{theorem:2} hold. Consider the KL-ULCB algorithm with $c_0=c_1=1$, and $c=4$. For any $\epsilon>0$, the regret induced in state $1$ can be bounded by
    \begin{align}
        & \expt\left[ \sum_{t=1}^{T(K,\pi)} \sum_{i=2}^{M} \mathbbm{1}\{S_t = 1, A_t = a_i\} \left[ V^*(1) - Q^*(1, a_i)\right]\right] \nonumber\\
        \le & \sum_{i\neq 1} \frac{1+\epsilon}{\mathrm{kl}(\mu(a_i),\mu(a_1))} \left(V^*(1) - Q^*(1,a_i)\right) \log K + o(\log K)
    \end{align}
\end{lemma}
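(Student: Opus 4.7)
The plan is to mirror the proof of Lemma~\ref{lemma:sd-cb-3}, but replace the Hoeffding-based confidence width with the KL-divergence-based width that is native to KL-ULCB. Set the threshold
\begin{align}
B(t) \coloneqq \frac{1+\epsilon}{\mathrm{kl}(\mu(a_i), \mu(a_1))}\left[\log t + 4\log(\log t)\right],
\end{align}
which is the KL analogue of the $B(t)$ used in Lemma~\ref{lemma:sd-cb-3}. By monotone convergence together with linearity of expectation, the expected regret in state $1$ can be written as
\begin{align}
\sum_{i\ne 1}[V^*(1)-Q^*(1,a_i)]\,\expt\left[\sum_{t=1}^{T(K,\pi)}\mathbbm{1}\{S_t=1, A_t=a_i\}\right],
\end{align}
and for each $i$ the inner expectation is bounded, exactly as in~\eqref{equ:proof-lemma-sd-cb-3-second}, by
\begin{align}
M + \expt[B(T(K,\pi))] + \sum_{t=M+1}^{\infty}\prob\bigl(S_t=1, A_t=a_i, N_t(a_i)\ge B(t)\bigr).
\end{align}

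Next I would estimate the tail sum by the standard KL-UCB two-event split, using that $S_t=1, A_t=a_i$ (for $t\ge M+1$) forces $\tilde\mu_t^1(a_i)\ge \tilde\mu_t^1(a_1)$. Writing
\begin{align}
\prob(S_t=1, A_t=a_i, N_t(a_i)\ge B(t)) &\le \prob(\tilde\mu_t^1(a_1)<\mu(a_1)) \nonumber \\
&\quad + \prob(\tilde\mu_t^1(a_i)\ge \mu(a_1), N_t(a_i)\ge B(t)),
\end{align}
the first term sums to a finite constant by the same Garivier--Capp\'e deviation bound already invoked in~\eqref{equ:theorem-10-ref-upper-bound}. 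For the second term, the definition of $\tilde\mu_t^1(a_i)$ in KL-ULCB gives that on this event $\mathrm{kl}^+(\bar\mu_t(a_i),\mu(a_1))\le [\log t + 4\log(\log t)]/N_t(a_i)\le \mathrm{kl}(\mu(a_i),\mu(a_1))/(1+\epsilon)$. Picking $r_\epsilon(a_i)\in(\mu(a_i),\mu(a_1))$ with $\mathrm{kl}(r_\epsilon(a_i),\mu(a_1))=\mathrm{kl}(\mu(a_i),\mu(a_1))/(1+\epsilon)$, this forces $\bar\mu_t(a_i)\ge r_\epsilon(a_i)$ and hence $\mathrm{kl}(\bar\mu_t(a_i),\mu(a_i))\ge \mathrm{kl}(r_\epsilon(a_i),\mu(a_i))>0$. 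A union bound over the value $N_t(a_i)=n$ together with the KL concentration inequality from~\cite{mardia2020concentration} then yields a geometric tail $\sum_n \exp(-n\,\mathrm{kl}(r_\epsilon(a_i),\mu(a_i)))$, summing again to a finite constant.

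It remains to control $\expt[B(T(K,\pi))]$. Here I would apply Jensen's inequality (as in~\eqref{equ:proof-lemma-sd-cb-3-final-2}) together with Lemma~\ref{lemma:sd-cb-bound-TKpi}, which bounds $\expt[T(K,\pi)]\le c_5 K$ for any policy. This gives
\begin{align}
\expt[B(T(K,\pi))]\le \frac{1+\epsilon}{\mathrm{kl}(\mu(a_i),\mu(a_1))}\bigl[\log(c_5 K)+4\log\log(c_5 K)\bigr],
\end{align}
so the leading term is $\frac{1+\epsilon}{\mathrm{kl}(\mu(a_i),\mu(a_1))}\log K$ and the remaining contributions are $o(\log K)$. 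Combining over $i\ne 1$ yields the claimed bound. The main obstacle is that $T(K,\pi)$ is a random stopping horizon rather than a deterministic one, so we cannot directly invoke the usual KL-UCB finite-horizon argument; handling this via Jensen and Lemma~\ref{lemma:sd-cb-bound-TKpi}, while keeping the peeling-style KL concentration valid for the stopped sums, is the delicate point.
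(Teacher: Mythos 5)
Your proposal is correct and follows essentially the same route as the paper's proof: the same threshold $B(t)$, the same decomposition into $M + \expt[B(T(K,\pi))]$ plus a tail sum, the same two-event split with the Garivier--Capp\'e deviation bound for $\tilde\mu_t^1(a_1)$, the same definition of $r_\epsilon(a_i)$ and KL concentration from the cited reference, and the same Jensen-plus-Lemma~\ref{lemma:sd-cb-bound-TKpi} treatment of the random horizon. No gaps.
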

\begin{proof}
This proof is similar to the proof of Lemma~\ref{lemma:sd-cb-3}. Using the same argument as in the proof of Lemma~\ref{lemma:sd-cb-3}, we have
\begin{align}\label{equ:proof-lemma-sd-kl-cb-3-regret-state-1}
    & \expt\left[ \sum_{t=1}^{T(K,\pi)} \sum_{i=2}^{M} \mathbbm{1}\{S_t = 1, A_t = a_i\} \left[ V^*(1) - Q^*(1, a_i)\right]\right] \nonumber\\
    \le & \sum_{i=2}^{M} \left[ V^*(1) - Q^*(1, a_i)\right] \nonumber\\
    &~~~\left( M + \expt[B'(T(K,\pi))] + \frac{6e}{\log M} + \sum_{t=M+1}^{\infty}\prob\left(A_t = a_i, N_t(a_i)\ge B'(t), \tilde{\mu}^1_t(a_i) \ge \mu(a_1)\right) \right)
\end{align}
where $B'(t)\coloneqq  \frac{1+\epsilon}{\mathrm{kl}(\mu(a_i),\mu(a_1))}\left[\log t + 4\log (\log t) \right]$. Consider that the event
\[
    \left\{A_t = a_i, N_t(a_i)\ge B'(t), \tilde{\mu}^1_t(a_i) \ge \mu(a_1)\right\}
\]
holds. Then we have
\begin{align}
    \mathrm{kl}^+\left(\bar{\mu}_t(a_i), \mu(a_1)\right) \le & \mathrm{kl} \left(\bar{\mu}_t(a_i), \tilde{\mu}^1_t(a_i)\right) \le \frac{\log t + 4 \log (\log t)}{N_t(a_i)} \nonumber\\
    \le & \frac{\log t + 4 \log (\log t)}{B'(t)} = \frac{\mathrm{kl}(\mu(a_i),\mu(a_1))}{1+\epsilon}
\end{align}
where $\mathrm{kl}^+(x,y)\coloneqq \mathrm{kl}(x,y)\mathbbm{1}\{x<y\}$. Define $r'_{\epsilon}(a_i)\in\{\mu(a_i),\mu(a_1)\}$ such that
\begin{align}
    \mathrm{kl}(r'_{\epsilon}(a_i),\mu(a_1)) = \frac{\mathrm{kl}(\mu(a_i), \mu(a_1))}{1+\epsilon}.
\end{align}
Then we have
\begin{align}
    \mathrm{kl}^+\left(\bar{\mu}_t(a_i), \mu(a_1)\right) \le \mathrm{kl}(r'_{\epsilon}(a_i),\mu(a_1))
\end{align}
which implies that $\mu(a_i) \le r'_{\epsilon}(a_i) \le \bar{\mu}_t(a_i)$. Hence we have
\begin{align}
    \mathrm{kl}(\bar{\mu}_t(a_i), \mu(a_i)) \ge \mathrm{kl}(r'_{\epsilon}(a_i), \mu(a_i))
\end{align}
Therefore, we have
\begin{align}\label{equ:proof-lemma-sd-kl-cb-3-sum-prob}
    & \sum_{t=M+1}^{\infty}\prob\left(A_t = a_i, N_t(a_i)\ge B'(t), \tilde{\mu}^1_t(a_i) \ge \mu(a_1)\right) \nonumber\\
    \le & \sum_{t=M+1}^{\infty}\prob\left(A_t=a_i, \mathrm{kl}(\bar{\mu}_t(a_i), \mu(a_i)) \ge \mathrm{kl}(r'_{\epsilon}(a_i), \mu(a_i))\right)\nonumber\\
    = & \sum_{t=M+1}^{\infty} \sum_{n=1}^{t-1} \prob\left(A_t=a_i, N_t(a_i)=n, \mathrm{kl}\left(\frac{1}{n}\sum_{s=1}^{n}R_s(a_i), \mu(a_i)\right) \ge \mathrm{kl}\left(r'_{\epsilon}(a_i), \mu(a_i)\right)\right)\nonumber\\
    \le & \sum_{n=1}^{\infty} \sum_{t=n+1}^{\infty} \prob\left(A_t=a_i, N_t(a_i)=n, \mathrm{kl}\left(\frac{1}{n}\sum_{s=1}^{n}R_s(a_i), \mu(a_i)\right) \ge \mathrm{kl}\left(r'_{\epsilon}(a_i), \mu(a_i)\right)\right)\nonumber\\
    \le & \sum_{n=1}^{\infty} \prob\left(\mathrm{kl}\left(\frac{1}{n}\sum_{s=1}^{n}R_s(a_i), \mu(a_i)\right) \ge \mathrm{kl}\left(r'_{\epsilon}(a_i), \mu(a_i)\right)\right)\nonumber\\
    \le & \sum_{n=1}^{\infty} \exp\left(-n\mathrm{kl}\left(r'_{\epsilon}(a_i), \mu(a_i)\right)\right)\nonumber\\
    \le & \frac{1}{\mathrm{kl}\left(r'_{\epsilon}(a_i), \mu(a_i)\right)}
\end{align}
where the first equality is by law of total probability, where $\{R_s(a_i)\}_{s=1}^n$ are i.i.d. Bernoulli rewards of pulling arm $a_i$. The third inequality is due to the fact that $\{A_t=a_i, N_t(a_i)=n\}_{t=n+1}^{\infty}$ are mutually exclusive and the countable additivity of probability measure. The fourth inequality uses the concentration inequality for KL divergence in~\cite{mardia2020concentration}, and the last inequality is by integration. It then follows from~\eqref{equ:proof-lemma-sd-kl-cb-3-regret-state-1} and~\eqref{equ:proof-lemma-sd-kl-cb-3-sum-prob} that
\begin{align}\label{equ:proof-lemma-sd-kl-cb-3-final-1}
    & \expt\left[ \sum_{t=1}^{T(K,\pi)} \sum_{i=2}^{M} \mathbbm{1}\{S_t = 1, A_t = a_i\} \left[ V^*(1) - Q^*(1, a_i)\right]\right] \nonumber\\
    \le & \sum_{i=2}^{M} \left[ V^*(1) - Q^*(1, a_i)\right] \left( M + \expt[B'(T(K,\pi))] + \frac{6e}{\log M}  + \frac{1}{\mathrm{kl}\left(r'_{\epsilon}(a_i), \mu(a_i)\right)} \right)
\end{align}
where
\begin{align}\label{equ:proof-lemma-sd-kl-cb-3-final-2}
    \expt[B'(T(K,\pi))] = & \expt\left[\frac{1+\epsilon}{\mathrm{kl}(\mu(a_i),\mu(a_1))}\left[\log T(K,\pi) + 4\log (\log T(K,\pi)) \right]\right]\nonumber\\
    \le & \frac{1+\epsilon}{\mathrm{kl}(\mu(a_i),\mu(a_1))}\biggl[\log \expt\left[T(K,\pi)\right] + 4\log\left(\log\expt\left[T(K,\pi)\right]\right)\biggr]\nonumber\\
    \le & \frac{1+\epsilon}{\mathrm{kl}(\mu(a_i),\mu(a_1))}\biggl[\log c_5K + 4\log\left(\log (c_5 K) \right)\biggr]
\end{align}
where the first inequality is by Jensen's inequality, and the second inequality is by Lemma~\ref{lemma:sd-cb-bound-TKpi}. It then follows from~\eqref{equ:proof-lemma-sd-kl-cb-3-final-1} and~\eqref{equ:proof-lemma-sd-kl-cb-3-final-2} that
\begin{align}
    & \expt\left[ \sum_{t=1}^{T(K,\pi)} \sum_{i=2}^{M} \mathbbm{1}\{S_t = 1, A_t = a_i\} \left[ V^*(1) - Q^*(1, a_i)\right]\right] \nonumber\\
    \le & \sum_{i\neq 1} \frac{1+\epsilon}{\mathrm{kl}(\mu(a_i),\mu(a_1))} \left(V^*(1) - Q^*(1,a_i)\right) \log K + o(\log K)
\end{align}
\end{proof}

By the regret decomposition result~\eqref{equ:regret-decomp}, Lemma~\ref{lemma:sd-kl-cb-2}, and Lemma~\ref{lemma:sd-kl-cb-3}, we have
\begin{align}
    \limsup_{K\rightarrow \infty} \frac{\expt[\mathrm{Reg}_{\pi}(K)]}{\log K} \le \sum_{i\neq 1} \frac{1+\epsilon}{\mathrm{kl}(\mu(a_i),\mu(a_1))} \left(V^*(1) - Q^*(1,a_i)\right),
\end{align}
i.e., Theorem~\ref{theorem:2} is proved.

\subsection{Proof of Theorem~\ref{theorem:3}: lower bound}
\label{app:theorem3}

From the regret decomposition~\eqref{equ:regret-decomp}, given any consistent policy $\pi\in\Pi_{\mathrm{cons}}$, we have
\begin{align}\label{equ:theo3-regret-decomp-lower-bound}
    &\expt[\mathrm{Reg}_{\pi}(K)]\nonumber\\
    = & \expt\left[ \sum_{t=1}^{T(K,\pi)} \sum_{i=2}^{M} \mathbbm{1}\{S_t = 0, A_t = a_i\} \left( V^*(0) - Q^*(0, a_i)\right)  + \mathbbm{1}\{S_t = 1, A_t = a_i\} \left( V^*(1) - Q^*(1, a_i)\right) \right]\nonumber\\
    \ge & \expt\left[ \sum_{t=1}^{T(K,\pi)} \sum_{i=2}^{M}\mathbbm{1}\{A_t = a_i\} \left( V^*(1) - Q^*(1, a_i)\right) \right]\nonumber\\
    = & \sum_{i=2}^{M} \expt\left[ \sum_{t=1}^{T(K,\pi)} \mathbbm{1}\{A_t = a_i\} \right] \left( V^*(1) - Q^*(1, a_i)\right)
\end{align}
where the first inequality uses the conclusion of Lemma~\ref{lemma:sufficient-condition}, $V^*(0) - Q^*(0, a_i) \ge V^*(1) - Q^*(1, a_i)$. We have to bound the term $\expt\left[ \sum_{t=1}^{T(K,\pi)} \mathbbm{1}\{A_t = a_i\} \right]=\expt\left[ N_{T(K,\pi) + 1}(a_i) \right]$. Let $\epsilon\in(0,1)$. Consider a different system $i$, $i\in\{2,...,M\}$, where the only difference is that the mean value of the reward of pulling arm $a_i$ is $\mu'(a_i)\in(\mu(a_1),1)$ such that
\begin{align}\label{equ:theo3-def-system2}
    \mathrm{kl}(\mu(a_i), \mu'(a_i))\le (1+\epsilon)\mathrm{kl}(\mu(a_i),\mu(a_1))
\end{align}
Define event $C_i$ as follows:
\begin{align}
    C_i\coloneqq \left\{N_{T(K,\pi) + 1}(a_i)\le \frac{1-\epsilon}{\mathrm{kl}(\mu(a_i), \mu'(a_i))}\log K, ~\hat{\mathrm{kl}}_{N_{T(K,\pi) + 1}(a_i)} \le (1-\frac{\epsilon}{2})\log K\right\}
\end{align}
where for any $n$,
\begin{align}\label{equ:theo3-def-kl-hat}
    \hat{\mathrm{kl}}_{n} \coloneqq  \sum_{s=1}^n\log \frac{\mu(a_i)R_s(a_i) + (1-\mu(a_i))(1-R_s(a_i))}{\mu'(a_i)R_s(a_i) + (1-\mu'(a_i))(1-R_s(a_i))}
\end{align}
where $\{R_s(a_i)\}_{s=1}^n$ are i.i.d. Bernoulli rewards of pulling arm $a_i$ in the original system. It can be easily verified that $\expt[\hat{\mathrm{kl}}_{n}]=n\mathrm{kl}(\mu(a_i), \mu'(a_i))$. We first show that the change of measure identity~\eqref{equ:theo-3-change-of-measure-identity} holds. 
\begin{lemma}
\label{lemma:theo3-change-of-measure}
Given a policy $\pi\in\Pi$ and total number of episodes $K$, we have
\begin{align}\label{equ:theo-3-change-of-measure-identity}
    \prob' \left(C_i\right) = \expt\left[ \mathbbm{1}_{C_i} \exp\left(-\hat{\mathrm{kl}}_{N_{T(K,\pi) + 1}(a_i)}\right) \right]
\end{align}
where $\prob'$ is the probability measure in system $i$, and $\expt$ is based on probability measure in the original system.
\end{lemma}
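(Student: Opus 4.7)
The plan is to establish the identity via a direct likelihood-ratio computation, in the spirit of the classical change-of-measure arguments for bandit lower bounds. The crucial observation is that the original system and system $i$ differ only in the reward distribution of arm $a_i$: the initial state distribution, the deterministic policy $\pi$, the reward distributions of arms $a_j$ for $j\neq i$, the state-update rule $S_{h+1}=R_h$ given no abandonment, and the abandonment probabilities $q(s,r)$ conditional on current state and realized reward are all the same under $\prob$ and $\prob'$. Consequently, every factor in the joint distribution of a trajectory cancels in the Radon--Nikodym derivative except those corresponding to pulls of arm $a_i$.

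I would first fix a deterministic horizon $T$ and compute the likelihood ratio of the trajectory $(S_1,A_1,R_1,\ldots,S_T,A_T,R_T,S_{T+1})$. Letting $p_{\mu}(r)\coloneqq \mu r + (1-\mu)(1-r)$ denote the Bernoulli PMF, a chain-rule factorization and cancellation of common terms yields
\begin{equation*}
\frac{d\prob'}{d\prob}\bigg|_{\mathcal{G}_T}=\prod_{s\le T:\, A_s=a_i}\frac{p_{\mu'(a_i)}(R_s)}{p_{\mu(a_i)}(R_s)}=\exp\bigl(-\hat{\mathrm{kl}}_{N_{T+1}(a_i)}\bigr),
\end{equation*}
where $\mathcal{G}_T$ denotes the $\sigma$-algebra generated by the trajectory through step $T$, and the second equality uses the definition~\eqref{equ:theo3-def-kl-hat}. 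Hence for every $\mathcal{G}_T$-measurable event $B$, $\prob'(B)=\expt\bigl[\mathbbm{1}_B\exp(-\hat{\mathrm{kl}}_{N_{T+1}(a_i)})\bigr]$. Determinism of $\pi$ is what makes this clean: its conditional selection probabilities are $\{0,1\}$-valued on identical events in both systems, contributing $1$ to the ratio, so the only surviving factors come from arm $a_i$'s rewards.

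To pass from a deterministic horizon to the random horizon $T(K,\pi)$, I would partition on its value. Under Assumption~\ref{assum:1} all policies are proper, so $T(K,\pi)<\infty$ almost surely under both measures, and $\{T(K,\pi)=T\}\in\mathcal{G}_T$. Both $C_i$ and $N_{T(K,\pi)+1}(a_i)$ are measurable on the stopped $\sigma$-algebra, so applying the finite-horizon identity on each slice and summing gives
\begin{equation*}
\prob'(C_i)=\sum_{T=0}^{\infty}\expt\bigl[\mathbbm{1}_{C_i\cap\{T(K,\pi)=T\}}\exp(-\hat{\mathrm{kl}}_{N_{T+1}(a_i)})\bigr]=\expt\bigl[\mathbbm{1}_{C_i}\exp(-\hat{\mathrm{kl}}_{N_{T(K,\pi)+1}(a_i)})\bigr],
\end{equation*}
where monotone convergence justifies the recombination since all integrands are nonnegative and on $\{T(K,\pi)=T\}$ one has $N_{T(K,\pi)+1}(a_i)=N_{T+1}(a_i)$. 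The only mildly delicate point is handling the stopping time rigorously so that the exponent tracks the correct count, which is precisely what this chronological slicing accomplishes; the rest is elementary bookkeeping on the joint PMF.
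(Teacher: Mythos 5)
Your proposal is correct and follows essentially the same route as the paper: both proofs compute the likelihood ratio between the two systems, observe that the initial-state, policy, abandonment, and other-arm factors cancel so that only the per-pull reward ratios for arm $a_i$ survive, and identify the resulting product with $\exp\bigl(-\hat{\mathrm{kl}}_{N_{T(K,\pi)+1}(a_i)}\bigr)$. The paper carries this out by summing directly over (finite-length) sample paths in $C_i$, whereas you package the same computation as a finite-horizon Radon--Nikodym derivative followed by slicing on $\{T(K,\pi)=T\}$; this is only a difference in bookkeeping, not in substance.
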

\begin{proof}
For any outcome (sample path) $\omega\in C_i$, let $X(\omega)$ denotes the value of the random variable $X$ on the sample path $\omega$. Then we have
\begin{align}
    \prob\left(\{\omega\}\right) = & \prod_{k=1}^{K} \prob\left(S_{k,1}=S_{k,1}(\omega)\right) \prod_{h=1}^{I_{k}(\omega) - 1} 
    \prob\left(A_{k,h}=A_{k,h}(\omega)\vert S_{k,h}(\omega), \phi_{k,h}(\omega), \pi\right)\nonumber\\
    &
    \prob\left(R_{k,h} = R_{k,h}(\omega)\vert A_{k,h}(\omega)\right)
    (1-q\left(S_{k,h}(\omega), R_{k,h}(\omega))\right)\nonumber\\
    & 
    \prob\left(A_{k,I_{k}(\omega)}=A_{k,I_{k}(\omega)}(\omega)\vert S_{k,I_{k}(\omega)}(\omega), \phi_{k,I_{k}(\omega)}(\omega), \pi\right)\nonumber\\
    & \prob\left(R_{k,I_{k}(\omega)} = R_{k,I_{k}(\omega)}(\omega)\vert A_{k,I_{k}(\omega)}(\omega)\right) q\left(S_{k,I_{k}(\omega)}(\omega), R_{k,I_{k}(\omega)}(\omega)\right)
\end{align}
Let $k(s)$ and $h(s)$ denote the episode number and time step when $a_i$ was pulled for the $s$-th time, respectively.
Hence, we have
\begin{align}
    &\prob'\left(\{\omega\}\right) \nonumber\\
    = & \prob\left(\{\omega\}\right) \prod_{s=1}^{N_{T(K,\pi) + 1}(a_i)(\omega)} \left[\mathbbm{1}\left\{ R_{k(s)(\omega),h(s)(\omega)}(\omega)=1 \right\}\frac{\mu'(a_i)}{\mu(a_i)} + \mathbbm{1}\left\{ R_{k(s)(\omega),h(s)(\omega)}(\omega)=0 \right\}\frac{1-\mu'(a_i)}{1-\mu(a_i)}\right]
\end{align}
It then follows that
\begin{align}
    \prob'\left(C_i\right) = & \sum_{\omega\in C_i} \prob'\left(\{\omega\}\right)\nonumber\\
    = & \sum_{\omega\in C_i} \prob\left(\{\omega\}\right) \prod_{s=1}^{N_{T(K,\pi) + 1}(a_i)(\omega)} \biggl[ \mathbbm{1}\left\{ R_{k(s)(\omega),h(s)(\omega)}(\omega)=1 \right\}\frac{\mu'(a_i)}{\mu(a_i)}\biggr. \nonumber\\
    & ~~~~~~~~~~~~~~~~~~~~~~~~~~~~~~~~~~\biggl.+ \mathbbm{1}\left\{ R_{k(s)(\omega),h(s)(\omega)}(\omega)=0 \right\}\frac{1-\mu'(a_i)}{1-\mu(a_i)}\biggr]\nonumber\\
    = & \expt\left[\mathbbm{1}_{C_i} \prod_{s=1}^{N_{T(K,\pi) + 1}(a_i)} \mathbbm{1}\left\{ R_{k(s),h(s)}=1 \right\}\frac{\mu'(a_i)}{\mu(a_i)} + \mathbbm{1}\left\{ R_{k(s),h(s)}=0 \right\}\frac{1-\mu'(a_i)}{1-\mu(a_i)} \right]\nonumber\\
    = & \expt\left[ \mathbbm{1}_{C_i} \exp\left(-\hat{\mathrm{kl}}_{N_{T(K,\pi) + 1}(a_i)}\right) \right]
\end{align}
where the last equality is by the definition of $\hat{\mathrm{kl}}_n$ in~\eqref{equ:theo3-def-kl-hat}.
\end{proof}

By Lemma~\ref{lemma:theo3-change-of-measure} and $\hat{\mathrm{kl}}_{N_{T(K,\pi) + 1}(a_i)} \le (1-\frac{\epsilon}{2})\log K$ in the definition of $C_i$, we have
\begin{align}
    \prob' \left(C_i\right) = \expt\left[ \mathbbm{1}_{C_i} \exp\left(-\hat{\mathrm{kl}}_{N_{T(K,\pi) + 1}(a_i)}\right) \right] \ge  \prob\left(C_i\right) K^{-\left(1-\frac{\epsilon}{2}\right)}
\end{align}
It follows that
\begin{align}\label{equ:theo3-prob-ci-upper-bound}
    \prob\left(C_i\right) \le & K^{\left(1-\frac{\epsilon}{2}\right)} \prob'\left(C_i\right)\nonumber\\
    \le & K^{\left(1-\frac{\epsilon}{2}\right)} \prob'\left(N_{T(K,\pi) + 1}(a_i)\le \frac{1-\epsilon}{\mathrm{kl}(\mu(a_i), \mu'(a_i))}\log K\right)\nonumber\\
    = & K^{\left(1-\frac{\epsilon}{2}\right)} \prob'\left(\sum_{j\neq i}N_{T(K,\pi) + 1}(a_j) \ge T(K,\pi) - \frac{1-\epsilon}{\mathrm{kl}(\mu(a_i), \mu'(a_i))}\log K\right)\nonumber\\
    \le & K^{\left(1-\frac{\epsilon}{2}\right)} \prob'\left(\sum_{j\neq i}N_{T(K,\pi) + 1}(a_j) \ge K - \frac{1-\epsilon}{\mathrm{kl}(\mu(a_i), \mu'(a_i))}\log K\right)\nonumber\\
    \le & K^{\left(1-\frac{\epsilon}{2}\right)} \frac{\expt'\left[\sum_{j\neq i}N_{T(K,\pi) + 1}(a_j)\right]}{K - \frac{1-\epsilon}{\mathrm{kl}(\mu(a_i), \mu'(a_i))}\log K}
\end{align}
where the second inequality is by the definition of $C_i$, the first equality is due to the fact that $\sum_{j}N_{T(K,\pi) + 1}(a_j) = T(K,\pi)$, the third inequality is due to the fact that $T(K,\pi)\ge K$, and the last inequality is by Markov's inequality, where $\expt'$ is based on probability measure in system $i$. Since $\pi\in\Pi_{\mathrm{cons}}$, by the definition of consistent policies in Definition~\ref{def:consistent-policy}, we have
\begin{align}\label{equ:theo3-little-o-system-i}
    \expt'[\mathrm{Reg}_{\pi}(K)] = o\left(K^{\alpha}\right)
\end{align}
for any $\alpha>0$. Similar to~\eqref{equ:theo3-regret-decomp-lower-bound}, for system $i$, we can obtain
\begin{align}\label{equ:theo3-regret-lower-bound-system-i}
    \expt'[\mathrm{Reg}_{\pi}(K)] \ge & \sum_{j\neq i} \expt'\left[ \sum_{t=1}^{T(K,\pi)} \mathbbm{1}\{A_t = a_j\} \right] \left( V'^*(1) - Q'^*(1, a_j)\right)\nonumber\\
    \ge & \min_{j\neq i} \left( V'^*(1) - Q'^*(1, a_j)\right) \expt'\left[\sum_{j\neq i}N_{T(K,\pi) + 1}(a_j)\right]
\end{align}
where $V'^*$ and $Q'^*$ are the optimal value function and optimal Q function for system $i$, respectively. Since $\mu'(a_i) > \mu(a_j)$ for any $j\neq i$, similar to the original system, it can be verified that $\min_{j\neq i} \left( V'^*(1) - Q'^*(1, a_j)\right)>0$. Hence, from~\eqref{equ:theo3-regret-lower-bound-system-i}, we have
\begin{align}
    \expt'\left[\sum_{j\neq i}N_{T(K,\pi) + 1}(a_j)\right] \le & \frac{\expt'[\mathrm{Reg}_{\pi}(K)]}{\min_{j\neq i} \left( V'^*(1) - Q'^*(1, a_j)\right)} = o\left(K^{\alpha}\right)
\end{align}
for any $\alpha>0$, where the equality is by~\eqref{equ:theo3-little-o-system-i}. It then follows from~\eqref{equ:theo3-prob-ci-upper-bound} that
\begin{align}
    \prob\left(C_i\right) \le K^{\left(1-\frac{\epsilon}{2}\right)} \frac{o\left(K^{\alpha}\right)}{K - \frac{1-\epsilon}{\mathrm{kl}(\mu(a_i), \mu'(a_i))}\log K}
\end{align}
for any $\alpha>0$. Let $\alpha=\frac{\epsilon}{4}$. Then we have
\begin{align}\label{equ:theo3-prob-ci-little-o}
    \prob\left(C_i\right) \le K^{\left(1-\frac{\epsilon}{2}\right)} \frac{o\left(K^{\frac{\epsilon}{4}}\right)}{K - \frac{1-\epsilon}{\mathrm{kl}(\mu(a_i), \mu'(a_i))}\log K} = o\left(K^{-\frac{\epsilon}{4}}\right).
\end{align}
Note that by definition of $C_i$ and the law of total probability, we have 
\begin{align}\label{equ:theo3-prob-ntkai-less-than-logk}
    & \prob\left(N_{T(K,\pi) + 1}(a_i)\le \frac{1-\epsilon}{\mathrm{kl}(\mu(a_i), \mu'(a_i))}\log K\right) \nonumber\\
    = & \prob\left(C_i\right)  + \prob\left(N_{T(K,\pi) + 1}(a_i)\le \frac{1-\epsilon}{\mathrm{kl}(\mu(a_i), \mu'(a_i))}\log K, \hat{\mathrm{kl}}_{N_{T(K,\pi) + 1}(a_i)} > (1-\frac{\epsilon}{2})\log K \right)\nonumber\\
    \le & \prob\left(C_i\right)  + \prob\left(
    \hat{\mathrm{kl}}_{N_{T(K,\pi) + 1}(a_i)} - 
    N_{T(K,\pi) + 1}(a_i) \mathrm{kl}(\mu(a_i), \mu'(a_i)) 
    > \frac{\epsilon}{2} \log K,\right.\nonumber\\
    & ~~~~~~~~~~~~~~\left.N_{T(K,\pi) + 1}(a_i)\le \frac{1-\epsilon}{\mathrm{kl}(\mu(a_i), \mu'(a_i))}\log K
    \right)\nonumber\\
    \le & \prob\left(C_i\right) + \sum_{n=1}^{\frac{1-\epsilon}{\mathrm{kl}(\mu(a_i), \mu'(a_i))}\log K} \prob\left(\hat{\mathrm{kl}}_{n} - 
    n \mathrm{kl}(\mu(a_i), \mu'(a_i)) 
    > \frac{\epsilon}{2} \log K\right)\nonumber\\
    \le & \prob\left(C_i\right) + \sum_{n=1}^{\frac{1-\epsilon}{\mathrm{kl}(\mu(a_i), \mu'(a_i))}\log K} \exp\left(-\frac{\epsilon^2 (\log K)^2}{2n \left\lvert\log \frac{\mu(a_i)}{\mu'(a_i)}- \log \frac{1-\mu(a_i)}{1-\mu'(a_i)} \right\rvert^2}\right)\nonumber\\
    \le & \prob\left(C_i\right) + \frac{1-\epsilon}{\mathrm{kl}(\mu(a_i), \mu'(a_i))}\log K \exp\left(-\frac{\epsilon^2 (\log K)^2}{2 \frac{1-\epsilon}{\mathrm{kl}(\mu(a_i), \mu'(a_i))}\log K \left\lvert\log \frac{\mu(a_i)}{\mu'(a_i)}- \log \frac{1-\mu(a_i)}{1-\mu'(a_i)} \right\rvert^2}\right)\nonumber\\
    = & \prob\left(C_i\right) + \frac{(1-\epsilon)\log K}{\mathrm{kl}(\mu(a_i), \mu'(a_i)) K^{\frac{\epsilon^2 \mathrm{kl}(\mu(a_i), \mu'(a_i)) }{2(1-\epsilon)\left\lvert\log \frac{\mu(a_i)}{\mu'(a_i)}- \log \frac{1-\mu(a_i)}{1-\mu'(a_i)} \right\rvert^2}} }
\end{align}
where the second inequality is by union bound over all possible values of $N_{T(K,\pi) + 1}(a_i)$, and the third inequality is by Hoeffding's inequality. It follows from~\eqref{equ:theo3-prob-ci-little-o} and~\eqref{equ:theo3-prob-ntkai-less-than-logk} that
\begin{align}\label{equ:theo3-0-prob}
    \lim_{K\rightarrow \infty} \prob\left(N_{T(K,\pi) + 1}(a_i)\le \frac{1-\epsilon}{\mathrm{kl}(\mu(a_i), \mu'(a_i))}\log K\right) = 0
\end{align}
which implies that
\begin{align}\label{equ:theo3-1-prob}
    \lim_{K\rightarrow \infty} \prob\left(N_{T(K,\pi) + 1}(a_i) > \frac{1-\epsilon}{\mathrm{kl}(\mu(a_i), \mu'(a_i))}\log K\right) = 1.
\end{align}
By Markov's inequality, we have
\begin{align}
    \prob\left(N_{T(K,\pi) + 1}(a_i)> \frac{1-\epsilon}{\mathrm{kl}(\mu(a_i), \mu'(a_i))}\log K\right)\le \frac{\expt\left[ N_{T(K,\pi) + 1}(a_i) \right]}{\frac{1-\epsilon}{\mathrm{kl}(\mu(a_i), \mu'(a_i))}\log K}.
\end{align}
Therefore, we have
\begin{align}\label{equ:theo3-expt-ntkai}
    \expt\left[ N_{T(K,\pi) + 1}(a_i) \right] \ge \prob\left(N_{T(K,\pi) + 1}(a_i)> \frac{1-\epsilon}{\mathrm{kl}(\mu(a_i), \mu'(a_i))}\log K\right) \frac{1-\epsilon}{\mathrm{kl}(\mu(a_i), \mu'(a_i))}\log K.
\end{align}
Hence, it follows from~\eqref{equ:theo3-regret-decomp-lower-bound} and~\eqref{equ:theo3-expt-ntkai} that
\begin{align}\label{equ:theo3-before-conc}
    & \expt[\mathrm{Reg}_{\pi}(K)]\nonumber\\
    \ge & \sum_{i=2}^{M} \expt\left[N_{T(K,\pi) + 1}(a_i)\right] \left( V^*(1) - Q^*(1, a_i)\right)\nonumber\\
    \ge & \sum_{i=2}^{M} \prob\left(N_{T(K,\pi) + 1}(a_i)> \frac{(1-\epsilon)\log K}{\mathrm{kl}(\mu(a_i), \mu'(a_i))}\right) \frac{(1-\epsilon)\log K}{\mathrm{kl}(\mu(a_i), \mu'(a_i))} \left( V^*(1) - Q^*(1, a_i)\right)\nonumber\\
    \ge & \sum_{i=2}^{M} \prob\left(N_{T(K,\pi) + 1}(a_i)> \frac{(1-\epsilon)\log K}{\mathrm{kl}(\mu(a_i), \mu'(a_i))}\right) \frac{(1-\epsilon)\log K}{(1+\epsilon)\mathrm{kl}(\mu(a_i), \mu(a_1))} \left( V^*(1) - Q^*(1, a_i)\right),
\end{align}
where the last inequality follows from~\eqref{equ:theo3-def-system2}.
From~\eqref{equ:theo3-1-prob} and~\eqref{equ:theo3-before-conc}, we have
\begin{align}
    \liminf_{K\rightarrow \infty} \frac{\expt[\mathrm{Reg}_{\pi}(K)]}{\log K} \ge \sum_{i\neq 1} \frac{1-\epsilon}{(1+\epsilon)\mathrm{kl}(\mu(a_i),\mu(a_1))} \left(V^*(1) - Q^*(1,a_i)\right).
\end{align}

\section{Extension to the general-state setting}
\label{app:extension-cont}

In this section, we present the details of the state transition, the details of the proofs, and the simulation results in the general-state setting.

\subsection{State transition}
\label{app:cont-ext-model}
The transition probabilities $P(s'|s,a)$ while pulling arm $a$ are shown in Table~\ref{tab:tran-prob-cont}, where $x\in[0,1]$.

\begin{table}[!htbp]
\small
\caption{Transition probabilities $P(s'|s,a)$}
\label{tab:tran-prob-cont}
\scriptsize
\centering
\begin{tabular}{|c|c|c|c|c|}
\hline
\multicolumn{2}{|c|}{ \multirow{2}*{$P(s'|s,a)$} }& \multicolumn{3}{c|}{Next state $s'$}\\
\cline{3-5}
\multicolumn{2}{|c|}{}&$(1-\theta)x$&$(1-\theta)x+\theta$&g\\
\hline
\multirow{2}*{State $s$}&$x$&$(1-\mu(a))[1-q((1-\theta)x)]$&$\mu(a)[1-q((1-\theta)x+\theta)]$&$(1-\mu(a))q((1-\theta)x)+\mu(a)q((1-\theta)x+\theta)$\\
\cline{2-5}
&g&0&0&1\\
\hline
\end{tabular}
\end{table}

\subsection{Proof of Lemma~\ref{lemma:cont-1}}
\label{app:proof-lemma-optimal-policy-general}

If the model is known, this problem can be viewed as a SSP problem~\cite{bertsekas1991analysis}. Since $q(s)>0$ for any $s\in[0,1]$, all policies are proper. Hence, by the results in~\cite{bertsekas1991analysis}, there exists a stationary optimal policy. Therefore, it is enough to consider only stationary policies for $\pi^*$.
Define the optimal value function $V^*$ and optimal Q function $Q^*$ the same way as~\eqref{equ:v*-def} and~\eqref{equ:q*-def}.
Then for $s\neq g$ and $a\in\mathcal{A}$, we have the Bellman equation as follows:
\begin{align}
    V^{*}(s) =& \max_{a} Q^{*}(s, a)\\
    Q^*(s,a) = & \mu(a) + (1-\mu(a))\left[1-q((1-\theta)s)\right]V^*((1-\theta)s) \nonumber\\
    & + \mu(a)\left[1-q((1-\theta)s+\theta)\right]V^*((1-\theta)s+\theta)
\end{align}
Hence, for any $s_1,s_2\in[0,1]$ such that $s_1\ge s_2$ and any $a\in\mathcal{A}$, we have
\begin{align}
    & Q^*(s_1,a)-Q^*(s_2,a)\nonumber\\
    = & (1-\mu(a)) \biggl\{\left[1-q((1-\theta)s_1)\right]V^*((1-\theta)s_1)-\left[1-q((1-\theta)s_2)\right]V^*((1-\theta)s_2)\biggr\}\nonumber\\
    & + \mu(a) \biggl\{\left[1-q((1-\theta)s_1 + \theta)\right]V^*((1-\theta)s_1) + \theta) -\left[1-q((1-\theta)s_2 + \theta)\right]V^*((1-\theta)s_2 + \theta)\biggr\}\nonumber\\
    \ge & (1-\mu(a))\left[1-q((1-\theta)s_1)\right]\left[V^*((1-\theta)s_1) - V^*((1-\theta)s_2)\right]\nonumber\\
    & + \mu(a)\left[1-q((1-\theta)s_1+ \theta)\right]\left[V^*((1-\theta)s_1+ \theta) - V^*((1-\theta)s_2+ \theta)\right]
\end{align}
where the inequality is by Assumption~\ref{assum:2}. Therefore, we have
\begin{align}\label{equ:cont-v*-diff}
    V^*(s_1)- V^*(s_2) 
    = & \max_{a} Q^*(s_1,a) - \max_{a} Q^*(s_2,a)\nonumber \\
    = & \max_{a} Q^*(s_1,a) -  Q^*(s_2,a')\nonumber\\
    \ge & Q^*(s_1,a') -  Q^*(s_2,a')\nonumber\\
    \ge & (1-\mu(a))\left[1-q((1-\theta)s_1)\right]\left[V^*((1-\theta)s_1) - V^*((1-\theta)s_2)\right]\nonumber\\
    & + \mu(a)\left[1-q((1-\theta)s_1+ \theta)\right]\left[V^*((1-\theta)s_1+ \theta) - V^*((1-\theta)s_2+ \theta)\right]
\end{align}
where $a'\coloneqq \mathop{\mathrm{argmax}}_a Q^*(s_2,a)$. Note that $1-q(s)<1$ for any $s\in[0,1]$ by Assumption~\ref{assum:2}.
Hence, by iteratively applying~\eqref{equ:cont-v*-diff}, we can obtain
\begin{align}
    V^*(s_1) - V^*(s_2) \ge 0
\end{align}
which means that $V^*(s)$ is non-decreasing in $s$.
Hence, for any $s\in[0,1]$ and any $i\in\{2,...,M\}$, we have
\begin{align}\label{equ:cont-q*-diff}
    & Q^*(s,a_1)-Q^*(s,a_i)\nonumber\\
    = & (\mu(a_1)-\mu(a_i)) + (\mu(a_1)-\mu(a_i))\biggl\{[1-q((1-\theta)s + \theta)]V^*((1-\theta)s + \theta) \nonumber\\
    & ~~~~~~~~~~~~~~~~~~~~~~~~~~~~~~~~~~~~~~~~~~~ - [1-q((1-\theta)s)]V^*((1-\theta)s)\biggr\}\nonumber\\
    \ge & 0
\end{align}
where the inequality is by Assumption~\ref{assum:2} and the monotonicity of $V^*$. Therefore, the genie-aided optimal policy is always pulling Arm $a_1$.

\subsection{Some examples of the abandonment probability functions}
\label{app:cont-state-gap-function}

We present some examples of the abandonment probability functions $q(\cdot)$ that satisfy $V^*(s_1) - Q^*(s_1, a) \le V^*(s_2) - Q^*(s_2, a)$ for any $a\in\mathcal{A}, s_1,s_2\in[0,1], s_1\ge s_2$.

For any $a\in\mathcal{A}, s_1,s_2\in[0,1], s_1\ge s_2$, we have
\begin{align}\label{equ:app-gap-diff}
    & \left[ V^*(s_1) - Q^*(s_1, a) \right] - \left[ V^*(s_2) - Q^*(s_2, a) \right]\nonumber\\
    = & \left[ Q^*(s_1,a_1) - Q^*(s_1, a) \right] - \left[ Q^*(s_2,a_1) - Q^*(s_2, a) \right]\nonumber\\
    = & (\mu(a_1)-\mu(a))
    \biggl\{
    [1-q((1-\theta)s_1 + \theta)]V^*((1-\theta)s_1 + \theta) - [1-q((1-\theta)s_1)]V^*((1-\theta)s_1)\nonumber\\
    & ~~~~~~~~~~~~~~~~~ -  [1-q((1-\theta)s_2 + \theta)]V^*((1-\theta)s_2 + \theta) +            [1-q((1-\theta)s_2)]V^*((1-\theta)s_2)
    \biggr\}
\end{align}
where the first quality is by Lemma~\ref{lemma:cont-1}, and the second equality is by~\eqref{equ:cont-q*-diff}. From~\eqref{equ:app-gap-diff}, we know that the sign of $\left[ V^*(s_1) - Q^*(s_1, a) \right] - \left[ V^*(s_2) - Q^*(s_2, a) \right]$ depends only on the abandonment probability function $q(\cdot)$ and $V^*(\cdot)$. By Lemma~\ref{lemma:cont-1}, the Bellman equation of $V^*(s)$ is as follows:
\begin{align}
    V^*(s) =& \mu(a_1) + (1-\mu(a_1))\left[1-q((1-\theta)s)\right]V^*((1-\theta)s) \nonumber\\
    & + \mu(a_1) \left[1-q((1-\theta)s + \theta)\right]V^*((1-\theta)s+\theta)
\end{align}
Therefore, if we know the abandonment function $q(\cdot)$ and $\mu(a_1)$, we can numerically calculate $V^*(\cdot)$ and then determine the sign of $\left[ V^*(s_1) - Q^*(s_1, a) \right] - \left[ V^*(s_2) - Q^*(s_2, a) \right]$ by~\eqref{equ:cont-q*-diff}. For example, let $\mu(a_1)=0.9$, $\theta=0.5$, and
\begin{align}\label{equ:app-ex-abandon-func}
    q(s)=1-\frac{\log(c_6s+1)}{\log(c_6+1)}
\end{align}
where $c_6$ is a constant.
\begin{figure}[htbp]
    \centering
    \subfigure[$c_6=5$]
    {
        \includegraphics[width=0.31\textwidth]{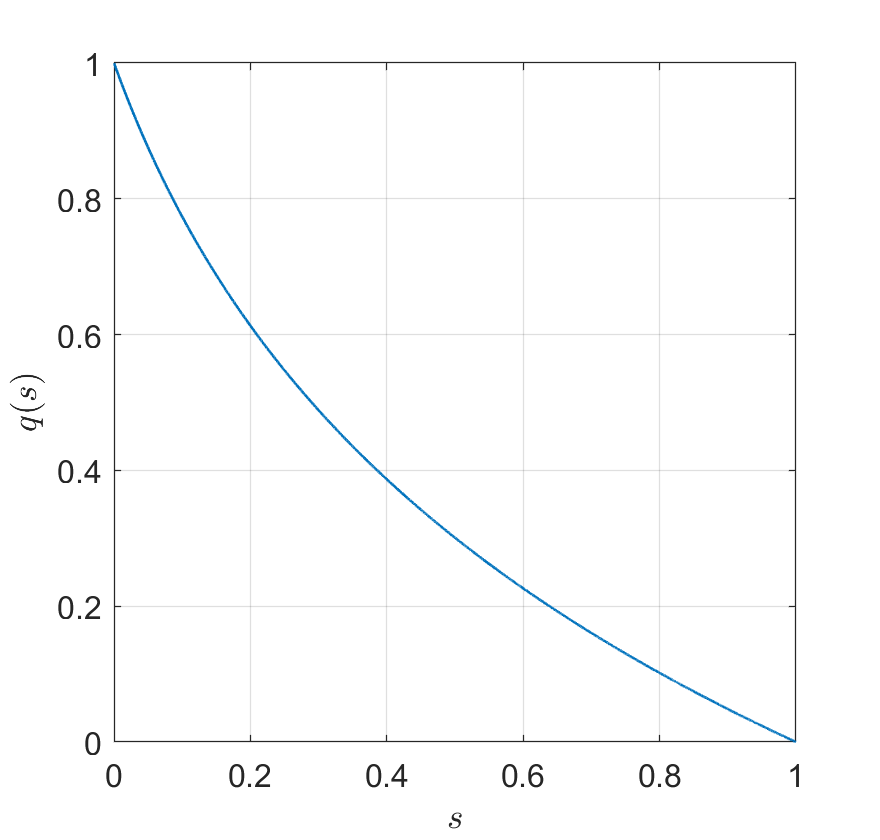}
    }
    \hfill
    \subfigure[$c_6=50$]
    {
        \includegraphics[width=0.31\textwidth]{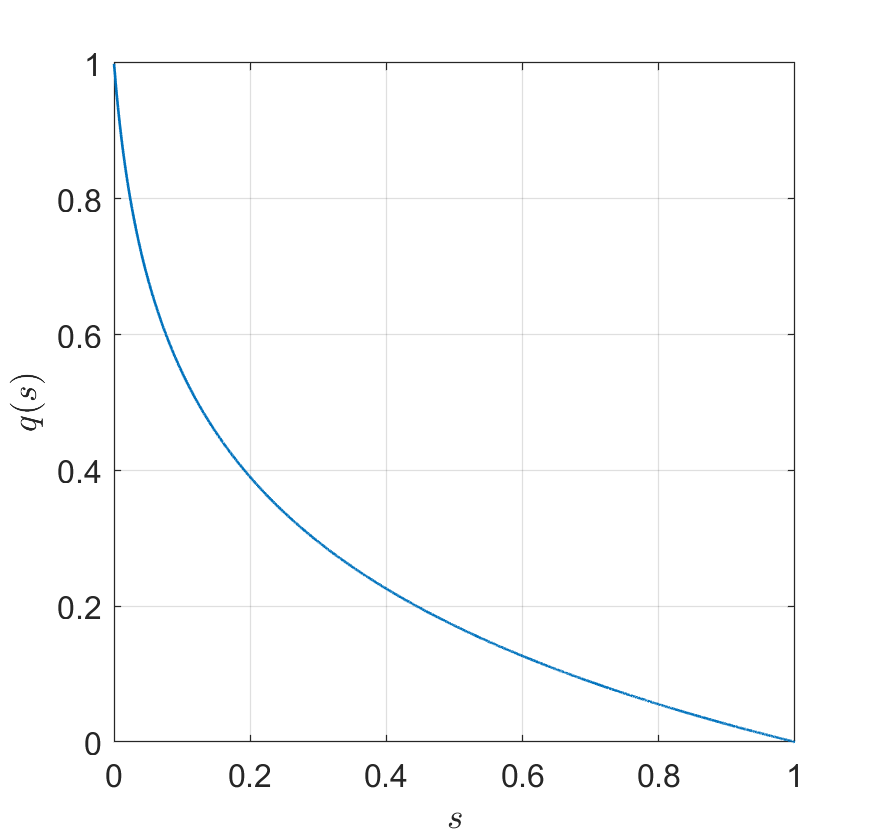}
    }
    \hfill
    \subfigure[$c_6=1000$]
    {
        \includegraphics[width=0.31\textwidth]{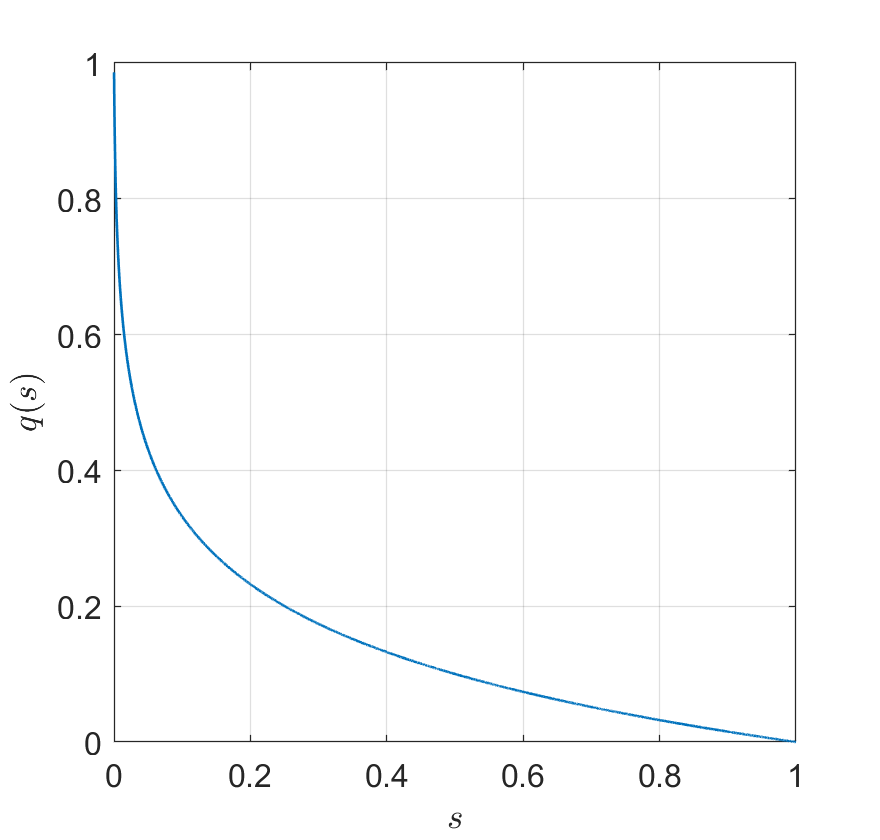}
    }
    \caption{Examples of abandonment probability functions.}
    \label{fig:app-ex-abandon-prob-func}
\end{figure}
Fig.~\ref{fig:app-ex-abandon-prob-func} shows the abandonment probability functions~\eqref{equ:app-ex-abandon-func} for $c_6=5$, $c_6=50$, and $c_6=1000$.
We numerically check that for $c_6=5$, $c_6=50$, and $c_6=1000$,
\begin{align}\label{equ:app-gap-diff-inequ}
    \left[ V^*(s_1) - Q^*(s_1, a) \right] \le \left[ V^*(s_2) - Q^*(s_2, a) \right]
\end{align}
for any $a\in\mathcal{A}, s_1,s_2\in[0,1], s_1\ge s_2$. We conjecture that~\eqref{equ:app-gap-diff-inequ} holds for any $c_6\ge 5$.

\subsection{Details of CONT-ULCB and CONT-KL-ULCB algorithms}
\label{app:cont-ext-alg}
The CONT-ULCB algorithm is shown in Algorithm~\ref{alg:3}. The CONT-KL-ULCB algorithm uses KL divergence in the indices, i.e., for any $s\in[0,1]$ and any $a\in\mathcal{A}$,
\begin{align}
    \tilde{\mu}_t^s(a) = \begin{cases}
    \min\left\{p:\mathrm{kl}(\bar{\mu}_t(a),p)N_t(a)\le (1 - 2s) \log t + c \log(\log t) \right\}, & s\le \frac{1}{2}\nonumber\\
    \max\left\{p:\mathrm{kl}(\bar{\mu}_t(a),p)N_t(a)\le (2s - 1) \log t + c \log(\log t) \right\}, & s > \frac{1}{2}.
    \end{cases}
\end{align}

\begin{algorithm}[ht]
\caption{CONT-ULCB Algorithm}\label{alg:3}
\begin{algorithmic}[1]
    \State {\textbf{Input:}} $N_1(a)\gets 0$, $\bar{\mu}_1(a) \gets 0$ for all $a\in\mathcal{A}$, $t\gets 1$, $c_0$, $c_1$, $c$.
    \State {$h\gets 1$, $S_{k,1} \gets$ initial state of episode $k$, $S_{k,1}\in\{0,1\}$}
    \For{episode $k=1,...,K$}
        \While{$S_{k,h} \neq g$}
            \If{there exists Arm $a'$ such that $N_{t}(a')=0$}
            \State {play Arm $A_{k,h} =a'$ and observe $R_{k,h}$}
            \Else
            \State {Let $\tilde{\mu}_t^{S_{k,h}}(a)=\bar{\mu}_{t}(a)+(2S_{k,h} - 1)\sqrt{\frac{\log t + c \log(\log t)}{2 N_t(a)}}$ for all $a\in\mathcal{A}$}
            \State {Take the action $A_{k,h}\in \operatorname{argmax}_{a}\tilde{\mu}^{S_{k,h}}_t(a)$}
            \State {Observe $R_{k,h}$}
            \EndIf
            \If{abandonment occurs}
                \State {$S_{k,h+1} = g$}
            \Else
                \State {$S_{k,h+1} = (1-\theta)S_{k,h} + \theta R_{k,h}$}
            \EndIf
            \State{Define $(S_{t}, A_{t}, S'_{t}, R_{t}) \coloneqq  (S_{k,h}, A_{k,h}, S_{k,h+1}, R_{k,h})$}
            \State {Update: $N_{t+1}(A_t) = N_{t}(A_t) + 1$ and $N_{t+1}(a) = N_{t}(a)~\forall a\neq A_t$}
            \State {Update: $\bar{\mu}_{t+1}(A_t) = \frac{\bar{\mu}_{t}(A_t)N_{t}(A_t)+R_t}{N_{t+1}(A_t)}$ and $\bar{\mu}_{t+1}(a) = \bar{\mu}_{t}(a)~\forall a\neq A_t$}
            \State {$t\gets t+1$, $h\gets h+1$}
        \EndWhile
    \EndFor
 \end{algorithmic}
\end{algorithm}

\subsection{Simulation results}
\label{app:cont-state-simu}

Consider the MAB-A problem of the general-state setting. Let the abandonment probability function $q(\cdot)$ be
\begin{align}
    q(s)=1-\frac{\log(c_6 s+1)}{\log(c_6 + 1)}
\end{align}
for any $s\in[0,1]$, where $c_6$ is a constant. Let the forgetting factor $\theta=0.5$ in the simulation.
We present the simulation results for our proposed DISC-ULCB, CONT-ULCB, DISC-KL-ULCB, and CONT-KL-ULCB algorithms. Let $n=4$ for the discretization of DISC-ULCB and DISC-KL-ULCB. We simulated $2\times 10^4$ episodes with $10^7$ independent runs. Simulation results are shown in Figure~\ref{fig:simu-cont}, Figure~\ref{fig:simu-cont-2}, and Figure~\ref{fig:simu-cont-3} for different sets of arms and different abandonment probabilities (different $c_6$).
\begin{remark}
For Figure~\ref{fig:simu-cont-comp}, the $95\%$ confidence bounds are at most $\pm 4.43$. For Figure~\ref{fig:simu-cont-2-comp}, the $95\%$ confidence bounds are at most $\pm 3.11$. For Figure~\ref{fig:simu-cont-3-comp}, the $95\%$ confidence bounds are at most $\pm 0.12$.
\end{remark}
From Figure~\ref{fig:simu-cont-comp}, Figure~\ref{fig:simu-cont-2-comp}, and Figure~\ref{fig:simu-cont-3-comp}, we can see in all the three different settings that both DISC-ULCB and CONT-ULCB algorithms outperform the traditional UCB in terms of average cumulative regret, and that both DISC-KL-ULCB and CONT-KL-ULCB algorithms outperform the traditional KL-UCB. Moreover, CONT-ULCB and CONT-KL-ULCB perform slightly better than DISC-ULCB and DISC-KL-ULCB, respecively. These figures also show that the slopes of DISC-ULCB and DISC-KL-ULCB converges to the slopes of their corresponding upper bounds. Also, in Figure~\ref{fig:simu-cont-ub-lb}, Figure~\ref{fig:simu-cont-2-ub-lb}, and Figure~\ref{fig:simu-cont-3-ub-lb} where the Y-axis is the average cumulative regret divided by $\log K$, the curves of DISC-ULCB and DISC-KL-ULCB go towards their corresponding asymptotic upper bounds, which confirms our theoretical results.

\begin{figure}[ht]
    \centering
    \subfigure[Comparison among algorithms]
    {
        \includegraphics[width=0.45\textwidth]{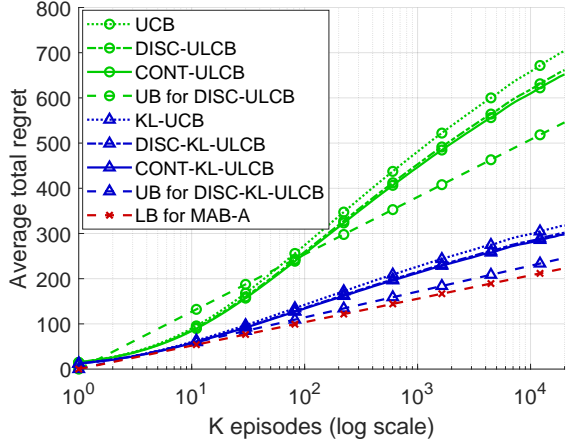}
        \label{fig:simu-cont-comp}
    }
    \hfill
    \subfigure[Upper bound (UB) and lower bound (LB)]
    {
        \includegraphics[width=0.45\textwidth]{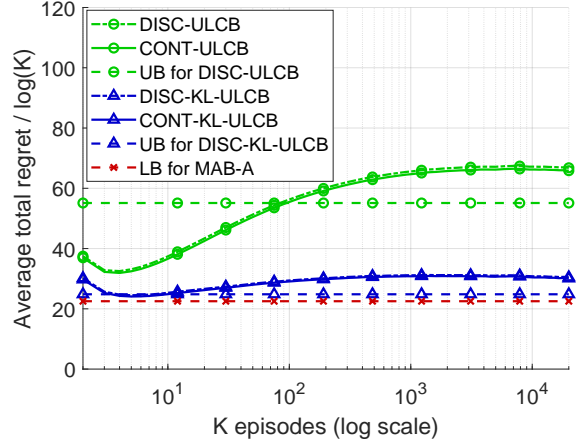}
        \label{fig:simu-cont-ub-lb}
    }
    \caption{Simulation results for the general-state setting, $M=2$, $\mu(a_1)=0.9$, $\mu(a_2)=0.8$, $c_6=1000$.}
    \label{fig:simu-cont}
\end{figure}

\begin{figure}[ht]
    \centering
    \subfigure[Comparison among algorithms]
    {
        \includegraphics[width=0.45\textwidth]{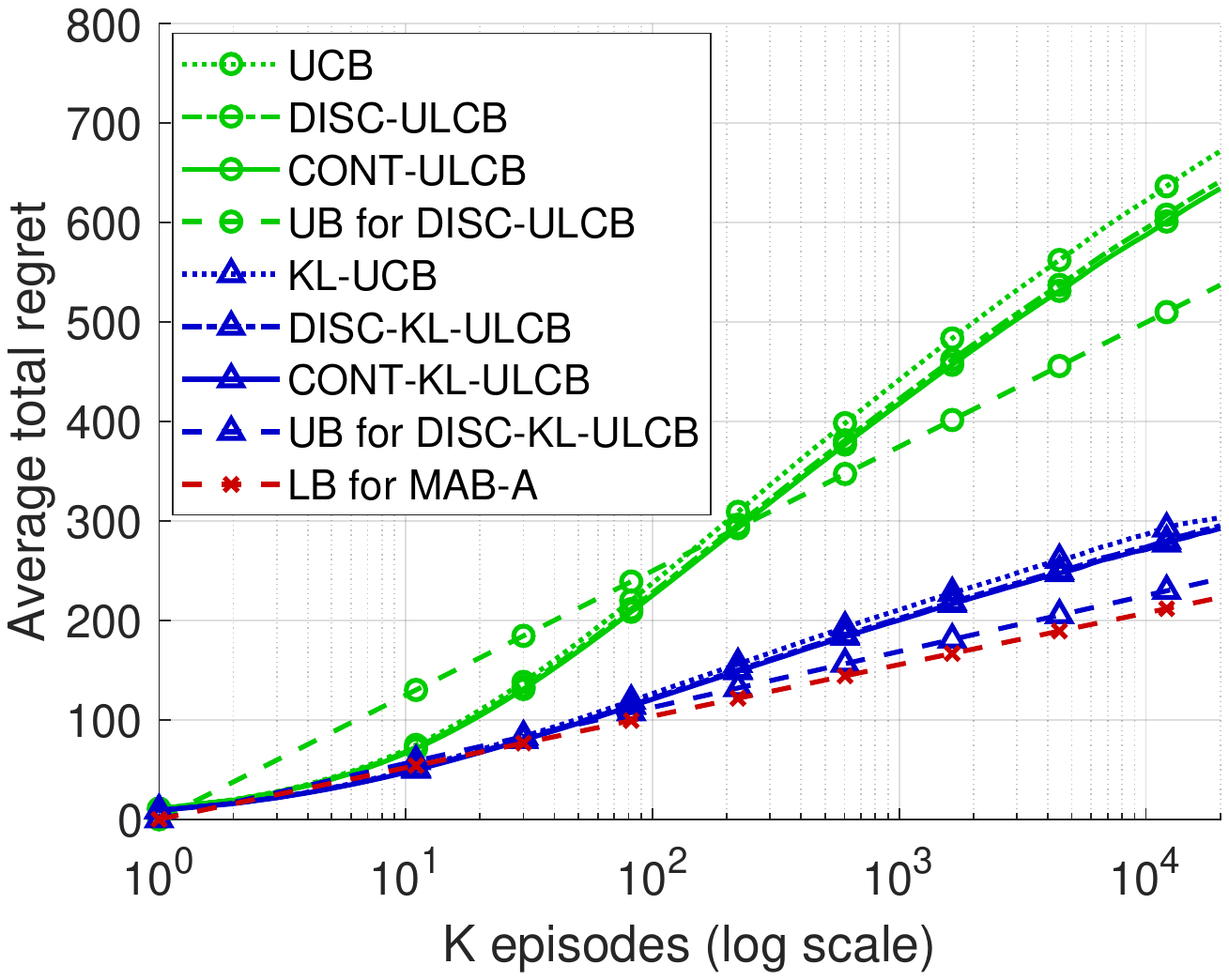}
        \label{fig:simu-cont-2-comp}
    }
    \hfill
    \subfigure[Upper bound (UB) and lower bound (LB)]
    {
        \includegraphics[width=0.45\textwidth]{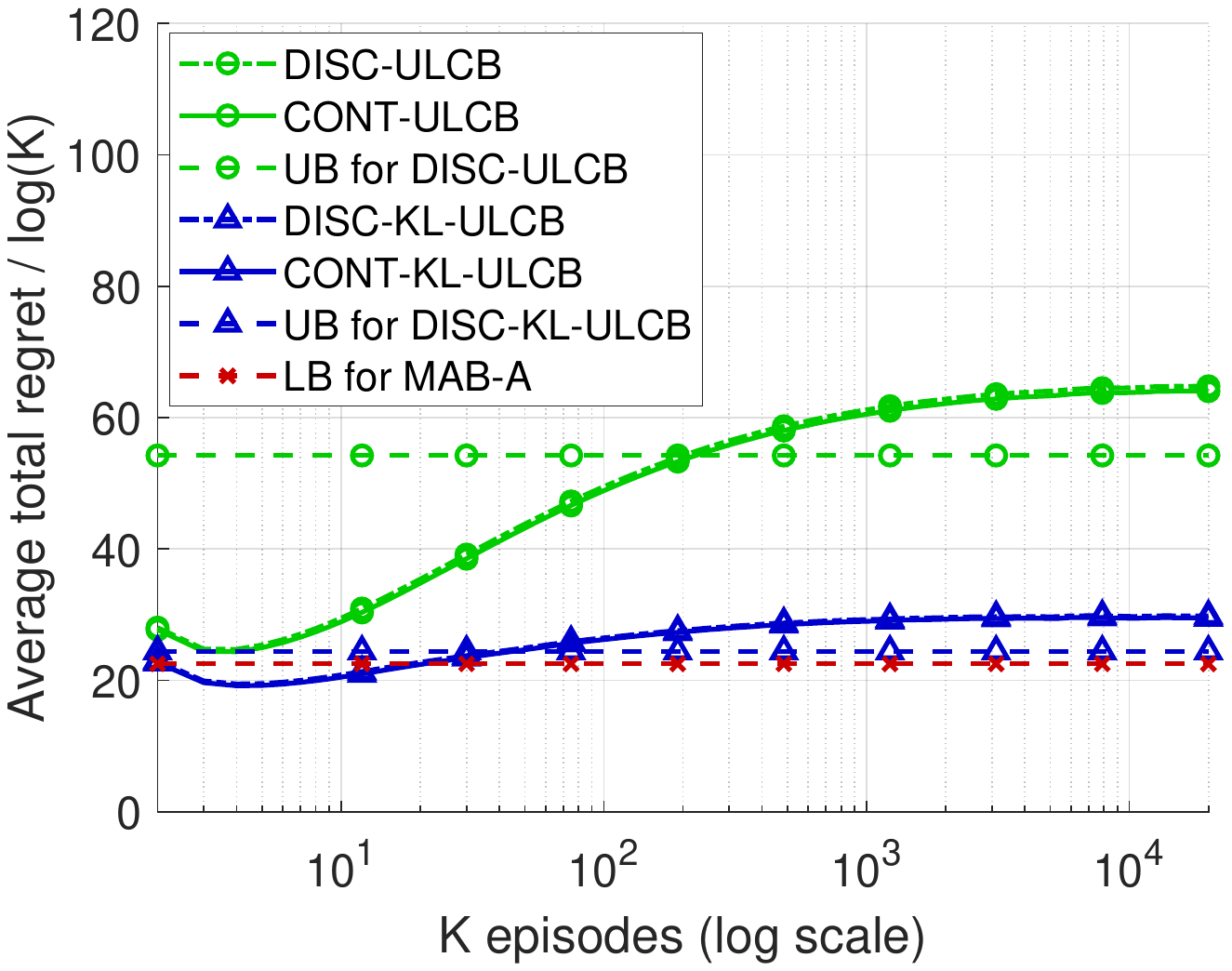}
        \label{fig:simu-cont-2-ub-lb}
    }
    \caption{Simulation results for the general-state setting, $M=2$, $\mu(a_1)=0.9$, $\mu(a_2)=0.8$, $c_6=100$.}
    \label{fig:simu-cont-2}
\end{figure}

\begin{figure}[ht]
    \centering
    \subfigure[Comparison among algorithms]
    {
        \includegraphics[width=0.45\textwidth]{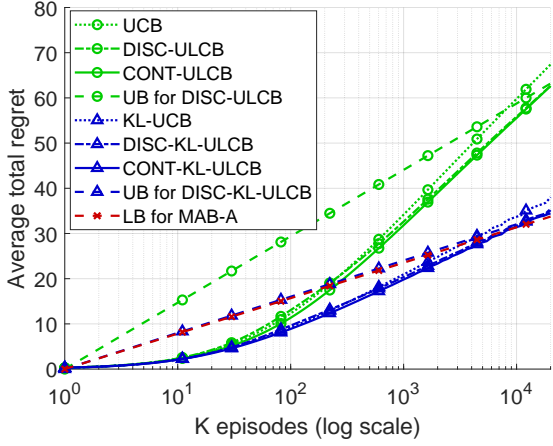}
        \label{fig:simu-cont-3-comp}
    }
    \hfill
    \subfigure[Upper bound (UB) and lower bound (LB)]
    {
        \includegraphics[width=0.45\textwidth]{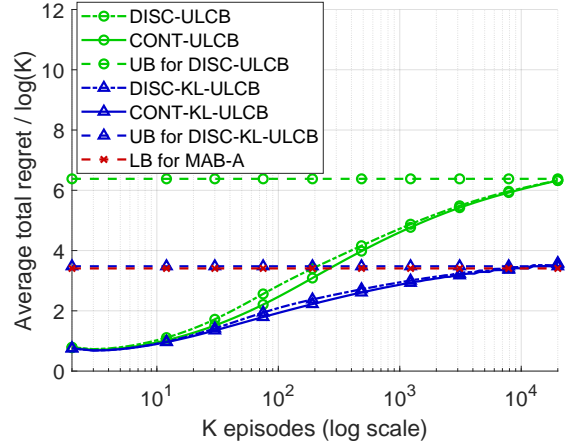}
        \label{fig:simu-cont-3-ub-lb}
    }
    \caption{Simulation results for the general-state setting, $M=2$, $\mu(a_1)=0.2$, $\mu(a_2)=0.1$, $c_6=1000$.}
    \label{fig:simu-cont-3}
\end{figure}

\section{Simulation parameters and additional simulations}
\label{app:add-simulations}

In the simulation of Q-learning with $\epsilon$-greedy, we set $\epsilon=0.1$. The learning rate is set to be $\frac{1}{N(s,a)}$, where $N(s,a)$ is the number of times the state-action pair $(s,a)$ has been visited. For Q-learning with UCB, we set the episode length parameter $H$ to be the maximum expected episode length in MAB-A, which is the expected episode length under the policy of always pulling the optimal arm $a_1$. The number of episodes $K$ is set to be $1000$. The constant $c$ in the bonus term is set to be $4$.

\begin{figure}[ht]
    \centering
    \subfigure[Comparison among algorithms]
    {
        \includegraphics[width=0.48\textwidth]{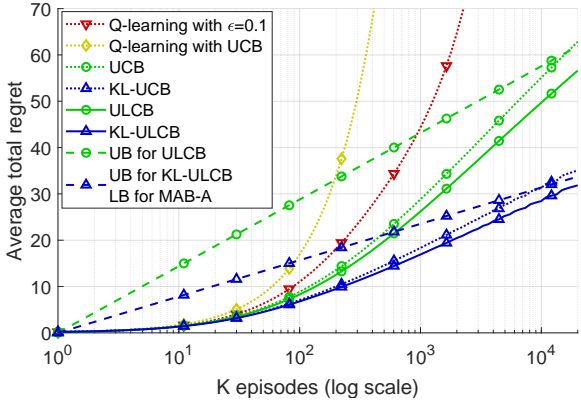}
        \label{fig:simu-0-1-comp-3}
    }
    \hfill
    \subfigure[Upper bound (UB) and lower bound (LB)]
    {
        \includegraphics[width=0.48\textwidth]{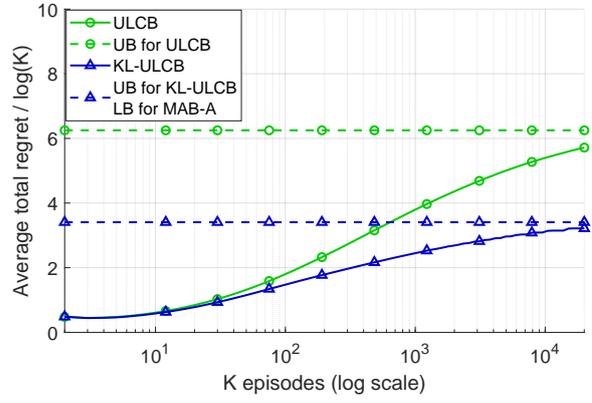}
        \label{fig:simu-0-1-ub-lb-3}
    }
    \caption{Simulation results, $M=2$, $\mu(a_1)=0.2$, $\mu(a_2)=0.1$, $q(0,0)=1$, $q(1,0)=q(0,1)=q(1,1)=0$.}
    \label{fig:simu-0-1-3}
\end{figure}

\begin{figure}[ht]
    \centering
    \subfigure[Comparison among algorithms]
    {
        \includegraphics[width=0.48\textwidth]{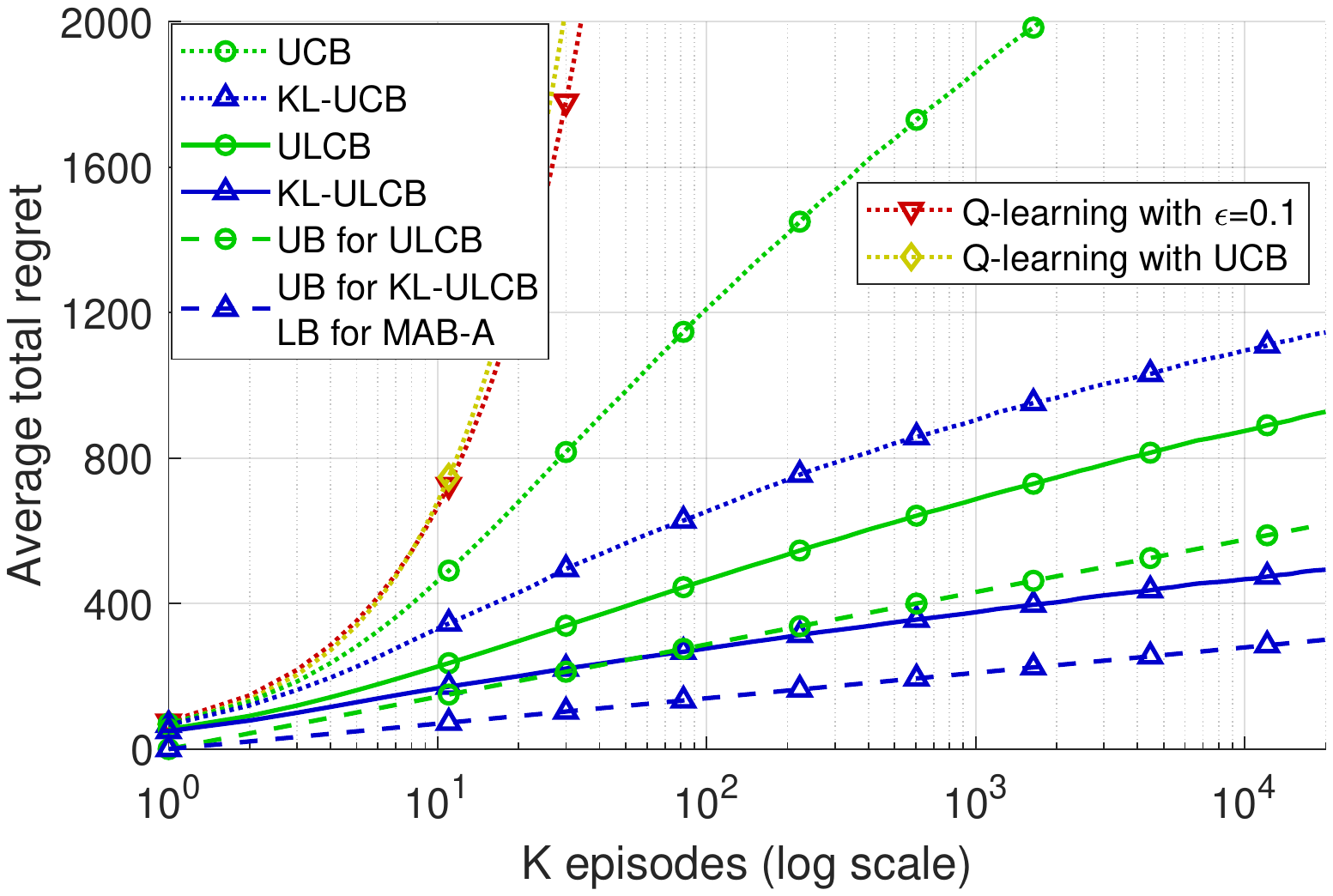}
        \label{fig:simu-0-1-comp-4}
    }
    \hfill
    \subfigure[Upper bound (UB) and lower bound (LB)]
    {
        \includegraphics[width=0.48\textwidth]{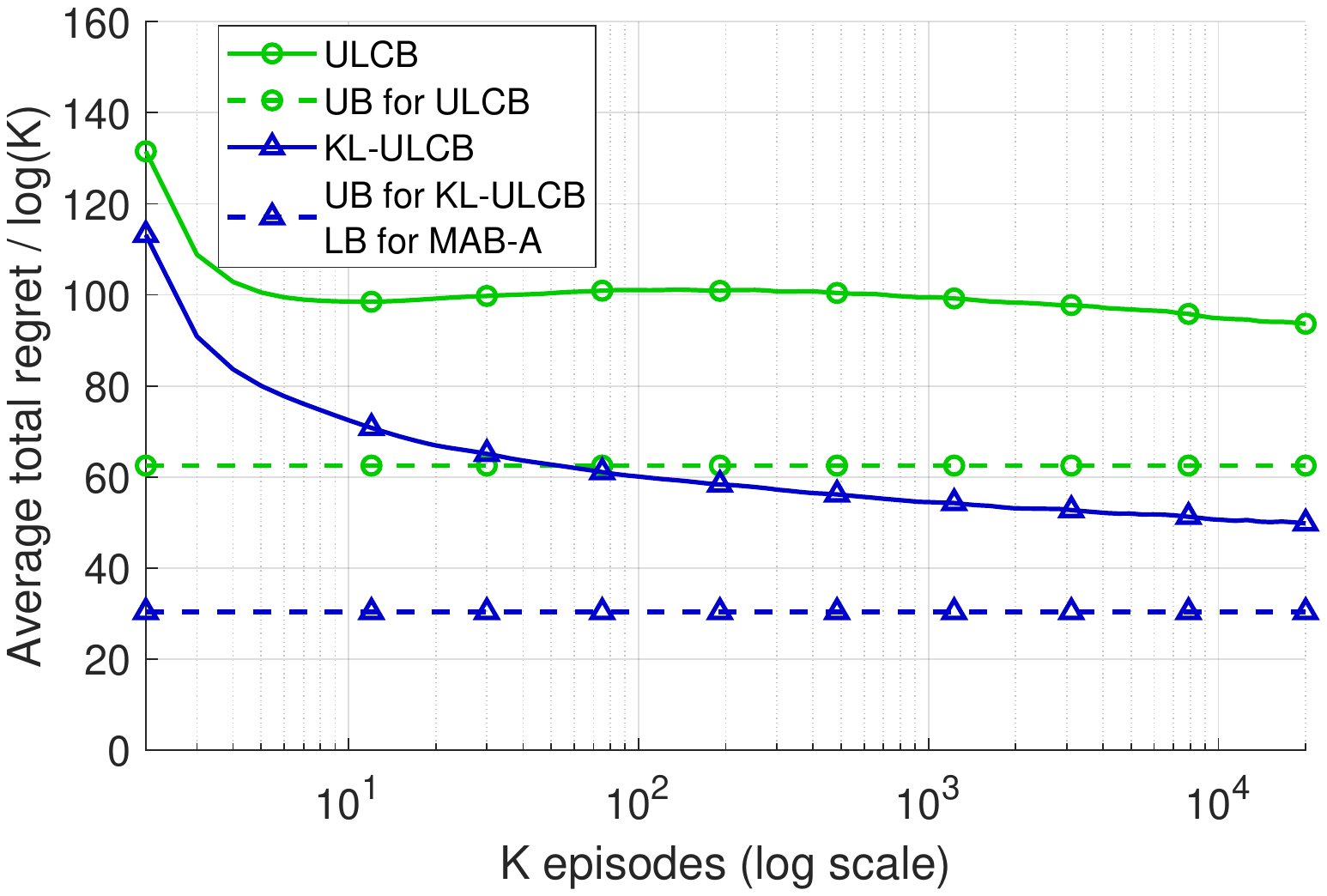}
        \label{fig:simu-0-1-ub-lb-4}
    }
    \caption{Simulation results, $M=3$, $\mu(a_1)=0.9$, $\mu(a_2)=0.8$, $\mu(a_3)=0.5$, $q(0,0)=1$, $q(1,0)=q(0,1)=q(1,1)=0$.}
    \label{fig:simu-0-1-4}
\end{figure}

We did additional simulations for different sets of arms and different abandonment probabilities, as shown in Figure~\ref{fig:simu-0-1-3} and Figure~\ref{fig:simu-0-1-4}. The other settings are the same as those in Section~\ref{sec:simu}. The same conclusion holds, i.e., our proposed ULCB and KL-ULCB outperform the traditional UCB and KL-UCB, respectively, 
our algorithms have order-wise lower regrets than Q-learning with $\epsilon$-greedy and Q-learning with UCB, and the simulation results are also consistent with our theoretical results.

\begin{figure}[ht]
    \centering
    \subfigure[Fix $c_0=-1$]
    {
        \includegraphics[width=0.43\textwidth]{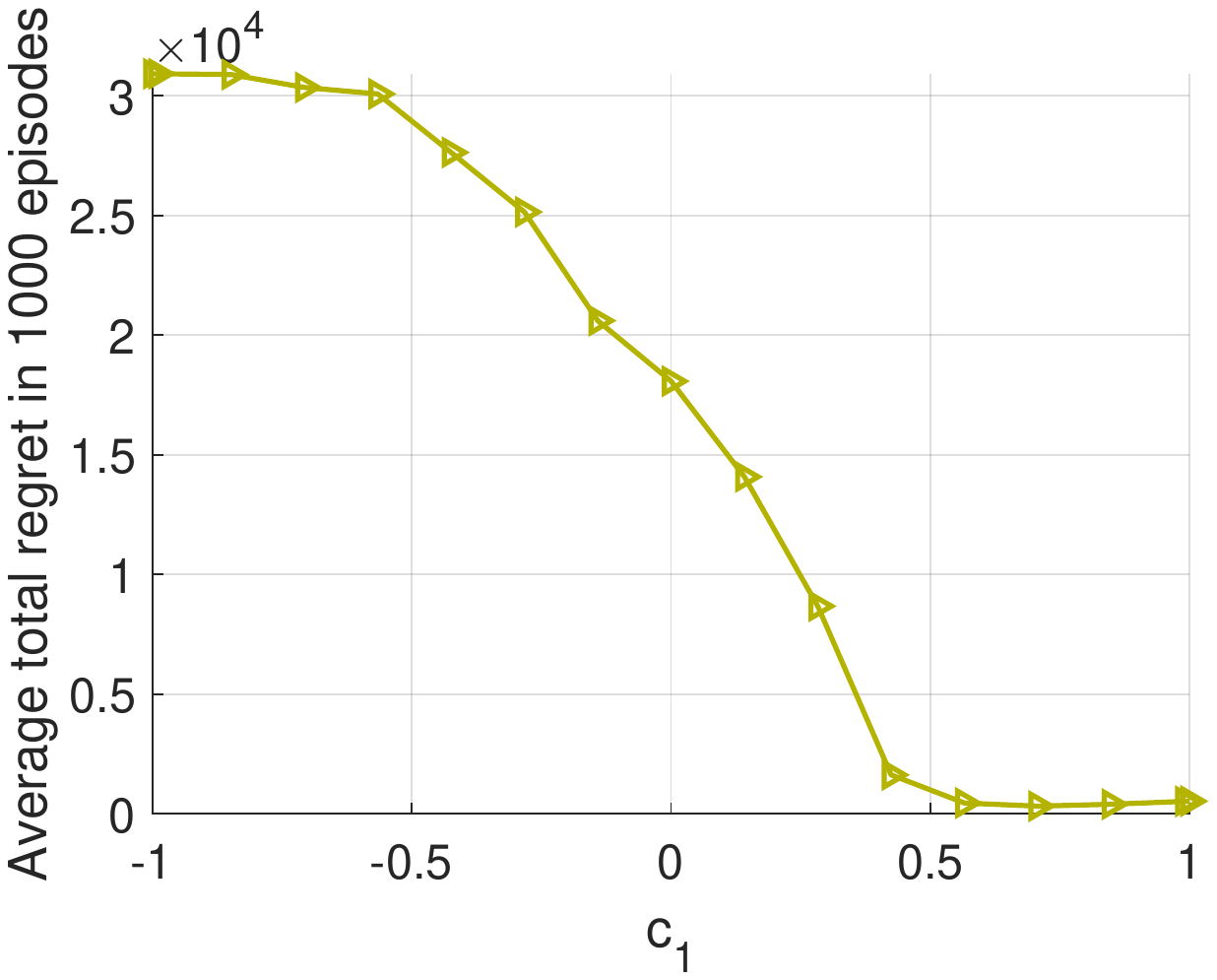}
        \label{fig:simu-diff-coef-fix-c0}
    }
    \hfill
    \subfigure[Fix $c_1=1$]
    {
        \includegraphics[width=0.43\textwidth]{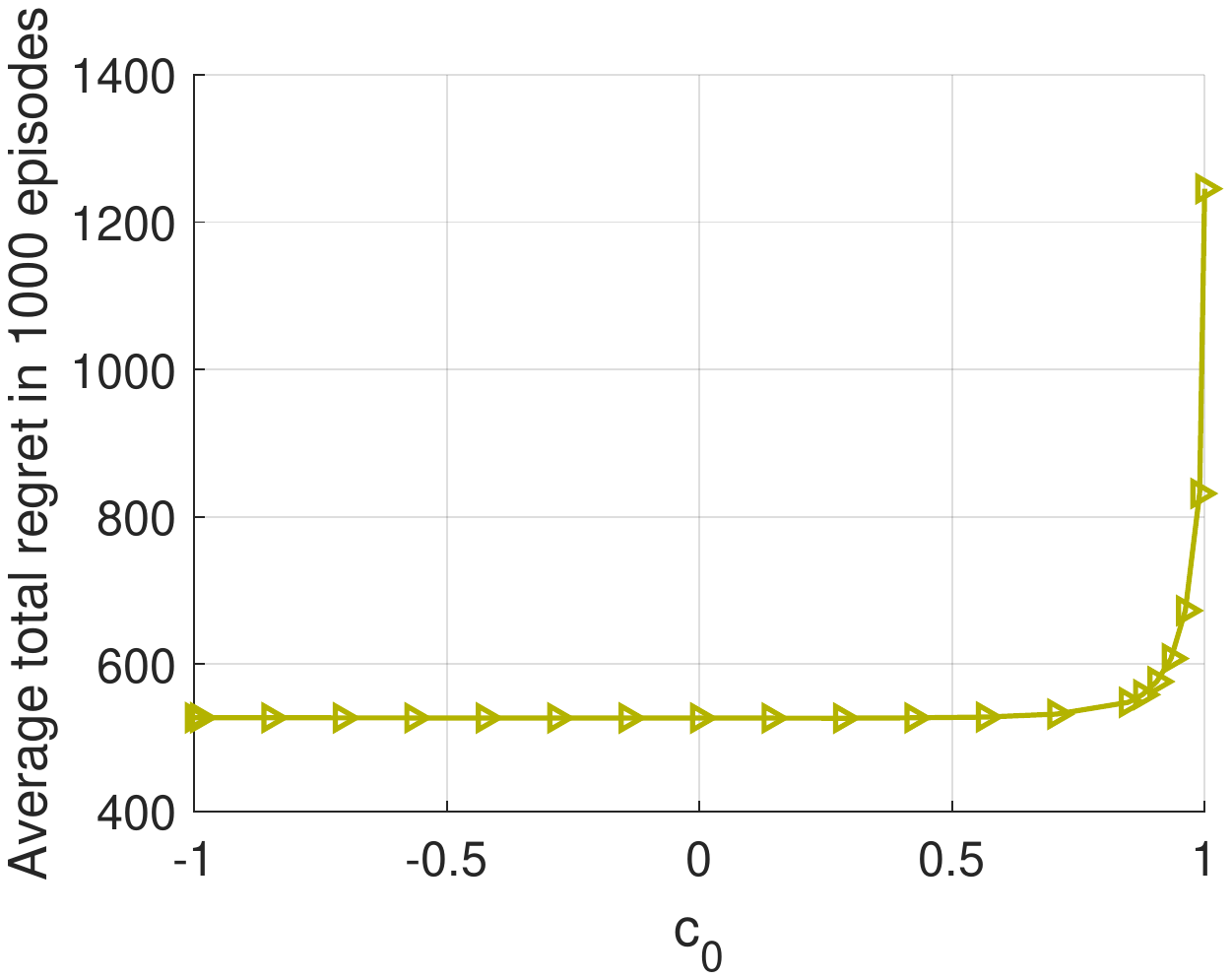}
        \label{fig:simu-diff-coef-fix-c1}
    }
    \caption{ULCB: different coefficients $c_0$ and $c_1$, $q(0,0)=1$, $q(1,1)=q(1,0)=q(0,1)=0$.}
    \label{fig:simu-diff-coef}
\end{figure}

In order to further understand the exploration and the exploitation in MAB-A problem, we run ULCB algorithms with different $c_0$ and $c_1$, which are the state-dependent exploration-exploitation coefficients in the indices for state $0$ and state $1$, respectively. Larger coefficient means more exploration. The results are shown in Figure~\ref{fig:simu-diff-coef}. 
In Figure~\ref{fig:simu-diff-coef-fix-c0}, $c_0$ is fixed, and as $c_1$ increases, the cumulative regret decreases. In Figure~\ref{fig:simu-diff-coef-fix-c1}, $c_1$ is fixed, and as $c_0$ increases, the cumulative regret increases. These changes of regret are consistent with our theoretical results which suggest more exploration in state $1$.

\section{Results for the opposite case}
\label{app:add-cases}

Note that you can see in~\eqref{equ:vqdiff} in Appendix~\ref{app:lemma:sufficient-condition} that the inequality $V^*(1)-Q^*(1,a)\ge V^*(0)-Q^*(0,a)$ or $V^*(1)-Q^*(1,a)\le V^*(0)-Q^*(0,a)$ does not depend on $a$. Hence, at least one of the two cases holds. In this section, we consider the case where $V^*(1)-Q^*(1,a)\ge V^*(0)-Q^*(0,a)$ for any $a\neq a_1$.  Similar to our main results, we also have corresponding algorithms and results. 

\subsection{Upper bound for ULCB}
\label{app:theorem1-case2}

\begin{theorem}\label{theorem:5}
Let Assumption~\ref{assum:1} hold. Assume that $\mu(a_1)>\mu(a_2)$, $\mu(a_M) > 0$, $q(0,1)<1$, $q(1,1)<1$, and $V^*(1)-Q^*(1,a)\ge V^*(0)-Q^*(0,a)$ for any $a\neq a_1$. Then using the ULCB algorithm with $c_0=1$, $c_1=-1$, and $c=4$, we have
\begin{align}
    \limsup_{K\rightarrow \infty} \frac{\expt[\mathrm{Reg}_{\pi}(K)]}{\log K} \le \sum_{i\neq 1} \frac{V^*(0) - Q^*(0,a_i)}{2(\mu(a_1)-\mu(a_i))^2}.
\end{align}
\end{theorem}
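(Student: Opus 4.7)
The strategy is a direct mirroring of the proof of Theorem~\ref{theorem:1}, with the roles of states $0$ and $1$ interchanged throughout. The starting point is again the regret decomposition~\eqref{equ:regret-decomp}, after which it remains to (i) bound the state-$1$ contribution by an instance-dependent constant, and (ii) bound the state-$0$ contribution by $\sum_{i\neq 1}[V^*(0)-Q^*(0,a_i)]\log K/[2(\mu(a_1)-\mu(a_i))^2]+o(\log K)$. Because $c_0=1$ we now use an upper confidence bound in state $0$ and, because $c_1=-1$, a lower confidence bound in state $1$, which is exactly symmetric to the original algorithm; the condition $V^*(1)-Q^*(1,a)\ge V^*(0)-Q^*(0,a)$ plays the role of~\eqref{assum:gap}.

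The first step is the analog of Lemma~\ref{lemma:sd-cb-1}: with high probability, the number $N_t^0(a_1)$ of pulls of the optimal arm while in state $0$ grows linearly in $t$. I would follow the proof in Appendix~\ref{app:lemma:sd-cb-1} verbatim, except that the lower bound $p_{\min}=\mu(a_M)\min\{1-q(0,1),1-q(1,1)\}$ on the one-step chance of entering state $1$ is replaced by
\[
p_{\min}':=(1-\mu(a_1))\min\{1-q(0,0),1-q(1,0)\},
\]
which lower-bounds the one-step chance of entering state $0$ (using $\mu(a_1)<1$ from Assumption~\ref{assum:1}). The Azuma--Hoeffding estimate on $\sum_i \mathbbm{1}\{S_i=0\}$ and the peeling/martingale construction behind Lemma~\ref{lemma:sd-cb-prob-tilde-mu} carry over unchanged because the optimistic index $\tilde{\mu}_t^0$ used here has exactly the same form as $\tilde{\mu}_t^1$ in the original proof.

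With linear growth of $N_t^0(a_1)$ in hand, the analog of Lemma~\ref{lemma:sd-cb-2} yields a constant bound on the state-$1$ regret. Under $c_1=-1$, the concentration of Theorem~10 in~\cite{garivier2011kl} gives $\prob(\mu(a_i)<\tilde{\mu}_t^1(a_i))\le 6e/[t(\log t)^2]$; combined with the tightness of $\tilde{\mu}_t^1(a_1)$ around $\mu(a_1)$ inherited from step~1 and the gap $\mu(a_1)-\mu(a_2)>0$, one concludes $\tilde{\mu}_t^1(a_i)<\tilde{\mu}_t^1(a_1)$ on the good event, so $\sum_t\prob(S_t=1,A_t=a_i)$ is summable. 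The analog of Lemma~\ref{lemma:sd-cb-3} then exploits that $\tilde{\mu}_t^0$ is a genuine upper confidence bound when $c_0=1$; the argument from Appendix~\ref{app:lemma:sd-cb-3}, together with $\expt[T(K,\pi)]\le c_5 K$ from Lemma~\ref{lemma:sd-cb-bound-TKpi}, produces the $O(\log K)$ bound with the desired instance-dependent constant. Plugging both bounds into~\eqref{equ:regret-decomp} finishes the proof.

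The main obstacle is a subtlety in the hypotheses: Theorem~\ref{theorem:5} inherits $q(0,1)<1$ and $q(1,1)<1$ from Theorem~\ref{theorem:1}, whereas the mirrored argument genuinely needs $q(0,0)<1$ and $q(1,0)<1$ in order to guarantee $p_{\min}'>0$ and hence the recurrence of state $0$. One therefore either reads Theorem~\ref{theorem:5} as implicitly using the mirrored conditions, or one must verify that the reversed gap hypothesis $V^*(1)-Q^*(1,a)\ge V^*(0)-Q^*(0,a)$ combined with $\mu(a_M)>0$, $\mu(a_1)<1$, and Assumption~\ref{assum:1} already forces $q(0,0),q(1,0)<1$ through the Bellman relation~\eqref{equ:bellman-v*q*-v2} and the identity~\eqref{equ:vqdiff}. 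Pinning down the correct minimal sufficient condition for step~1 is the only genuinely new work; once $p_{\min}'>0$ is secured, every subsequent estimate in the proof of Theorem~\ref{theorem:1} transfers mechanically.
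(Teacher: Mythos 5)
Your proposal follows exactly the route the paper intends: the paper's own ``proof'' of Theorem~\ref{theorem:5} is the single sentence that it is symmetric to the proof of Theorem~\ref{theorem:1}, and your state-swapped versions of Lemmas~\ref{lemma:sd-cb-1}--\ref{lemma:sd-cb-3} are precisely that symmetric argument. The subtlety you flag about the hypotheses is real and correctly diagnosed: the mirrored argument needs every step to enter state $0$ with probability at least $p_{\min}'=(1-\mu(a_1))\min\{1-q(0,0),1-q(1,0)\}>0$, so it requires $q(0,0)<1$ and $q(1,0)<1$ rather than the conditions $q(0,1)<1$, $q(1,1)<1$ copied from Theorem~\ref{theorem:1}; and the reversed gap condition does not force this (e.g.\ $q(0,0)=1$, $q(1,0)=q(0,1)=1/2$, $q(1,1)=0$ satisfies Assumption~\ref{assum:1} and, by~\eqref{equ:vqdiff} together with $V^*(1)\ge V^*(0)$, also satisfies $V^*(1)-Q^*(1,a)\ge V^*(0)-Q^*(0,a)$, yet gives $p_{\min}'=0$). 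So the theorem should be read with the mirrored conditions $q(0,0)<1$ and $q(1,0)<1$ in place of the stated ones; with that correction, everything transfers mechanically as you describe.
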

The proof is symmetric to that of Theorem~\ref{theorem:1} and hence is omitted.

\subsection{Upper bound for KL-ULCB}
\label{app:theorem2-case2}

\begin{algorithm}[ht]
\caption{KL-ULCB Algorithm for the Opposite Case}\label{alg:2-case2}
\begin{algorithmic}[1]
    \State {\textbf{Input:}} $N_1(a)\gets 0$, $\bar{\mu}_1(a) \gets 0$ for all $a\in\mathcal{A}$, $t\gets 1$, $c_0$, $c_1$, $c$.
    \State {$h\gets 1$, $S_{k,1} \gets$ initial state of episode $k$, $S_{k,1}\in\{0,1\}$}
    \For{episode $k=1,...,K$}
        \While{$S_{k,h} \neq g$}
            \If{there exists Arm $a'$ such that $N_{t}(a')=0$}
            \State {play Arm $A_{k,h} =a'$ and observe $R_{k,h}$}
            \Else
            \If{$S_{k,h}=0$}
            \State {Let $\tilde{\mu}_t^0(a)=\max\left\{p:\mathrm{kl}(\bar{\mu}_t(a),p)N_t(a)\le c_0 \log t + c \log(\log t) \right\}$ for all $a\in\mathcal{A}$}
            \State {Take the action $A_{k,h}\in \operatorname{argmax}_{a}\tilde{\mu}^0_t(a)$}
            \Else
            \State {Let $\tilde{\mu}_t^1(a)=\min\left\{p:\mathrm{kl}(\bar{\mu}_t(a),p)N_t(a)\le c_1 \log t + c \log(\log t) \right\}$ for all $a\in\mathcal{A}$}
            \State {Take the action $A_{k,h}\in \operatorname{argmax}_{a}\tilde{\mu}^1_t(a)$}
            \EndIf
            \State {Observe $R_{k,h}$}
            \EndIf
            \If{abandonment occurs}
                \State {$S_{k,h+1} = g$}
            \Else
                \State {$S_{k,h+1} = R_{k,h}$}
            \EndIf
            \State{Define $(S_{t}, A_{t}, S'_{t}, R_{t}) \coloneqq  (S_{k,h}, A_{k,h}, S_{k,h+1}, R_{k,h})$}
            \State {Update: $N_{t+1}(A_t) = N_{t}(A_t) + 1$ and $N_{t+1}(a) = N_{t}(a)~\forall a\neq A_t$}
            \State {Update: $\bar{\mu}_{t+1}(A_t) = \frac{\bar{\mu}_{t}(A_t)N_{t}(A_t)+R_t}{N_{t+1}(A_t)}$ and $\bar{\mu}_{t+1}(a) = \bar{\mu}_{t}(a)~\forall a\neq A_t$}
            \State {$t\gets t+1$, $h\gets h+1$}
        \EndWhile
    \EndFor
 \end{algorithmic}
\end{algorithm}

\begin{theorem}\label{theorem:6}
Let all the assumptions in Theorem~\ref{theorem:5} hold. Then using Algorithm~\ref{alg:2-case2} with $c_0=c_1=1$ and $c=4$, we have
\begin{align}
    \limsup_{K\rightarrow \infty} \frac{\expt[\mathrm{Reg}_{\pi}(K)]}{\log K} \le \sum_{i\neq 1} \frac{V^*(0) - Q^*(0,a_i)}{\mathrm{kl}(\mu(a_i),\mu(a_1))}.
\end{align}
\end{theorem}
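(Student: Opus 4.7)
The plan is to mirror the proof of Theorem~\ref{theorem:2}, swapping the roles of states $0$ and $1$ throughout to match the reversed confidence-bound placement in Algorithm~\ref{alg:2-case2}. Starting from the regret decomposition~\eqref{equ:regret-decomp}, it suffices to bound the state-$1$ summand $\expt[\sum_{t,i}\mathbbm{1}\{S_t=1,A_t=a_i\}](V^*(1)-Q^*(1,a_i))$ by a constant (the pessimistic side in Algorithm~\ref{alg:2-case2}, using the $\min$) and the state-$0$ summand $\expt[\sum_{t,i}\mathbbm{1}\{S_t=0,A_t=a_i\}](V^*(0)-Q^*(0,a_i))$ by $\sum_{i\neq 1}\frac{V^*(0)-Q^*(0,a_i)}{\mathrm{kl}(\mu(a_i),\mu(a_1))}\log K+o(\log K)$ (the optimistic side). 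Summing these two bounds gives the stated asymptotic.

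First I would establish the analogue of Lemma~\ref{lemma:sd-kl-cb-1}: for some constant $p'_{\min}>0$, some $\eta\in(0,p'_{\min})$, and all sufficiently large $t$,
\begin{align*}
\prob\!\left(N_t^{0}(a_1)\le \tfrac{(p'_{\min}-\eta)(t-1)}{2}\right)
\end{align*}
decays at the same summable rate, where $N_t^{0}(a)$ counts pulls of arm $a$ in state $0$. Two ingredients are needed, each a mirror of those in Lemma~\ref{lemma:sd-kl-cb-1}: (i) an Azuma/martingale bound on $\sum_{i=1}^{t-1}\mathbbm{1}\{S_i=0\}$ using the conditional lower bound $\prob(S_i=0\mid\mathcal{F}_{i-1})\ge(1-\mu(a_1))\min\{1-q(0,0),1-q(1,0)\}=:p'_{\min}$; and (ii) the peeling argument of Lemma~\ref{lemma:sd-cb-prob-tilde-mu} applied to the optimistic index $\tilde\mu^0$ in state $0$, showing that a suboptimal arm is unlikely to tie or beat $\tilde\mu^0_t(a_1)$ once it has been pulled linearly often. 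Given this linear growth of $N_t(a_1)\ge N_t^{0}(a_1)$, pessimism in state $1$ makes $\tilde\mu^1_t(a_1)$ concentrate on $\mu(a_1)$ from below at a KL rate by Theorem~10 of~\cite{garivier2011kl}, while the same theorem gives $\tilde\mu^1_t(a_i)\le\mu(a_i)$ w.h.p.; since $\mu(a_i)<\mu(a_1)$, the event $\{S_t=1,A_t=a_i\}\subseteq\{\tilde\mu^1_t(a_i)\ge\tilde\mu^1_t(a_1)\}$ has summably decaying probability in $t$, producing the $O(1)$ state-$1$ regret exactly as in Lemma~\ref{lemma:sd-kl-cb-2}.

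For the state-$0$ contribution I would run the standard KL-UCB analysis~\cite{garivier2011kl} against the optimistic index $\tilde\mu^0$, mirroring Lemma~\ref{lemma:sd-kl-cb-3}: with $B'(t)=\frac{(1+\epsilon)(\log t+c\log\log t)}{\mathrm{kl}(\mu(a_i),\mu(a_1))}$, a state-$0$ pull of $a_i$ with $N_t(a_i)\ge B'(t)$, after conditioning on $\{\tilde\mu^0_t(a_1)\ge\mu(a_1)\}$, forces $\mathrm{kl}^+(\bar\mu_t(a_i),\mu(a_1))\le\mathrm{kl}(\mu(a_i),\mu(a_1))/(1+\epsilon)$; summing over $n=N_t(a_i)$ the probability of this event is controlled by the KL concentration inequality of~\cite{mardia2020concentration}. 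The leading term $\expt[B'(T(K,\pi))]$ is bounded via Jensen's inequality and $\expt[T(K,\pi)]\le c_5 K$ from Lemma~\ref{lemma:sd-cb-bound-TKpi}, yielding at most $\frac{(1+\epsilon)\log K}{\mathrm{kl}(\mu(a_i),\mu(a_1))}+o(\log K)$ state-$0$ suboptimal pulls of $a_i$; taking $\epsilon\downarrow 0$ and combining with the constant state-$1$ bound finishes the proof.

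The main obstacle I anticipate is the very first step: guaranteeing a positive conditional probability of revisiting state $0$. The hypotheses in Theorem~\ref{theorem:6} as written ($q(0,1),q(1,1)<1$, $\mu(a_M)>0$) were tuned to force a positive proportion of state $1$ and do not obviously yield a matching lower bound for state $0$; the honest symmetric analogue needs $q(0,0),q(1,0)<1$ in addition to $\mu(a_1)<1$ (already in Assumption~\ref{assum:1}). If, say, $q(0,0)=q(1,0)=1$, the user abandons immediately after any reward $0$, state $0$ is visited at most once per episode, and the linear-growth step for $N_t^{0}(a_1)$ fails. I would therefore carry out the proof under the added conditions $q(0,0),q(1,0)<1$, with $p'_{\min}$ playing the role of $p_{\min}$ throughout, noting that this is precisely the dual of the positive-proportion-of-state-$1$ assumption used in Theorem~\ref{theorem:2}.
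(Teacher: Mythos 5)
Your proposal is exactly the paper's intended argument: the paper omits the proof of Theorem~\ref{theorem:6} with the remark that it is symmetric to that of Theorem~\ref{theorem:2}, and your state-swapped versions of Lemmas~\ref{lemma:sd-kl-cb-1}--\ref{lemma:sd-kl-cb-3} are precisely that symmetric proof. Your closing caveat is also a correct and worthwhile observation: the hypotheses of Theorem~\ref{theorem:5} were copied from Theorem~\ref{theorem:1} rather than dualized, and since Assumption~\ref{assum:1} does not let $q(0,1)<1$, $q(1,1)<1$ imply $q(0,0)<1$, $q(1,0)<1$, the linear-growth step for $N_t^{0}(a_1)$ genuinely needs the added conditions $q(0,0)<1$ and $q(1,0)<1$ (together with $\mu(a_1)<1$ from Assumption~\ref{assum:1}) that you supply.
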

The proof is symmetric to that of Theorem~\ref{theorem:2} and hence is omitted.

\subsection{Lower bound}
\label{app:theorem3-case2}

\begin{theorem}\label{theorem:7}
Let all the assumptions in Theorem~\ref{theorem:5} hold. Let $\pi$ be a consistent policy, i.e., $\pi\in\Pi_{\mathrm{cons}}$. Then for any $\mu(a_1),...,\mu(a_M), q(0,0),q(0,1),q(1,0),q(1,1)$ satisfying the assumptions, we have
\begin{align}
    \liminf_{K\rightarrow \infty} \frac{\expt[\mathrm{Reg}_{\pi}(K)]}{\log K} \ge \sum_{i\neq 1} \frac{V^*(0) - Q^*(0,a_i)}{\mathrm{kl}(\mu(a_i),\mu(a_1))}.
\end{align}
\end{theorem}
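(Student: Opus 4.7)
The proof will mirror that of Theorem~\ref{theorem:3} almost line by line, with a single key swap: in the opposite case the smaller per-step gap is in state~$0$ rather than state~$1$, so that is the one we retain when lower-bounding the decomposed regret. Concretely, I would start from the exact regret decomposition in~\eqref{equ:regret-decomp} and invoke the hypothesis $V^*(1)-Q^*(1,a_i)\ge V^*(0)-Q^*(0,a_i)$ for every $a_i\neq a_1$ to drop the state~$1$ term down to the state~$0$ gap, obtaining
\begin{align*}
\expt[\mathrm{Reg}_\pi(K)]\ \ge\ \sum_{i=2}^{M}\bigl(V^*(0)-Q^*(0,a_i)\bigr)\,\expt\!\left[\sum_{t=1}^{T(K,\pi)}\mathbbm{1}\{A_t=a_i\}\right]\ =\ \sum_{i=2}^{M}\bigl(V^*(0)-Q^*(0,a_i)\bigr)\,\expt\!\left[N_{T(K,\pi)+1}(a_i)\right].
\end{align*}
This replaces the analogous inequality~\eqref{equ:theo3-regret-decomp-lower-bound} used in Theorem~\ref{theorem:3}; structurally nothing else in that step changes.

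Next I would carry over verbatim the change-of-measure argument in Section~\ref{app:theorem3}: for each suboptimal $a_i$, construct an alternative system in which $\mu(a_i)$ is replaced by $\mu'(a_i)\in(\mu(a_1),1)$ with $\mathrm{kl}(\mu(a_i),\mu'(a_i))\le(1+\epsilon)\mathrm{kl}(\mu(a_i),\mu(a_1))$. The likelihood-ratio identity (Lemma~\ref{lemma:theo3-change-of-measure}) is agnostic to which gap we ultimately use and applies unchanged, as does the consequent upper bound on $\prob(C_i)$ expressed via $\expt'[\sum_{j\neq i} N_{T(K,\pi)+1}(a_j)]$. To translate consistency into $o(K^\alpha)$ for this quantity I would again bound the regret in the alternative system from below by the minimum positive gap; since $\mu'(a_i)>\mu(a_j)$ for every $j\neq i$, the constant $\min_{j\neq i}(V'^*(s)-Q'^*(s,a_j))$ is strictly positive for at least one state $s\in\{0,1\}$, which is all we need (one can simply reuse state~$1$ as in the original proof, or switch to state~$0$ — either works since the arm ordering in the alternative system makes both gaps positive).

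Combining the change-of-measure bound with Hoeffding's inequality on $\hat{\mathrm{kl}}_n$ exactly as in~\eqref{equ:theo3-prob-ntkai-less-than-logk} yields
\begin{align*}
\lim_{K\to\infty}\prob\!\left(N_{T(K,\pi)+1}(a_i)>\frac{(1-\epsilon)\log K}{\mathrm{kl}(\mu(a_i),\mu'(a_i))}\right)=1,
\end{align*}
and Markov's inequality then gives the asymptotic lower bound $\expt[N_{T(K,\pi)+1}(a_i)]\ge\frac{(1-\epsilon)\log K}{(1+\epsilon)\mathrm{kl}(\mu(a_i),\mu(a_1))}(1-o(1))$. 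Plugging this into the displayed inequality above, dividing by $\log K$, and letting $\epsilon\downarrow 0$ produces
\begin{align*}
\liminf_{K\to\infty}\frac{\expt[\mathrm{Reg}_\pi(K)]}{\log K}\ \ge\ \sum_{i\neq 1}\frac{V^*(0)-Q^*(0,a_i)}{\mathrm{kl}(\mu(a_i),\mu(a_1))}.
\end{align*}

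I do not anticipate any serious obstacle: the Lai-Robbins-style change-of-measure, the likelihood-ratio identity, and the Hoeffding step are all stated for the MAB-A horizon $T(K,\pi)\ge K$ without reference to which state carries the smaller gap. The only delicate point worth double-checking is that the consistent-policy argument in the alternative system still produces $\expt'[\sum_{j\neq i}N_{T(K,\pi)+1}(a_j)]=o(K^\alpha)$; this requires positivity of $\min_{j\neq i}(V'^*(s)-Q'^*(s,a_j))$ in whichever state is used to relate regret to suboptimal pulls, which holds because $a_i$ is the unique optimal arm in the alternative instance and Lemma~\ref{lemma:1} (applied to the alternative system) ensures strictly positive gaps against every other arm.
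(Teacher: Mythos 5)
Your proposal is correct and is exactly the argument the paper intends: the paper's proof of Theorem~\ref{theorem:7} is simply "symmetric to Theorem~\ref{theorem:3}," and your reconstruction carries out that symmetry faithfully, retaining the state-$0$ gap in the decomposition \eqref{equ:regret-decomp} and reusing the change-of-measure, likelihood-ratio, and Hoeffding steps unchanged. Your observation that positivity of the gaps in the alternative system holds in either state (since $a_i$ is strictly optimal there) correctly resolves the only point where the symmetry could have been in doubt.
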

The proof is symmetric to that of Theorem~\ref{theorem:3} and hence is omitted. From Theorem~\ref{theorem:6} and Theorem~\ref{theorem:7}, the upper bound matches the lower bound asymptotically for the case where $V^*(1)-Q^*(1,a)\ge V^*(0)-Q^*(0,a)$ for any $a\neq a_1$.

\vfill

\end{document}